\newtheorem{theorem}{Theorem}
\newtheorem{assumption}{Assumption}
\newtheorem{lemma}{Lemma}
\newtheorem{corollary}{Corollary}
\newtheorem{remark}{Remark}
\newtheorem{proof}{Proof}
\newcommand{\Prob}{\ensuremath{{\mathbb{P}}}}
\newcommand{\E}{\ensuremath{{\mathbb{E}}}}
\newcommand{\cb}{\mathcal{A}_{CB}}
\DeclareMathOperator*{\argmax}{argmax}
\begin{document}
%
\title{Model Selection for Generic Contextual Bandits}
%
%
%

\author{Avishek  Ghosh,
        Abishek Sankararaman
        and Kannan Ramchandran
\thanks{Avishek Ghosh is with the Systems and Control Engg. and the Centre for Machine Intelligence and Data Science (CMInDS) at IIT Bombay. }
\thanks{Abishek Sankararaman is with Amazon AWS AI, Palo Alto, USA}
\thanks{Kannan Ramchandran is with the EECS department, UC Berkeley}

\thanks{A part of this paper was presented at the International Conference on Artificial Intelligence and Statistics (AISTATS), 2021.}
\thanks{Contact avishek\_ghosh$@$iitb.ac.in for further questions. }
}

\markboth{IEEE Transactions on Information Theory}%
{Shell \MakeLowercase{\textit{et al.}}: Bare Demo of IEEEtran.cls for IEEE Journals}

\maketitle

\begin{abstract}
    We consider the problem of model selection for the general stochastic contextual bandits under the realizability assumption. We propose a successive refinement based algorithm called Adaptive Contextual Bandit ({\ttfamily ACB}), that works in phases and successively eliminates model classes that are too simple to fit the given instance. We prove that this algorithm is adaptive, i.e., the regret rate order-wise matches that of any  provable contextual bandit algorithm (ex. \cite{falcon}), that needs the knowledge of the true model class. The price of not knowing the correct model class turns out to be only an additive term contributing to the second order term in the regret bound. This cost  possess the intuitive property that it becomes smaller as the model class becomes easier to identify, and vice-versa.  We also show that a much simpler explore-then-commit (ETC) style algorithm also obtains similar regret bound, despite not knowing the true model class. However, the cost of model selection is higher in ETC as opposed to in {\ttfamily ACB}, as expected. Furthermore, for the special case of linear contextual bandits, we propose specialized algorithms that obtain sharper guarantees compared to the generic setup.
\end{abstract}

\begin{IEEEkeywords}
Model Selection, Contextual Bandits, Linear Bandits
\end{IEEEkeywords}

\section{Introduction}
\label{sec:intro}
The contextual Multi Armed Bandit (MAB) problem is a fundamental online learning setting capturing the exploration-exploitation trade-offs associated with sequential decision making (c.f. \cite{cesa2006prediction,chu2011contextual}). It consists of an agent, who at each time is shown a context by nature, and subsequently makes an irrevocable decision from a set of available decisions (arms) and collects a noisy reward depending on the arm chosen and the observed context.
The agent initially has no knowledge of the rewards of the various actions, and has to learn by repeated interaction over time,
the mapping from the set of context and arms to rewards.
The agent's goal is to minimize regret \textemdash the expected difference between the reward collected by an oracle that knows the expected rewards of all actions under all possible observed contexts and that of the agent. The recent books of \cite{lattimore2020bandit}, \cite{slivkins2019introduction} and the references therein provide comprehensive state-of-art on the general bandit problem.



We study the model selection question in general stochastic contextual bandits (c.f. \cite{agarwal2014taming}, \cite{agarwal2012contextual}, \cite{falcon}, \cite{foster2020beyond}). Practically, model selection in contextual bandits play a key role in applications such as personalized recommendation systems, which we sketch in the sequel in Section \ref{sec:motivating_example}. At a high-level, model selection is useful in deciding the function class (for example neural network architecture) to use to learn the mapping from contexts to rewards.
Smaller function class such as a logistic regression although are easier to train and tune hyper-parameters, may fit poorly to data (high statistical bias). On the other hand, very deep neural networks although in principle can achieve high statistical accuracy, incur overheads such as complex hyper-parameter tuning and challenges of explainability. The model selection problem formalizes this trade-off and defines an optimal choice (see Section \ref{sec:motivating_example}).

Formally, the contextual bandit setting is described as follows. At the beginning of each round $t \in [T]$, nature sequentially chooses a context $x_t \in \mathcal{X}$ and a reward function $r_t : \mathcal{A} \rightarrow [0,1]$ to an agent, who then subsequently takes an action $a_t \in \mathcal{A}$ from a finite set, and obtains a reward $r_t(a_t)$. In the stochastic setting (the focus of the present paper), the set of contexts and reward functions $\{x_t, r_t\}_{t = 1}^T$ are generated in an i.i.d. fashion from a distribution $D(x,r)$ which is apriori unknown to the agent. At each time $t$, conditional on the context $x_t$ and the action taken $a_t$, the observed reward $r_t(a_t)$ is independent of everything else, with mean $\mathbb{E}[r_t (a_t) \vert x_t, a_t] = f^*(x_t, a_t)$, where $f^*:\mathcal{X} \times \mathcal{A} \to [0,1]$, is an apriori unknown function. The agent is given a finite, nested family $(\mathcal{F}_m)_{m =1}^{M}$ of hypothesis classes\footnote{We use the term hypothesis class and model class interchangebly}, where $1\leq m_1 < m_2 \leq M$ implies $\mathcal{F}_{m_1}\subseteq \mathcal{F}_{m_2}$. Further, there exists an optimal class $d^{*} := \inf \{1\leq m \leq M : f^* \in \mathcal{F}_m\}$, i.e., $\mathcal{F}_{d^{*}}$ is the smallest hypothesis class containing the unknown reward function $f^*$. The agent is not aware of $d^{*}$ apriori and needs to estimate it. Model selection guarantees then refers to algorithms for the agent whose regret scales in the complexity of the \emph{smallest hypothesis class ($\mathcal{F}_{d^{*}}$ in the above notation) containing the true model}, even though the algorithm was not aware apriori.

In the case when the agent has the knowledge of $\mathcal{F}_{d^{*}}$ but does not know $f^*$, \cite{falcon} recently obtain computationally efficient algorithm {\ttfamily FALCON}, that achieves regret-rate scaling as $\sqrt{T}$. Using realizibility, i.e., $f^* \in \mathcal{F}_{d^*}$, it was shown in \cite{falcon}, that the stochastic contextual bandit can be reduced to an offline regression problem, which can be efficiently solved for many well known function classes beyond linear (eg. the set of all convex functions \cite{ghosh2019max}). The regret of FALCON was shown to scale proportional to the square root of the complexity of the function class $\mathcal{F}_{d^{*}}$ times $T$, the time horizon. In the case when $\mathcal{F}_{d^{*}}$ is a finite set, the complexity equals the logarithm of the cardinality, while if the class is infinite (either countable or uncountable), complexity is analogously defined (c.f. Section \ref{sec:general_infinite}).

The study in this paper is reliant on two assumptions: {\em (i) Realizability} (Assumption \ref{asm:second_moment}), \textemdash the true model belongs to at-least one of the many nested hypothesis classes, and {\em (ii) Separation} (Assumption \ref{asm:gap}) \textemdash the excess risk under any of the plausible model classes not containing the true model is strictly positive. Realizability, has been a standard assumption in stochastic contextual bandits (\cite{foster_model_selection}, \cite{foster2020beyond}, \cite{falcon}), and is used in our setup to define the optimal model class that needs to be selected. The separation assumption is needed to ensure that not selecting a relizable model class leads to regret scaling linear in time. The separation assumption is analogous to that used in standard multi-armed bandits \cite{lattimore2020bandit}, where the mean reward of the best arm is strictly larger than that of the second best arm.

\textbf{A negative result and the need of separability:} In \cite{marinov2021pareto}, the authors provide a negative answer to the open problem of \cite{foster2020open}, implying that it is not possible to obtain a regret which is order-wise identical to an oracle who knows the true model class $\mathcal{F}_{d^*}$. In particular, \cite{marinov2021pareto} shows that there always exists an instance where the regret in the smallest realizable class is (order-wise) larger than what is achievable with an oracle\footnote{The paper allows the hypothesis classes to be adversarily designed and hence in general requires a lot of exploration for model selection.}. This implies that if we aim to obtain oracle-optimal regret, we should exploit certain structures in the problem. In this paper, we achieve this by the \emph{separability} assumption, which comes naturally in statistical learning problems\footnote{The separability assumption restricts the amount the exploration needed, dependent on the gap or separation.}. This assumption should be thought as a first step towards obtaining (oracle) optimal regret.

In parallel independent work, \cite{krishnamurthy2021optimal} also study model selection problem, under the same assumptions of realizability and separation that we make. They propose {\ttfamily ModIGW} algorithm that is built on {\ttfamily FALCON} and shares similarity to our algorithm {\ttfamily ACB}; both algorithms run in epochs of doubling length, where at the beginning of each epoch, an appropriate model class is selected, and the rest of the epoch consists of playing {\ttfamily FALCON} on the selected model class.  In order to select the appropriate class, the nested structure of model classes along with the fact that the largest class $M$ is realizable by definition is used. The regret guarantees are similar for both {\ttfamily ACB} and {\ttfamily ModIGW}, with {\ttfamily ModIGW} having a better second order term, as they have a stronger assumption on the regression oracle. Remark \ref{rem:strong_asm} highlights that under the same assumption on the regression oracle, the second order term of {\ttfamily ACB} will match (order-wise) that of {\ttfamily ModIGW}. However, our proposed method, {\ttfamily ACB}  can be viewed as a meta-algorithm, that uses any state-of-art contextual bandit algorithm, $\cb$ as a black-box (see Algorithm~\ref{algo:gen}). In particular {\ttfamily ACB} works with any provable contextual bandit algorithm---a feature that {\ttfamily ModIGW} does not posses. Thus any improvement to the contextual bandit problem, automatically yields a model selection result through {\ttfamily ACB}. Moreover, our proof techniques are completely different to that of \cite{krishnamurthy2021optimal}.

Finally in Sections~\ref{sec:dimension_adaptation} and \ref{sec:dim_adap_finite}, we consider the specialized case where $f^*(.)$ assumes a linear form and thus parameterized by $\theta^* \in \mathbb{R}^d$. We note that in this setup the sparsity, $\|\theta^*\|_0$ naturally forms a nested hypothesis class, where $\mathcal{F}_i$ denotes the class of linear functions with sparsity $i$. So, we have $\mathcal{F}_1 \subseteq \mathcal{F}_2 \subseteq \ldots \subseteq \mathcal{F}_d$ and $M = d$. We propose and analyze a novel algorithm, namely Adaptive Linear Bandit-Dimension (\texttt{ALB-Dim}), which may be thought as a variant of the generic \texttt{ACB}. We show that the regret of  \texttt{ALB-Dim} scales linearly in the unknown cardinality of the support of $\theta^*$.  The regret of our algorithm matches that of an oracle who knows the  support of $\theta^*$ (\cite{sparse_bandit1},\cite{sparse_bandit2}), thereby achieving model selection guarantees.

We emphasize that the setting with dimension as a measure of complexity was also studied by \cite{sparse_bandit1}. 
However, our regret bounds are stronger (by a logarithm in $d$ factor). Furthermore, our algorithmic paradigm is more broadly applicable -- for eg. we can handle both the cases with finite as well as infinite arms, and obtain similar model selection regret guarantees that match the regret of an oracle that knows the true dimension.
Model selection with dimension as  complexity measure was also recently studied by \cite{foster_model_selection}, in which the classical contextual bandit (\cite{chu2011contextual}) with a finite number of arms was considered. We clarify here that although our results for the finite arm setting yields a better (optimal) regret scaling with respect to the time horizon $T$ and the support of $\theta^*$ (denoted by $d^*$), our guarantee depends on a problem dependent parameter and thus not uniform over all instances. In contrast, the results of \cite{foster_model_selection}, although sub-optimal in $d^*$ and $T$, is uniform over all problem instances. Closing this gap is an interesting future direction.

We emphasize here that our specialized algorithm, \texttt{ALB-Dim} does not require any (explicit) \emph{separability} assumption across hypothesis classes similar to the generic case. Also, our setup here can handle the case with finite as well as infinite number of arms/actions. Moreover, we show in Sections~\ref{sec:dimension_adaptation} and \ref{sec:dim_adap_finite} that the regret of \texttt{ALB-Dim} is independent (order-wise) of the number of actions, and hence for the finite action setup, it improves the regret of \texttt{ACB} by a factor of $\mathcal{O}(\sqrt{|\mathcal{A}|})$.

\subsection{Our Contributions}
\subsubsection{A Successive Refinement Algorithm for General Contextual Bandit}
We present {\ttfamily Adaptive Contextual Bandit (ACB)}, a meta algorithm that uses {\ttfamily FALCON} as a black box and show that its regret rate  matches (order-wise),  {\ttfamily FALCON}'s (\cite{falcon}), the state of art algorithm in contextual bandits which assumes knowledge of the true model class. 
{\ttfamily ACB} proceeds in epochs, with the first step in every epoch being a statistical test on the samples from the previous epoch to identify the smallest model class, followed by {\ttfamily FALCON} on this identified class in the epoch. We show that, with high probability, eventually, {\ttfamily ACB} identifies the true model class (Lemma \ref{lem:falcon_model}), and thus its regret rate matches that of {\ttfamily FALCON}. 
\paragraph{Cost of model selection:} The second order regret term in {\ttfamily ACB} scales as $O ( \frac{\log(T)}{\Delta^2})$, where $\Delta > 0$, is the gap (formally defined in Assumption \ref{asm:gap}) between the smallest class containing the true model and the highest model class not containing the true model. This term can be interpreted as the \emph{cost of model selection}. Furthermore, as this term is inversely proportional to the gap $\Delta$, we see that an `easier' instance ($\Delta$ being high), incurs lower cost of model selection than an instance with smaller $\Delta$. Furthermore, the model selection cost can be reduced to $\mathcal{O}(\frac{\log \log T}{\Delta^2} )$ if $T$ is known in advance.

\subsubsection{An Explore-then-commit (ETC) algorithm}
We propose and analyze an Explore-then-commit (ETC) algorithm that also achieves model selection, but requires knowledge of $T$ in advance has a larger second order regret compared to {\ttfamily ACB} . 
We show that a ETC algorithm also performs model selection, i.e., has a regret rate scaling as that of {\ttfamily FALCON} on the optimal model class. This is a conceptually simpler algorithm compared to {\ttfamily ACB}. In {\ttfamily ETC}, the model class is estimated once after a few rounds of forced exploration, and the rest of the time-horizon {\ttfamily FALCON} is played on the estimated model class. However, the cost of model selection in ETC is $O(\sqrt{T})$, which is larger than that of {\ttfamily ACB}. Nevertheless, asymptotically, a simple ETC algorithm suffices to obtain model selection.  

\subsubsection{ Improved Regret Guarantee with Linear Structure}
In the special setup of stochastic linear bandits, where the reward is a linear map of the context, we propose and analyze an adaptive algorithm, namely Adaptive Linear Bandits-Dimension (\texttt{ALB-Dim}). First we observe that in this special case, we do not require any \emph{separability} assumption. Moreover, the setup of linear bandits can include both finite as well as infinite number of actions. We show that the regret of \texttt{ALB-Dim} is independent of the number of actions (arms), which is an improvement over the regret of \texttt{ACB}. In particular, for the finite arm setup, \texttt{ALB-Dim} improves the regret of \texttt{ACB} by a factor of $\mathcal{O}(\sqrt{|\mathcal{A}|})$.

\subsection{Motivating example}
\label{sec:motivating_example}

Model selection in contextual bandits plays a key role in applications such as personalized recommendation systems, which we sketch. Consider a system (such as news recommendation) that on each day, recommends one out of $K$ possible outlets to a user. On each day, an event is realized in nature, which can be modeled as the context vector on that day. The true model function $f^{*}$ encodes the user's preference; for example the user prefers one outlet for sports oriented articles, while another for international events. This apriori unknown to the system and needs to learn this through repeated interactions. The multiple nested hypothesis classes correspond to a variety of possible neural network architectures to learn the mapping from contexts (event of the day) to rewards (which can be engagement with the recommended item). In practice, these nested hypothesis classes range from simple logistic regression to multi-layer perceptrons \cite{cheng2016wide}. Complex network architectures although has the potential for increased accuracy, incurs undesirable overheads such as requiring larger offline training to deliver accuracy gains \cite{cheng2016wide}, computational complexity in hyper-parameter tuning \cite{caselles2018word2vec} and challenges of explainability in predictions \cite{mcinerney2018explore,balog2019transparent}. Model selection provides a framework to trade-off between accuracy and the overheads.
\section{Related work}
\label{sec:related_work}

Model selection for MAB have received increased attention in recent times owing to its applicability in a variety of large-scale settings such as recommendation systems and personalization. The special case of linear contextual bandits was studied in \cite{osom}, \cite{ghosh2021problem} and \cite{foster_model_selection}, where both instance dependent and instance independent algorithms achieving model selection were given. In \cite{osom}, \cite{ghosh2021problem}, the standard OFUL algorithm of \cite{oful} is taken as a baseline and model selection procedures are proposed on top of that. In this linear bandit framework, similar to the present paper, \cite{foster_model_selection} and \cite{ghosh2021problem} considered the family of nested hypothesis classes, with each class positing the sparsity of the unknown linear bandit parameter. In this setup, \cite{foster_model_selection} proposed {\ttfamily ModCB} which uses the Exp4-IX algorithm of \cite{neu2015explore} as a base algorithm and achieves regret rate uniformly for all instances, a rate that is sub-optimal compared to the oracle that knows the true sparsity. In contrast, both our paper and \cite{ghosh2021problem} propose an algorithm that achieves regret rate matching that of the oracle that knows the true sparsity. The cost of model selection contributes only a constant that depends on the instance but independent of the time horizon. However, unlike {\ttfamily ModCB}, our regret guarantees are problem dependent  and do not hold uniformly for all instances. 
A parallel line of work on linear bandits has focused on simple LASSO type algorithms under strong stochastic assumptions on the distribution of the contexts that achieve model selection guarantees \cite{sparse_bandit2, bastani2021mostly, oh2020sparsity, ariu2020thresholded, li2021simple}.

A black-box model selection framework for MABs called {\ttfamily Corral} was proposed in \cite{agarwal2017corralling}, where the optimal algorithm for each hypothesis class is treated as an expert and the task of the forecaster is to have low regret with respect to the best expert (best model class). The generality of this framework has rendered it fruitful in a variety of different settings; for example \cite{agarwal2017corralling, arora2021corralling} considered unstructured MABs, which was then extended to both linear and contextual bandits and linear reinforcement learning in a series of works \cite{pacchiano2020regret,aldo-corral} and lately to even reinforcement learning \cite{lee2021online}. However, the price for this versatility is that the regret rates the cost of model selection is multiplicative rather than additive. In particular, for the special case of linear bandits and linear reinforcement learning, the regret scales as $\sqrt{T}$ in time with an additional multiplicative factor of $\sqrt{M}$, while the regret scaling with time is strictly larger than $\sqrt{T}$ in the general contextual bandit. Since this approach treats all the hypothesis classes as bandit arms, and work in a (restricted) partial information setting, they tend to explore a lot, yielding worse regret. On the other hand, we consider all $M$ classes at once (full information setting) and do inference, and hence explore less and obtain lower regret. Recently, the above idea of regret balancing is extended to black box optimization in the context of non-stationary Reinforcement Learning (\cite{wei2021non}) and robust Reinforcement Learning (\cite{wei2022model}).

\begin{table*}
\begin{center}
\begin{tabular}{c | c | c |c| c} 
  & Regret Bound  & Function Class & Arms & Base Algorithm  \\ 
 \hline

  \cite{osom} & $\widetilde{\mathcal{O}}(\sqrt{T})$ & $M=2$, Linear & Finite & \texttt{OFUL} \\

\cite{foster_model_selection} & $\widetilde{\mathcal{O}}(T^{2/3}(Kd_{m^*})^{1/3})$ & Linear & Finite & \texttt{Exp4-IX} \\

 \cite{pacchiano2020model} & $\widetilde{\mathcal{O}}(\sqrt{MT} + d_{m^*} \sqrt{m^*T})$ & Generic & Infinite & \texttt{CORRAL}  \\ 
 
 \cite{krishnamurthy2021optimal} &$\widetilde{\mathcal{O}}(d_{m^*}^2 + \sqrt{Kd_{m^*}T})$ & Generic & Finite & \texttt{FALCON}\\

 This paper & $\widetilde{\mathcal{O}}(d_{M} + \sqrt{Kd_{m^*}T})$  & Generic & Finite & Generic ($\cb$) \\ 
\hline
\end{tabular}
\end{center}
\vspace{2mm}
\caption{Table comparing related work on Model Selection for Contextual Bandits. Here $d_{m^*}$ corresponds to the complexity measure (ex. dimension for linear bandits, log cardinality for finite function classes) of the smallest hypothesis class containing the true regressor $f^*$. Also, $d_M$ referes to the complexity of the largest hypothesis class $\mathcal{F}_M$. We see that our results are competitive with respect to the existing works, and can handle any generic contextual bandit algorithm, $\mathcal{A}_{CB} $ as opposed to \cite{krishnamurthy2021optimal} which uses \texttt{FALCON}.}
\end{table*}

Furthermore, \cite{vidya} study the problem of model selection in RL with function approximation. Similar to the \emph{active-arm elimination} technique employed in standard multi-armed bandit (MAB) problems \cite{eliminate}, the authors eliminate the model classes that are dubbed misspecified, and obtain a regret of $\mathcal{O}(T^{2/3})$. On the other hand, our framework is quite different in the sense that we consider model selection for generic contextual bandits. Moreover, our regret scales as $\mathcal{O}(\sqrt{T})$.

Adaptive algorithms for linear bandits have also been studied in different contexts from ours. The papers of \cite{locatelli,krishnamurthy2} consider problems where the arms have an unknown structure, and propose algorithms adapting to this structure to yield low regret. The paper \cite{easy_data2} proposes an algorithm in the adversarial bandit setup that adapt to an unknown structure in the adversary's loss sequence, to obtain low regret.
The paper of \cite{temporal} consider adaptive algorithms, when the distribution changes over time. 
In the context of online learning with full feedback, there have been several works addressing model selection \cite{online_mod_sel1,online_mod_sel2,online_mod_sel3,online_mod_sel4}.
In the context of statistical learning, model selection has a long line of work (for eg. \cite{vapnik_book}, \cite{massart}, \cite{lugosi_adaptive}, \cite{arlot2011margin}, \cite{cherkassky2002model} \cite{devroye_book}). 
However, the bandit feedback in our setups is much more challenging and a straightforward adaptation of algorithms developed for either statistical learning or full information to the setting with bandit feedback is not feasible.

\section{Problem formulation}
\label{sec:formulation}
\paragraph{Setup:} Let $\mathcal{A}$ be the set of $K$ actions, and let $\mathcal{X} \subseteq \mathbb{R}^d$ be the set of $d$ dimensional contexts. At time $t$, nature picks $(x_t,r_t)$ in an i.i.d fashion from an unknown distribution $D(x,r)$ (see \cite{agarwal2012contextual}), where $x_t \in \mathcal{X}$ and a context dependent $r_t: \mathcal{A}\rightarrow [0,1]$. All expectation operators in this section are with respect to this i.i.d. sequence $(x,r)$. Upon observing the context, the agent takes action $a_t \in \mathcal{A}$, and obtains the reward of $r_t(a_t)$. Note that, the reward $r_t(a_t,x_t)$ depends on the context $x_t$ and the action $a_t$. Furthermore, it is standard (\cite{foster_model_selection,falcon}) to have a realizibility assumption on the conditional expectation of the reward, i.e., there exists a predictor $f^* \in \mathcal{F}$, such that $\mathbb{E}[r_t(a,x)|x_t = x,a] = f^*(x,a)$, for all $x$ and $a$. We suppress the dependence of the reward on the context $x_t$ and denote the reward at time $t$ from action $a \in \mathcal{A}$ as $r_t(a)$.

In the contextual bandit literature (\cite{agarwal2012contextual,falcon}) it is generally assumed that the true regression function $f^*$ is unknown, but  the function class $\mathcal{F}$ where it belongs, is known to the learner. The price of not knowing $f^*$ is characterized by regret, which we define now. To set up notation, for any $f \in \mathcal{F}$, we define a policy induced by the function $f$, $\pi_{f}:\mathcal{X} \to \mathcal{A}$ as $\pi_{f}(x) = \mathrm{argmax}_{a \in \mathcal{A}} f(x,a)$ \footnote{Ties are broken arbitrarily, for example the lexicographic ordering of $\mathcal{A}$}, for all $x \in \mathcal{X}$. We define the regret over $T$ rounds defined as 
\begin{align*}
    R(T) = \sum_{t=1}^T [r_t(\pi_{f^*}(x_t)) - r_t(a_t)]
\end{align*}
Throughout this paper, we obtain high probability bounds on $R(T)$    

\section{Model selection for generic contextual bandits}
\label{sec:general}
In this section, we focus on the main contribution of the paper---a provable model selection guarantee for the (generic) stochastic contextual bandit problem. In contrast to the standard setting, in the  model selection framework, we do not know $\mathcal{F}$. Instead, we are given  a nested class of $M$ function classes, $\mathcal{F}_1 \subset \mathcal{F}_2 \subset \ldots \subset \mathcal{F}_M$. Let the smallest function class where the true regressor, $f^*$ lies be denoted by $\mathcal{F}_{d^*}$, where $d^* \in [M]$. 

From the above discussion, since $f^* \in \mathcal{F}_{d^*}$,  the regret of an \emph{adaptive} contextual bandit algorithm should depend on the function class $\mathcal{F}_{d^*}$. However, we do not know $d^*$, and our goal is to propose adaptive algorithms such that the regret depends on the \emph{actual} problem complexity $\mathcal{F}_{d^*}$. First, let us write the realizability assumption with the nested function classes.

\begin{assumption} [Realizability]
\label{asm:second_moment}
There exists $1 \leq d^{*} \leq M$, and a predictor $f^* \in \mathcal{F}_{d^*}$, such that $\mathbb{E}[r_t(a)|x_t = x] = f^*(x,a)$, for all $x \in \mathcal{X}$ and $a \in \mathcal{A}$. 
\end{assumption}

Furthermore, in order to identify the correct model class within the given $M$ hypothesis classes, we also require the following separability condition. The motivation of the separability comes from the following negative result.

Very recently, \cite{marinov2021pareto}  provides a negative answer to the open problem of \cite{foster2020open}, showing  that it is not possible to obtain a regret which is order-wise identical to an oracle who knows the true model class $\mathcal{F}_{d^*}$. Specifically, \cite{marinov2021pareto} shows that there always exists an instance where the regret in the smallest realizable class is (order-wise) larger than of an oracle.

\begin{assumption}[Separability]
\label{asm:gap}
There exists a $\Delta > 0$, such that,
\begin{align*}
   \inf_{f \in \mathcal{F}_{d^* -1}} \mathbb{E}_x \left[ \inf_{a \in \mathcal{A}}  [{f}(x,a) - f^*(x,a)]^2 \right] \geq \Delta.
\end{align*}
The parameter $\Delta > 0$ is the minimum separation across the function classes. The expectation above is with respect to the randomness in contexts.
\end{assumption}



Note that the identical separability condition is also witnessed in \cite{krishnamurthy2021optimal}\footnote{This is equivalent to \\ $\inf_{f \in \mathcal{F}_{d^* -1}} \mathbb{E}_x \left[ \inf_{q:\mathcal{X}\rightarrow \Delta(\mathcal{A})} \inf_{a \sim q(x)}  [{f}(x,a) - f^*(x,a)]^2 \right] \geq \Delta$}. The above condition implies that there is a (non-zero) gap, between the regressor functions belonging to the realizable classes and non-realizable classes. Since, we have nested structure, $\mathcal{F}_1 \subset \mathcal{F}_2 \subset \ldots \subset \mathcal{F}_M$, condition on the biggest non-realizable class, $\mathcal{F}_{d^*-1}$ is sufficient. We emphasize that separability condition is quite standard in statistics, specially in the area of clustering (\cite{lu2016statistical}), analysis of Expectation Maximization (EM) algorithm (\cite{em_1,em_2}, understanding the behavior of Alternating Minimization (AM) algorithms (\cite{mixture-many,ghosh2019max}).

Having said that, we believe a weaker separability assumption that requires $f \in \mathcal{F}_{d^* -1} $ and $f^*$ to be separated \emph{near} the optimal action $\pi_{f^*}$ only (local separability) should be sufficient---which is often the case for (offline) statistical problems. However, with finite ($K$) number of actions, it is not immediately clear how to weaken this, and model selection without (or with weak) separability is kept as an interesting future work. We also emphasize that although we require the gap assumption for theoretical analysis, our algorithm (described next) does not require any knowledge of $\Delta$, and adapts to the gap of the problem.

\subsection{Warm Up: A simple Explore-Then-Commit (ETC) algorithm for model selection}
\label{sec:etc}


In this section, we provide a simple model selection algorithm based on Explore-Then-Commit (ETC) novel model selection algorithm that use successive refinements over epochs. We use a simple Explore-Then-Commit (ETC) algorithm for selecting the correct function class, and then commit to it during the exploitation phase. After a round of exploration, we do a (one-time) threshold based testing to estimate the function class, and after that, exploit the estimated function class for the rest of the iterations. Here, we consider any (generic) contextual bandit algorithm $\mathcal{A}_{CB}$ along with the function class $\mathcal{F}$ containing the true regressor $f^*$. The details are provided in Algorithm~\ref{algo:main_algo}.

As an example of $\cb$, we use a provable contextual bandit algorithm, namely FALCON (stands for FAst Least-squares-regression-oracle CONtextual bandits) of \cite{falcon}, the details are provided in Algorithm~\ref{algo:falcon}. 

Note that in this section, for simplicity, we continue to consider consider the setup where the function classes $\mathcal{F}_1,\ldots,\mathcal{F}_M$ are finite. However, in Section~\ref{sec:general_infinite}, we remove this, and work in infinite function classes.

We show that this simple strategy finds the optimal function class $\mathcal{F}_{d^*}$ with high probability. We now explain the exploration and exploitation phases of this algorithm.

\begin{algorithm}[t!]
  \caption{\texttt{ETC} for model selection for contextual bandits}
  \begin{algorithmic}[1]
 \STATE  \textbf{Input:} Function classes $\mathcal{F}_1 \subset \mathcal{F}_2 \subset \ldots \subset \mathcal{F}_M$, time horizon $T$, confidence parameter $\delta$
 \STATE \textbf{Explore:}
 \FOR {$t =1,2,\ldots, \lceil \sqrt{T} \rceil $}
 \STATE Observe context reward pair $(x_t,r_t)$
 \STATE Select action $a_t$ uniformly at random from $\mathcal{A}$, independent of $x_t$
 \STATE Observe reward $r_t(a_t)$
 \ENDFOR
 \STATE Compute regression estimator $\hat{f}_j = \mathrm{argmin}_{f \in \mathcal{F}_j} \frac{1}{\sqrt{T}} \sum_{t=1}^{\lceil\sqrt{T}\rceil}[f(x_t,a_t) - r_t(a_t)]^2$ (via offline regression oracle) for all $j \in [M]$
 \STATE \textbf{Model Selection test:}
 \STATE Obtain another set of $\lceil\sqrt{T}\rceil$ fresh samples of $(x_t,r_t,a_t)$ via pure exploration (similar to line 4-6 )
 \STATE Construct the test statistic $S_j = \frac{1}{\lceil\sqrt{T}\rceil}\sum_{t=1}^{\lceil\sqrt{T}\rceil} (\hat{f}_j (x_t,a_t) - r_t(a_t))^2 $ for all $j \in [M]$
 \STATE \textbf{Thresholding}: Find minimum index $\ell \in [M]$ such that $S_j \leq S_M + \frac{\sqrt{\log T}}{T^{1/4}}$ and obtain  $\hat{f}_\ell \in \mathcal{F}_{\ell}$
 \STATE \textbf{Commit:}
 \FOR {$t = 2\lceil\sqrt{T}\rceil+1,\ldots, T $} 
 \STATE Observe context $x_t \in \mathcal{X}$ and reward function $r_t$
 \STATE Run $\cb(\mathcal{F}_{\ell})$
 \STATE Obtain $a_t$ and observe reward $r_t(a_t)$.
 \ENDFOR
  \end{algorithmic}
  \label{algo:main_algo}
\end{algorithm}



For the first $2\sqrt{T}$ time epochs, we do the exploration (i.e., sample  randomly). Precisely, the context-reward pair $(x_t,r_t)$ is being sampled by nature in an i.i.d fashion, and the action the agent takes is chosen uniformly at random from the action set $\mathcal{A}$. In particular, the action is chosen independent of the context $x_t$. Hence, this is a pure exploration strategy. 
 
Based on the samples of the first  $\sqrt{T}$ rounds, we estimate the regression function $\{\hat{f}_j\}_{j=1}^M$ for all the (hypothesis) function classes $\mathcal{F}_1,\ldots,\mathcal{F}_M$ via offline regression oracle (see \cite{falcon} for details) and obtain $\hat{f}_j = \mathrm{argmin}_{f \in \mathcal{F}_j}(\sum_{t=1}^{\sqrt{T}} f(x_t,a_t) -r_t(a_t))^2$ for all $j \in [M]$.

 To remove dependence issues, we use the remaining $\sqrt{T}$ samples obtained form the sampling phase. Here we actually compute the following test statistic for all hypothesis classes, namely
$$S_j = \frac{1}{\sqrt{T}}\sum_{t=1}^{\sqrt{T}} (\hat{f}_j (x_t,a_t) - r_t(a_t))^2 $$
for all $j \in [M]$. We then perform a thresholding on $\{S_j\}_{j=1}^M$. We pick the smallest index $j$ such that $S_j \leq S_M + \frac{\sqrt{\log T}}{T^{1/4}}$. We then commit to this function class for the rest $T-2\sqrt{T}$ time steps. Hence, in Algorithm~\ref{algo:main_algo}, we perform one step thresholding and commit to it. We show that simple scheme obtains the correct model with high probability.
\subsection{Regret Guarantee of ETC}
\begin{lemma}[Model Selection for \texttt{ETC}]
 \label{lem:etc}
 Suppose the time horizon satisfies
 \begin{align*}
     T \gtrsim (\log T) \,\, \max \left( \log \left( \sqrt{T}{|\mathcal{F}_M|} \right), \Delta^{-4}, \log(1/\delta) \right).
 \end{align*} 
 Then with probability at least $1-4M\delta$, line $11$ in Algorithm~\ref{algo:main_algo} identifies the correct model class $\mathcal{F}_{d^*}$.
 \end{lemma}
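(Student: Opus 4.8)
The plan is to mirror the proof of Lemma~\ref{lem:falcon_model}, but with a single testing phase at horizon $\sqrt{T}$ rather than a sequence of doubling epochs, so the entire argument reduces to one application of the upper and lower bounds on the test statistics $S_j$. Throughout, all regression estimates $\hat{f}_j$ are computed on the first batch of $\lceil\sqrt{T}\rceil$ exploration samples, and the statistics $S_j$ are computed on a fresh independent batch of $\lceil\sqrt{T}\rceil$ samples; this independence is exactly why we can treat $\hat{f}_j$ as fixed when concentrating $S_j$ around its conditional expectation. The key quantity is $\E[(\hat{f}_j(x,a)-r(a))^2 \mid \hat{f}_j]$, which by the standard bias-variance decomposition equals $\sigma^2 + \E[(\hat{f}_j(x,a)-f^*(x,a))^2 \mid \hat{f}_j]$, where $\sigma^2$ is the (common) conditional noise variance. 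Subtracting the analogous expression for the largest class $M$, the noise term $\sigma^2$ cancels, so $S_j - S_M$ concentrates around the difference of excess risks $\E[(\hat{f}_j - f^*)^2] - \E[(\hat{f}_M - f^*)^2]$.

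First I would establish the upper bound for realizable classes. For any $j \geq d^*$ we have $f^* \in \mathcal{F}_j$, so the offline regression oracle guarantee (the finite-class least-squares rate, as invoked for FALCON) gives $\E[(\hat{f}_j(x,a)-f^*(x,a))^2 \mid \hat{f}_j] \lesssim \frac{\log(|\mathcal{F}_M|/\delta)}{\sqrt{T}}$ on the exploration batch of size $\sqrt{T}$. Crucially, the expectation here is over the pure-exploration distribution, where actions are uniform over $\mathcal{A}$ independent of the context, so the per-action excess risk is controlled by $K$ times this average. A Bernstein/subgaussian concentration of $S_j$ around its conditional mean over the second fresh batch of $\sqrt{T}$ samples contributes an additional $\sqrt{\log(|\mathcal{F}_M|/\delta)/\sqrt{T}}$ fluctuation. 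Combining these, for every realizable $j$ one gets $S_j - S_M \leq \sqrt{\frac{\log T}{\sqrt{T}}}$, provided $T$ is large enough that the oracle and concentration terms are dominated by the threshold; this is precisely the role of the condition $T \gtrsim (\log T)\max(\log(\sqrt{T}|\mathcal{F}_M|),\log(1/\delta))$. Hence no realizable class is ever rejected by the threshold test in Line~11.

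Next I would establish the lower bound for non-realizable classes, i.e.\ show $S_j - S_M > \sqrt{\frac{\log T}{\sqrt{T}}}$ for all $j < d^*$, so that the minimum accepted index is exactly $d^*$. Here Assumption~\ref{asm:gap} supplies the gap: for any $f \in \mathcal{F}_{d^*-1}$ and any action $a$, $\E[(f(x,a)-f^*(x,a))^2] \geq \Delta$, and since the classes are nested this lower bound applies to every $j \leq d^*-1$ and in particular to the estimate $\hat{f}_j$. Thus the conditional expectation of $S_j$ exceeds that of $S_M$ by at least $\Delta$ (after the uniform-exploration averaging over actions, up to the $K$ factor), while the fluctuations of both $S_j$ and $S_M$ are of order $\sqrt{\log(|\mathcal{F}_M|/\delta)/\sqrt{T}}$. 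For the gap $\Delta$ to dominate both this fluctuation and the threshold $\sqrt{\log T/\sqrt{T}}$, it suffices that $\Delta^2 \gtrsim \log T/\sqrt{T}$, equivalently $\sqrt{T} \gtrsim \log T/\Delta^2$, i.e.\ $T \gtrsim (\log T)\,\Delta^{-4}$ --- which is exactly the $\Delta^{-4}$ term in the hypothesis. A union bound over the $M$ classes in each of the two bounds, absorbing the oracle failure and concentration failure events, yields the stated failure probability $4M\delta$ (two batches, two one-sided tail events per class).

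The main obstacle is the lower bound for non-realizable classes, specifically translating Assumption~\ref{asm:gap} --- which is a statement about \emph{every} $f\in\mathcal{F}_{d^*-1}$ --- into a bound on the \emph{random, data-dependent} estimate $\hat{f}_j$. Since the assumption is a uniform (infimum) lower bound over the class and $\hat{f}_j$ lies in $\mathcal{F}_j\subseteq\mathcal{F}_{d^*-1}$, the gap applies to $\hat{f}_j$ deterministically, which sidesteps any need to control where the oracle lands; the remaining care is bookkeeping the exploration-distribution expectations and ensuring the $\Delta^{-4}$ scaling emerges cleanly once the threshold $\sqrt{\log T/\sqrt{T}}$ is compared against $\Delta$. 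The slightly weaker $\Delta^{-4}$ dependence here (versus $\Delta^{-2}$ in \texttt{ACB}) is the price of committing after a single $\sqrt{T}$-length exploration phase instead of refining over epochs, and the proof should make transparent that this arises solely from equating the $\sqrt{T}$ sample budget of the one-shot test with the $\Delta^{-2}$ sample requirement for separation.
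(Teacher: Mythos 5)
Your proposal is correct and follows essentially the same route as the paper's proof: independent exploration batches, Hoeffding concentration of $S_j$, the bias--variance decomposition $\mathbb{E}S_j = \sigma^2 + \text{excess risk}$, the oracle rate $\log(\sqrt{T}|\mathcal{F}_M|)/\sqrt{T}$ for realizable classes, the deterministic application of Assumption~\ref{asm:gap} to $\hat{f}_j \in \mathcal{F}_j \subseteq \mathcal{F}_{d^*-1}$ for non-realizable classes, comparison against the threshold $S_M + \sqrt{\log T/\sqrt{T}}$, and a union bound giving $1-4M\delta$. One small note: the parenthetical ``up to the $K$ factor'' in your gap argument is unnecessary --- since Assumption~\ref{asm:gap} takes an infimum over actions, averaging over the uniform exploration distribution preserves the lower bound $\Delta$ with no loss.
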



We now analyze the regret performance of Algorithm~\ref{algo:main_algo}. The regret $R(T)$ is comprised of $2$ stages; (a) exploration and (b) commit (exploitation). We have the following result.

\begin{theorem}
\label{thm:etc}
Suppose Assumptions~\ref{asm:second_moment} and \ref{asm:gap} hold. Then with probability at least $1-4M\delta$, running Algorithm~\ref{algo:main_algo} for $T$ iterations yield
\begin{align*}
    R(T) \leq  C \,\, \sqrt{T} + R_{\mathcal{A}_{CB}(\mathcal{F}_{d^*})} (T-2\sqrt{T}),
\end{align*}
where $R_{\mathcal{A}_{CB}(\mathcal{F}_{d^*})} (T-2\sqrt{T})$ is the regret of the $\cb$ with function class $\mathcal{F}_{d^*}$. In particular, if $\cb =$ FALCON, with probability at least $1-4M\delta-\delta$, we obtain
\begin{align*}
      R(T) &\leq C \sqrt{T} + \mathcal{O} \left(\sqrt{KT \log(|\mathcal{F}_{d^*}|T/\delta)}\right).
\end{align*}
\end{theorem}
\begin{remark}[Cost of Model Selection]
As seen in Theorem~\ref{thm:etc}, the cost of model selection is $\mathcal{O}(\sqrt{T})$. In the next section, we propose a successive refinement based algorithm to cut down this cost to $\mathcal{O}(\log T)$.
\end{remark}

\begin{remark} [Matches Oracle]
Let us consider the special case when $\cb$= FALCON. In the regret expression, the first term scales with $\mathcal{O}(\sqrt{T})$. The second expression in the regret is $\Tilde{\mathcal{O}}(\sqrt{KT \log(|\mathcal{F}_{d^*}|T/\delta)}$, with high probability). So, we observe that (order-wise) the cost of model selection is no-worse than the regret of FALCON even with the knowledge of the smallest function class containing $f^*$, i.e, $\mathcal{F}_{d^*}$.
\end{remark}

\subsection{Beyond ETC: Algorithm---Adaptive Contextual Bandits  (\texttt{ACB})}
\label{sec:adaptive_model_sel}
In the previous section, we saw a simple ETC type algorithm for model selection. In this section, we propose and analyze a novel model selection algorithm that use successive refinements over epochs to cut down the cost of model selection.  Similar to the previous section, we consider any contextual bandit algorithm $\mathcal{A}_{CB}$ along with the function class $\mathcal{F}$ containing the true regressor $f^*$. We take $\cb(\mathcal{F})$ as a baseline, and add a model selection phase at the beginning of each epoch. In other words, over multiple epochs, we successively refine our estimates of the \emph{proper} model class where the true regressor function $f^*$ lies. The details are provided in Algorithm~\ref{algo:gen}. Note that \texttt{ACB} does not require any knowledge of the separation $\Delta$. 

As an example of $\cb$, we use a provable contextual bandit algorithm, namely FALCON (stands for FAst Least-squares-regression-oracle CONtextual bandits) of \cite{falcon}, the details are provided in Algorithm~\ref{algo:falcon}. 

Note that in this section, for simplicity, we continue to consider consider the setup where the function classes $\mathcal{F}_1,\ldots,\mathcal{F}_M$ are finite. However, in Section~\ref{sec:general_infinite}, we remove this, and work in infinite function classes.

\paragraph{The Base Algorithm:}
We work with a generic contextual bandit algorithm, $\cb$, which, upon observing context $x_t$, outputs an action $a_t$ for the agent along with the reward $r_t(a_t)$. As a special case, we take the example of a contextual bandit algorithm, FALCON (see Algorithm~\ref{algo:falcon}),  which is recently proposed and analyzed in \cite{falcon}. In particular, FALCON gives provable guarantees for contextual bandits beyond linear structure. FALCON is an epoch based algorithm, and depends only on an \emph{offline regression oracle}, which outputs an estimate $\hat{f}$ of the regression function $f^*$ at the beginning of each epoch. FALCON then uses a randomization scheme, that depends on the inverse gap with respect to the estimate of the best action.  Suppose that the true regressor $f^* \in \mathcal{F}$, and the realizibility condition (Assumption~\ref{asm:second_moment}) holds. With a proper choice of learning rate, with probability $1-\delta$, FALCON yields a regret of $
    R(T) \leq \mathcal{O}(\sqrt{K T \log(|\mathcal{F}|T/\delta)})$. Although the above result makes sense only for the finite $\mathcal{F}$, an extension to the infinite $\mathcal{F}$ is possible and was addressed in the same paper (see \cite{falcon}).


\paragraph{Our Approach} We use successive refinement based model selection strategy along with the base algorithm $\cb$. The details of our algorithm, namely Adaptive Contextual Bandits (\texttt{ACB}) are given in Algorithm~\ref{algo:gen}. We break the time horizon into several epochs with doubling epoch length. Let $\tau_0,\tau_1,\ldots$ be epoch instances, with $\tau_0 = 0$, and $\tau_m = 2^m$. Before the beginning of the $m$-th epoch, using all the data of the $m-1$-th epoch, we add a model selection module, as shown in Algorithm~\ref{algo:gen} (lines 4-8).

\begin{algorithm}[t!]
  \caption{Adaptive Cotextual Bandits (\texttt{ACB})}
  \begin{algorithmic}[1]
 \STATE  \textbf{Input:} epochs $0=\tau_0<\tau_1<\tau_2<\ldots$, confidence parameter $\delta$, Function classes $\mathcal{F}_1 \subset \mathcal{F}_2 \subset \ldots \subset \mathcal{F}_M$
 \FOR {epoch $m =1,2,\ldots, $}
 \STATE $\delta_m = \delta/2^m$
 \FOR {function classes $j = 1,2,\ldots,M$}
 \STATE Compute $\hat{f}_j^m = \mathrm{argmin}_{f \in \mathcal{F}_j}\sum_{t = \tau_{m-2}+1}^{\tau_{m-1}/2} (f(x_t,a_t) - r_t(a_t))^2 $ via offline regression oracle
 \STATE Construct  $S_j^m = \frac{1}{2^{m-2}}\sum_{t=\tau_{m-1}/2 +1}^{\tau_{m-1}} (\hat{f}_j^m (x_t,a_t) - r_t(a_t))^2 $
 \ENDFOR
 \STATE \textbf{Model Selection:} Find the minimum index $j \in [M]$ such that $S_j^m \leq S_M^m + \frac{\sqrt{m}}{2^{m/2}}$. Let this index be $\ell$ and the class be  $\mathcal{F}_{\ell}^m$
 \FOR {round $t=\tau_{m-1}+1,\ldots,\tau_m$}
\STATE Observe context $x_t \in \mathcal{X}$ and reward function $r_t$
 \STATE Run $\cb(\mathcal{F}_{\ell}^m)$
 \STATE Obtain $a_t$ and observe reward $r_t(a_t)$.
 \ENDFOR
 \ENDFOR
  \end{algorithmic}
  \label{algo:gen}
\end{algorithm}

\begin{algorithm}[t!]
  \caption{Special Case: $\cb(\mathcal{F}_{\ell}^m)=$ FALCON$(\mathcal{F}_{\ell}^m)$ at time $t$}
  \begin{algorithmic}[1]
 \STATE  \textbf{Input:} epochs $0=\tau_0<\tau_1<\tau_2<\ldots$, epoch index $m$, Hypothesis class: $\mathcal{F}_{\ell}^m$, confidence parameter $\delta_m$
 \STATE Set learning rate $\rho_m = \frac{1}{30} \sqrt{K(\tau_{m-1}-\tau_{m-2})/\log(|\mathcal{F}_\ell^m| (\tau_{m-1}-\tau_{m-2})m/\delta_m)}$
 \STATE Observe context $x_t \in \mathcal{X}$
 \STATE Compute $\hat{f}^m_\ell(a)$ for all action $a \in \mathcal{A}$, set $\hat{a}_t=\mathrm{argmax}_{a\in \mathcal{A}}\hat{f}^m_\ell(a)$
 \STATE Define $p_t(a) = \frac{1}{K+\rho_m(\hat{f}^m_\ell(x_t,\hat{a}_t) - \hat{f}^m_\ell(x_t,a)}\,\, \forall a \neq \hat{a}_t$, \hspace{2mm}  $p_t(\hat{a}_t) = 1-\sum_{a\neq \hat{a}_t}p_t(a)$.
 \STATE Sample $a_t \sim p_t(.)$ and observe reward $r_t(a_t)$.
  \end{algorithmic}
  \label{algo:falcon}
\end{algorithm}


Note that, in \texttt{ACB},  we feed the samples of the $m-1$-th epoch to the offline regression oracle. Moreover, we split the samples in 2 equal halves. We use the first half to compute the regression estimate
\vspace{-2mm}
\begin{align*}
    \hat{f}_j^m = \mathrm{argmin}_{f \in \mathcal{F}_j}\sum_{t = \tau_{m-2}+1}^{\tau_{m-1}/2} (f(x_t,a_t) - r_t(a_t))^2
\end{align*}
via offline regression oracle. \texttt{ACB} then use the rest of the samples to construct the test statistics given by,
\vspace{-2mm}
\begin{align*}
S_j^m = \frac{1}{2^{m-2}}\sum_{t=\tau_{m-1}/2 +1}^{\tau_{m-1}} (\hat{f}_j^m (x_t,a_t) - r_t(a_t))^2  
\vspace{-2mm}
\end{align*}
for all $j \in [M]$. We do not use the same set of samples to remove any dependence issues with $\hat{f}_j^m$ and the samples $\{x_t,a_t,r_t(a_t)\}_{t = \tau_{m-1}/2 +1}^{\tau_{m-1}}$.

\texttt{ACB} then compares the test statistics $\{S_j^{m}\}_{m=1}^M$ in Line $8$ of Algorithm \ref{algo:gen} to pick the model class. Intuitively, we expect $S_j^m$ to be small for all hypothesis classes that contain $f^*_{d^*}$. Otherwise, thanks to the separation condition in Assumption \ref{asm:gap}, we expect $S_j^m$ to be large. Realizability, i.e., Assumption \ref{asm:second_moment} ensures that $\mathcal{F}_M$, the largest hypothesis class by definition contains the true model $f^{*}$. Thus $S_M^{m}$ serves as an estimate of how small the excess risk of any realizable class must be. We set the threshold to be a small addition to $S_M^m$. The additional term of $\sqrt{\frac{m}{2^m}}$ in Line $8$ of Algorithm \ref{algo:gen} is chosen so that it is not too small, but nevertheless goes to $0$, as $m\to \infty$. In particular, we choose the threshold in {\ttfamily ACB} such that it is large enough to ensure all realizable classes have excess risk smaller than this threshold, but also not so large that it exceeds the excess risk of the non-realizable classes.

Let $\mathcal{F}_\ell^m$ be function class selected by this procedure in epoch $m$. \texttt{ACB} now uses the base algorithm, $\cb(\mathcal{F}_\ell^m)$ to obtain an action $a_t$ and corresponding reward $r_t(a_t)$. For instance, in the case of FALCON (as seen in Algorithm~\ref{algo:falcon}), the learner uses \emph{inverse gap} randomization with properly chosen learning rate (see \cite{falcon,foster20a,sen2021top}) to select the action $a_t$. In particular, with $\Hat{f}^m_\ell$ as the regressor function, let $\hat{a}_t=\mathrm{argmax}_{a\in \mathcal{A}}\hat{f}^m_\ell(a)$ be the greedy action. The \emph{inverse gap} randomization $p_t(.)$ is defined in the following way:
\begin{align*}
    & p_t(a) = \frac{1}{K+\rho_m(\hat{f}^m_\ell(x_t,\hat{a}_t) - \hat{f}^m_\ell(x_t,a)}\,\, \forall a \neq \hat{a}_t, \\ & p_t(\hat{a}_t) = 1-\sum_{a\neq \hat{a}_t}p_t(a),
\end{align*}
where $K$ is the number of arms (actions) and $\rho_m$ is the learning rate. Finally, we sample action $a_t \sim p_t(.)$ and henceforth observe reward $r_t(a_t)$.

\subsection{Analysis of \texttt{ACB}}
We now analyze the performance of the model selection procedure of Algorithm~\ref{algo:gen}. We have the doubling epochs, i.e., $\tau_m = 2^m$. Without loss of generality, we simply assume $\tau_1 =2$. Also, assume that we are at the beginning of epoch $m$, and hence we have the samples from epoch $m-1$. So, we have total of $2^{m-1}$ samples, out of which, we use $2^{m-2}$ to construct the regression functions and the rest $2^{m-2}$ to obtain the testing function $S_j^m$. Furthermore, we want the model selection procedure to succeed with probability at least $1-\delta/2^m$, since the we want a guarantee that holds for all $m$, and a simple application of the union bound yields that. We first show that \texttt{ACB} identifies the correct function class with high probability after a few epochs. We have the following Lemma.

\begin{lemma} [Model Selection of \texttt{ACB}]
\label{lem:falcon_model}
Suppose Assumptions~\ref{asm:second_moment} and \ref{asm:gap} holds and we run Algorithm~\ref{algo:gen}. Then, in all phases $m$ such that
\vspace{-2mm}
{\color{black}\begin{align*}
     2^m &\gtrsim  \max\{\frac{\log T}{\Delta^2}, \log (|\mathcal{F}_{M}|), \log (1/\delta)\} 
\end{align*}}
Algorithm~\ref{algo:gen} identifies the correct model class $\mathcal{F}_{d^*}$ in Line $8$, with probability exceeding $1-2M\delta$.
\end{lemma}
\textit{Proof sketch.} In order select the correct function class, we first obtain upper bounds on the test statistics $S^{(m)}_j$ for model classes that includes the true regressor $f^*_{d^*}$. We accomplish this by first carefully bounding the expectation of $S^{(m)}_j$ and then using concentration. We then obtain a lower bound on $S^{(m)}_j$ for model classes not containing $f^*_{d^*}$ via leveraging Assumption~\ref{asm:gap} (separability) along with Assumption~\ref{asm:second_moment}. Combining the above two bounds yields the desired result.

\paragraph{Regret Guarantee} With the above lemma, we obtain the following regret bound for Algorithm~\ref{algo:gen}. 
\begin{theorem}
\label{thm:falcon}
Suppose the conditions of Lemma~\ref{lem:falcon_model} hold. Then with probability at least $1-2M\delta$, running Algorithm~\ref{algo:gen} for $T$ iterations yield
\begin{align*}
      R(T) &\leq C \max\{\frac{\log T}{\Delta^2}, \log (|\mathcal{F}_{M}|), \log (1/\delta)\}  \\
      &+ R_{\cb(\mathcal{F}_{d^*})}(T)
      \end{align*}
where $R_{\cb(\mathcal{F}_{d^*})}(T)$ is the regret of $\cb$ with hypothesis class $\mathcal{F}_{d^*}$. In particular, if $\cb =$ FALCON, with probability at least $1-2M\delta-\delta$, we obtain
\begin{align*}
      R(T) &\leq C \,\,  \max\{\frac{\log T}{\Delta^2}, \log (|\mathcal{F}_{M}|), \log (1/\delta)\} \\
      &+ \mathcal{O} \left(\sqrt{KT \log(|\mathcal{F}_{d^*}|T/\delta)}\right).
      \end{align*}
\end{theorem}

\begin{remark}[Matches Oracle]

The first term of the regret scales weakly with $T$ (as $\mathcal{O}(\frac\log T/\Delta^2)$). Hence, provided  $\Delta^2 \geq \frac{\log T}{\sqrt{KT \log(|\mathcal{F}_{d^*}|T/\delta)}}$, the regret scaling (with respect to $T$) is dominated by $R_{\cb(\mathcal{F}_{d^*})}(T)$ (in case of FALCON, this term is $\Tilde{\mathcal{O}}(\sqrt{KT \log(|\mathcal{F}_{d^*}|T/\delta)}$, with high probability). However note that this is the regret of an oracle knowing the true function class $\mathcal{F}_{d^*}$. 
\end{remark}
\begin{remark}[Model selection Cost]
The first term can be interpreted as the cost of model selection and it depends on the gap $\Delta$. Hence, the model selection procedure only adds a $\mathcal{O}(\frac{\log T}{\Delta^2})$ term (this term is minor in the regime $\Delta^2 \geq \frac{\log T}{\sqrt{KT \log(|\mathcal{F}_{d^*}|T/\delta)}}$) term compared to the $\sqrt{T}$ scaling).
\end{remark}
\begin{remark} [Adaptive]
Algorithm \ref{algo:falcon} does not require knowledge of $\Delta$. Nevertheless, the regret guarantee adapts to the problem hardness, i.e., if $\Delta$ is small, the regret is larger and vice-versa.
\end{remark}
\begin{remark}[Improvement from $\mathcal{O}(\log T)$ to $\mathcal{O}(\log \log T)$ in the model selection cost]
We emphasize that the $\mathcal{O}(\log T)$ factor in the cost of model selection term can be improved, if we have the knowledge of $T$ apriori. In that setting, instead of substituting $\delta_m = \delta/2^m$, we substitute $\delta_m = \delta/\log T$ for all $m$. Since the doubling epoch ensures a total of $\mathcal{O}(\log T)$ epochs, this choice of $\delta_m$ yields
{\color{black}\begin{align*}
        R(T) &\leq  C \max\{\frac{1}{\Delta^2}, \log (|\mathcal{F}_{M}|), \log (\log T/\delta)\} \\
        &+ R_{\cb(\mathcal{F}_{d^*})}(T),
\end{align*}}
with probability at least $1-2M\delta$.
\end{remark}
\begin{remark} [Stronger Oracle in \cite{krishnamurthy2021optimal}]
\label{rem:strong_asm}
The cost of model selection in Theorem \ref{thm:falcon}, depends on the complexity of the largest model class $\mathcal{F}_M$. Under a stronger assumption on the regression oracle (for example Assumption $2$ of \cite{krishnamurthy2021optimal}), the cost of model selection can only depend on $\mathcal{F}_{d^*}$ as opposed to $\mathcal{F}_M$. Based on samples obtained from pure exploration for a realizable function class, we use \cite{agarwal2012contextual} to bound the excess risk (i.e., $\mathbb{E}(\hat{f}-f^*_{d^*})^2$) as a function of $\log(|\mathcal{F}_i|)$. In particular, since $\mathcal{F}_M$ (the largest class) is always realizable, we obtain an upper bound dependent on $\log(\mathcal{F}_M)$. On the other hand, Assumption 2 of \cite{krishnamurthy2021optimal} leads to an upper bound dependent on $\log(|\mathcal{F}_{d^*}|)$ only (since they take a minimum over all realizable classes).
\end{remark}


\section{Generic contextual bandits with infinite function classes} 
\label{sec:general_infinite}
The results in Section~\ref{sec:general} hold for finite function classes, since the regret bound depends on the cardinality of the function class. However, it can be extended to the infinite function classes (see \cite{falcon} for details). Exploiting the notion of the complexity of infinite function classes, this reduction is done.

Like before, we consider a nested sequence of $M$ function classes $\mathcal{F}_1 \subset \ldots \subset \mathcal{F}_M$. The reward is sampled from an unknown function $f^*_{d^*}$ lying in the (smallest) function class indexed by $d^* \in [M]$, which is unknown. Given the function classes, our job is to find the function class $\mathcal{F}_{d^*}$, and subsequently exploit the class to obtain sub-linear regret. Let us first rewrite the separability assumption.

We assume that the function classes $\mathcal{F}_1 \subset \ldots \subset \mathcal{F}_M$ are compact. This, in conjunction with the extreme value theorem, it is ensured that the following minimizers exist: for $j < d^*$, we define
\begin{align*}
    \Bar{f}_j = \mathrm{arginf}_{f \in \mathcal{F}_j} \,\, \mathbb{E}_{x,a}[f(x,a)-f^*_{d^*}(x,a)]^2
\end{align*}
for all pairs $(x,a)$. For $j \geq d^*$, we know that this minimizer is indeed $f^*_{d^*}$. This comes directly from the realizibility assumption. Note that we require the existence of the minimizer (regression function) in order to use it for selecting actions in the contextual bandit framework (see \cite{falcon})

Having defined the minimizers, we rewrite the separability assumption as following:
\begin{assumption}
\label{asm:sep_inf}
For any $\Bar{f}_j$, where $j<d^*$, we have
\begin{align*}
    \mathbb{E}_{x}\left [\inf_{a \in \mathcal{A}} (\Bar{f}_j(x,a) - f^*_{d^*}(x,a))^2\right] \geq \Delta.
\end{align*}
\end{assumption}

Similar to \cite{falcon}, here, we are not worried about the explicit form of the regression functions $\Bar{f}_j$. Rather, we assume the following performance guarantee of the offline regressor. For $j \geq d^*$ (meaning, the class containing the true regressor  $f^*_{d^*}$), we have the following assumption.
\begin{assumption}
\label{asm:pred_error_inf}
Given $n$ i.i.d data samples $(x_1,a_1,r_1(a_1)),(x_2,a_2,r_2(a_2)),\ldots,(x_n,a_n,r_n(a_n))$, the offline regression oracle returns a function $\hat{f}_j$, such that for $\delta>0$, with probability at least $1-\delta$,
\begin{align*}
    \mathbb{E}_{x,a}[\hat{f}_j(x,a)-f^*_{d^*}(x,a)]^2 \leq \xi_{\mathcal{F}_j,\delta}(n)
\end{align*}
\end{assumption}
This assumption is taken from \cite[Assumption 2]{falcon}. As discussed in the above-mentioned paper, the quantity $\xi_{(.,.)}(n)$ is a decreasing function of $n$, e.g., $\xi_{(.,.)}(n) = \Tilde{\mathcal{O}}(1/n)$. As an instance, consider the class of all linear regressors in $\mathbb{R}^d$. In that case, $\xi_{(.,.)}(n) \sim \Tilde{\mathcal{O}}(d/n)$. For function classes with finite VC dimension (or related quantities like VC-sub graph or fat-shattering dimension; pseudo dimension in general, denoted by $\Tilde{d}$), we have $\xi_{(.,.)}(n) \sim \Tilde{\mathcal{O}}(\Tilde{d}/n)$. 

In this section, we consider:
\begin{enumerate}
    \item The \texttt{ETC} algorithm (Algorithm~\ref{algo:main_algo}) with $\cb=$ FALCON
    \item The adaptive contextual bandit (\texttt{ACB}) algorithm (Algorithm~\ref{algo:falcon})  with $\cb=$ FALCON
\end{enumerate}

The model-selection algorithm remains the same. For Option I, we explore for the first $2\sqrt{T}$ rounds. The first $\sqrt{T}$ rounds are used to collect samples $(x_t,r_t,a_t)$ via pure exploration. Feeding this samples to the offline regression oracle, and focusing on the individual function classes $\{\mathcal{F}_j\}_{j=1}^M$ separately, we obtain $(\hat{f}_j, \xi_{\mathcal{F}_j,\delta}(\sqrt{T}))$ for all $j \in [M]$. Thereafter, we perform another round of pure exploration, and obtain $\sqrt{T}$ fresh samples. Like in the finite case, we construct statistic $S_j$ for all $j \in [M]$.

For Option II, we collect all the samples from the previous epoch of the FALCON algorithm, split the samples, to obtain the regression estimate $\hat{f}^m_j$ and similarly construct test statistic $S^m_j$ for all $j \in [M]$. In this setting, for the $m$-th epoch, with model chosen as $\mathcal{F}_\ell$, we set the learning rate (similar to the FALCON$+$ algorithm of \cite{falcon}) as
\begin{align*}
    \rho_m = (1/30) \sqrt{K/\xi_{\mathcal{F}_\ell^m,\delta/2m^2}(\tau_{m-1}-\tau_{m-2})}.
\end{align*}

Similar to Algorithms~\ref{algo:main_algo} and \ref{algo:falcon}, we choose the correct model based on a threshold on the test statistic $S_j^m$ (for Option II, it is $S_j$) and the threshold in phase $m$ is $\gamma^m:= S^m_M\sqrt{\frac{m}{2^m}}$ ( $\gamma:= S_M+\sqrt{\frac{\log T}{\sqrt{T}}}$ for Option II). We show that for all sufficiently large phase numbers, for all $j \geq d^*$, $S_j^m \leq \gamma^m$, and for all $j < d^*$, $S_j^m > \gamma^m$ with high probability.  Once this is shown, the model selection procedure follows exactly as Algorithm~\ref{algo:main_algo}, i.e., we find the smallest index $\ell \in [M]$, for which $S_{\ell} \leq \gamma^m$. With high probability, we show that $\ell=d^*$.

\paragraph{Regret Guarantee}
We first show the guarantees for Option I, and  Option II.

\begin{theorem}
\label{thm:etc_inf}
(\texttt{ETC} with $\cb$ = FALCON) Suppose Assumptions~\ref{asm:second_moment}, \ref{asm:sep_inf} and \ref{asm:pred_error_inf} hold. Then, provided,
\begin{align*}
    T \gtrsim (\log T) \max \left(  T^{1/4} \xi_{\mathcal{F}_M, (1/T^{1/4})} , \Delta^{-4}, \log(1/\delta) \right),
\end{align*}
with probability at least $1-4M\delta$, line $11$ in Algorithm~\ref{algo:main_algo} identifies the correct model class $\mathcal{F}_{d^*}$. Furthermore, running Algorithm~\ref{algo:main_algo} for $T$ iterations yields, with probability at least $1-2M\delta -\delta$, the regret
\begin{align*}
    R(T) \leq C \sqrt{T} + \mathcal{O} \left (\sqrt{K \xi_{\mathcal{F}_{d^*},\delta/2T}(T)} \,\, T \right).
\end{align*}
\end{theorem}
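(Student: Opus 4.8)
The plan is to follow the two-stage structure of the finite-class argument (Lemma~\ref{lem:etc} followed by Theorem~\ref{thm:etc}), with the single change that the finite estimation error $\log|\mathcal{F}_j|/\sqrt{T}$ is everywhere replaced by the oracle bound $\xi_{\mathcal{F}_j,\delta}(\sqrt{T})$ supplied by Assumption~\ref{asm:pred_error_inf}. First I would establish the model-selection claim (that line $11$ returns $\ell=d^*$), and then convert it into the regret bound.

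For the model-selection step I would condition on fresh test samples: since $\hat f_j$ is built from the first batch of $\sqrt T$ exploratory rounds while $S_j$ is evaluated on an independent second batch, conditionally on $\hat f_j$ the statistic $S_j$ is an average of $\sqrt T$ i.i.d.\ terms bounded in $[0,1]$. Writing $r_t(a_t)=f^*_{d^*}(x_t,a_t)+\eta_t$ with $\mathbb{E}[\eta_t\mid x_t,a_t]=0$, the cross term vanishes and
\[
\mathbb{E}[S_j \mid \hat f_j] = \sigma^2 + \mathbb{E}_{x,a}\big[(\hat f_j(x,a)-f^*_{d^*}(x,a))^2\big], \qquad \sigma^2 := \mathbb{E}[\eta^2].
\]
For every realizable class $j\ge d^*$, Assumption~\ref{asm:pred_error_inf} and monotonicity of the nested classes give the excess-risk bound $\xi_{\mathcal{F}_j,\delta}(\sqrt T)\le \xi_{\mathcal{F}_M,\delta}(\sqrt T)$; for every mis-specified class $j<d^*$, the key observation is that $\hat f_j\in\mathcal{F}_j$ together with the definition of $\bar f_j$ forces $\mathbb{E}_{x,a}[(\hat f_j-f^*_{d^*})^2]\ge \mathbb{E}_{x,a}[(\bar f_j-f^*_{d^*})^2]\ge \Delta$ via Assumption~\ref{asm:sep_inf}, so that \emph{no} estimation guarantee is needed for the mis-specified classes. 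A Hoeffding bound on the $\sqrt T$ test samples, union-bounded over the $M$ classes (and over the oracle events), then pins each $S_j$ to within $c\sqrt{\log(M/\delta)/\sqrt T}$ of its conditional mean, which is the source of the $4M\delta$ failure probability.

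Combining these estimates with the matching bounds on the reference statistic $S_M$ (itself realizable) yields, on the good event, $S_j-S_M \le \xi_{\mathcal{F}_M,\delta}(\sqrt T)+c'\sqrt{\log(M/\delta)/\sqrt T}$ for $j\ge d^*$ and $S_j-S_M \ge \Delta-\xi_{\mathcal{F}_M,\delta}(\sqrt T)-c'\sqrt{\log(M/\delta)/\sqrt T}$ for $j<d^*$. The heart of the argument is to show that the fixed threshold $\gamma=\sqrt{\log T/\sqrt T}$ is \emph{sandwiched} between these two quantities: it must exceed the realizable gap (so every $j\ge d^*$ passes) yet stay strictly below the non-realizable gap (so every $j<d^*$ is rejected). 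The first requirement reduces to the oracle bias $\xi_{\mathcal{F}_M,\delta}(\sqrt T)$ and the concentration radius being dominated by $\gamma$, which the $T^{1/4}\xi_{\mathcal{F}_M,(1/T^{1/4})}$ and $\log(1/\delta)$ entries of the hypothesis on $T$ are calibrated to ensure; the second reduces to $\Delta\gtrsim\sqrt{\log T/\sqrt T}$, i.e.\ $T\gtrsim(\log T)\Delta^{-4}$, which is the $\Delta^{-4}$ entry. Since line $11$ selects the smallest index clearing $\gamma$ and exactly the classes $j\ge d^*$ clear it, the selected index is precisely $d^*$. I expect verifying this sandwich --- tracking the three competing scales $\xi_{\mathcal{F}_M}$, the concentration radius, and $\Delta$ against the single fixed $\gamma$ and reading off the stated $T$-condition --- to be the main obstacle, as it is the step that fixes all the constants.

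Finally, for the regret I would decompose $R(T)$ over the two phases. The $2\lceil\sqrt T\rceil$ exploration rounds each contribute at most unit regret, an $\mathcal{O}(\sqrt T)$ term that is itself dominated by the $\sqrt{K\,\xi_{\mathcal{F}_{d^*},\delta/2T}(T)}\,T$ FALCON term. On the good event $\ell=d^*$, the commit phase runs FALCON's inverse-gap randomization on the correctly identified class $\mathcal{F}_{d^*}$, so the infinite-class FALCON$+$ guarantee of \cite{falcon} (invoked with confidence $\delta/2T$ to union-bound over the commit rounds) delivers regret $\mathcal{O}(\sqrt{K\,\xi_{\mathcal{F}_{d^*},\delta/2T}(T)}\,T)$. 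Adding the model-selection cost term and taking a union bound over the oracle, concentration, and FALCON events yields the claimed bound with probability at least $1-2M\delta-\delta$.
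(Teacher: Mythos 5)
Your proposal is correct and follows essentially the same route as the paper's own proof: it replays the finite-class ETC argument (Lemma~\ref{lem:etc} and Theorem~\ref{thm:etc}) with the estimation term $\log(\sqrt{T}|\mathcal{F}_j|)/\sqrt{T}$ replaced by the oracle rate $\xi_{\mathcal{F}_j,\cdot}(\sqrt{T})$ from Assumption~\ref{asm:pred_error_inf}, uses the separation $\Delta$ for the non-realizable classes, concentrates $S_j$ via Hoeffding on the fresh test batch, sandwiches the threshold $S_M + \sqrt{\log T/\sqrt{T}}$ between the realizable and non-realizable statistics under the stated condition on $T$, and then invokes the infinite-class FALCON guarantee on the committed class. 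The only differences are presentational (you spell out the cross-term cancellation, the monotonicity $\xi_{\mathcal{F}_j} \le \xi_{\mathcal{F}_M}$, and the union-bound accounting that the paper leaves implicit by deferring to the finite-class calculation).
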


\begin{theorem}
\label{thm:falcon_inf}
(\texttt{ACB} with $\cb$ = FALCON) Suppose Assumptions~\ref{asm:second_moment}, \ref{asm:sep_inf} and \ref{asm:pred_error_inf} hold. Then, with probability at least $1-2M\delta-\delta$, running Algorithm~\ref{algo:falcon} for $T$ iterations yield
\begin{align*}
    R(T) \leq C & (\log T)  \max\{ \max_m  2^{m/2} \, \xi_{\mathcal{F}_M,1/2^{m/2}}(2^{m-2}), \\
    & \log (1/\delta), \Delta^{-2} \} + \mathcal{O}\left (\sqrt{K \xi_{\mathcal{F}_{d^*},\delta/2T}(T)} \, T \right).
\end{align*}
\end{theorem}

\begin{remark}[Matching Oracle regret]
In both the settings, we match the regret of an oracle knowing the correct function class (see \cite{falcon}). We pay a small additive price for model selection.
\end{remark}
\begin{remark}
The proof of these theorems parallels exactly similar to the finite function class setting. The only difference is that instead of upper-bounding the prediction error using technical tools from \cite{agarwal2012contextual}, we use the the definition of $\xi(.)$ to accomplish this. 
\end{remark}

\section{Model Selection in Stochastic Linear Bandits}
\label{sec:linear}
In the previous sections, we consider the problem of model selection for general contextual bandits. Moreover, we assumed that the function classes are separable, and leveraging that we have several provable model selection algorithms. In this section, we consider a special case of model selection for stochastic linear bandits. We observe that with this linear structure, assumption like separability across function classes is not required.

In the linear bandit settings, we consider $2$ different setup---(a) continuum (infinite) arm setting and (b) finite arm setting. We first start with the continuum arm setup.

\subsection{Model Selection for Continuum (infinite) Arm Stochastic Linear bandits}
\label{sec:dimension_adaptation}

\subsubsection{Setup}
We consider the standard stochastic linear bandit model in $d$ dimensions (see \cite{oful}), with the dimension as a measure of complexity. The setup comprises of a continuum collection of arms denoted by the set $\mathcal{A}:= \{x \in \mathbb{R}^d:\|x\| \leq 1\}$\footnote{Our algorithm can be applied to any compact set $\mathcal{A} \subset \mathbb{R}^d$, including the finite set as shown in Appendix \ref{appendix-comparision}.}
Thus, the mean reward from any arm $x \in \mathcal{A}$ is $ \langle x,\theta^* \rangle $, where $\|\theta^*\| \leq 1$. 
We assume that $\theta^*$ is $d^* \leq d$ sparse, where $d^*$ is apriori unknown to the algorithm.
For each time $t \in [T]$, if an algorithm chooses an arm  $x_t \in \mathcal{A}$, the observed reward is denoted by $y_t:= \langle x_t,\theta^* \rangle + \eta_t$, where $\{\eta_t\}_{t\geq 1}$ is an i.i.d. sequence of $0$ mean sub-gaussian random variables with known parameter $\sigma^2$.

We consider a sequence of $d$ nested hypothesis classes, where each hypothesis class $i \leq d$, models $\theta^*$ as a $i$ sparse vector. The goal of the forecaster is to minimize the regret, namely 
\begin{align*}
    R(T) = \sum_{t=1}^T \left[ \langle x^*_t - x_t,\theta^* \rangle\right],
\end{align*}
where at any time $t$, $x_t$ is the action recommended by an algorithm and $x^*_t = \mathrm{argmax}_{x \in \mathcal{A}} \langle x,\theta^*\rangle$. The regret $R(T)$ measures the loss in reward of the forecaster with that of an oracle that knows $\theta^*$ and thus can compute $x^*_t$ at each time.

Note that, we assume that the \emph{true complexity} (dimension) $d^* \leq d$ is initially {unknown}, and we seek algorithms that adapts to this unknown true dimension, rather than assume that the problem is $d$ dimensional. This is in contrast to both the standard linear bandit setup \cite{chu2011contextual,oful}, where there is no notion of complexity, as well as the line of work on sparse linear bandits \cite{sparse_bandit2}, where the \emph{the true sparsity (dimension)} is known, but only the set of which of the $d^*$ out of the $d$ coordinates is non-zero is unknown.

\subsubsection{Algorithm: Adaptive Linear Bandits (Dimension) [{\ttfamily ALB-Dim}] }


We present our adaptive scheme in Algorithm~\ref{algo:main_algo_dimensions_unknown}. The algorithm is parametrized by $T_0 \in \mathbb{N}$, which is given in Equation (\ref{eqn:T_0_defn}) in the sequel and slack $\delta \in (0,1)$.
 {\ttfamily ALB-Dim} proceeds in phases numbered $0,1,\cdots$ which are non-decreasing with time. 
At the beginning of each phase, {\ttfamily ALB-Dim} makes an estimate of the set of non-zero coordinates of $\theta^*$, which is kept fixed throughout the phase.
Concretely, each phase $i$ is divided into two blocks:
\begin{enumerate}
    \item a regret minimization block lasting $36^i T_0$ time slots\footnote{We have not optimized over the constants like $36$ and $6$. Please refer to Remark~\ref{rem:cons} on this.},
    \item followed by a random exploration phase lasting $6^i \lceil\sqrt{T_0}\rceil$ time slots.
\end{enumerate}
Thus, each phase $i$ lasts for a total of $36^iT_0 + 6^i \lceil \sqrt{T_0} \rceil$ time slots.
At the beginning of each phase $i \geq 0$, $\mathcal{D}_i \subseteq [d]$ denotes the set of `active coordinates', namely the estimate of the non-zero coordinates of $\theta^*$.
By notation, $\mathcal{D}_0 = [d]$ and at the start of phase $0$, the algorithm assumes that $\theta^*$ is $d$ sparse.
Subsequently, in the regret minimization block of phase $i$, a fresh instance of OFUL \cite{oful} is spawned, with the dimensions restricted only to the set $\mathcal{D}_i$ and probability parameter $\delta_i:= \frac{\delta}{2^i}$. In the random exploration phase, at each time, one of the possible arms from the set $\mathcal{A}$ is played chosen uniformly and independently at random. 
At the end of each phase $i\geq 0$,  {\ttfamily ALB-Dim} forms an estimate $\widehat{\theta}_{i+1}$ of $\theta^*$, by solving a least squares problem using all the random exploration samples collected till the end of phase $i$.
The active coordinate set $\mathcal{D}_{i+1}$, is then the coordinates of $\widehat{\theta}_{i+1}$ with  magnitude exceeding $2^{-(i+1)}$.
The pseudo-code is provided in Algorithm \ref{algo:main_algo_dimensions_unknown}, where, $\forall i \geq 0$, $S_i$ in lines $15$ and $16$ is the total number of random-exploration samples in all phases upto and including $i$.

\begin{algorithm}[t!]
  \caption{Adaptive Linear Bandit (Dimension)}
  \begin{algorithmic}[1]
 \STATE  \textbf{Input:} Initial Phase length $T_0$ and slack $\delta > 0$.
 \STATE $\widehat{\theta}_0 = \mathbf{1}$, $T_{-1}=0$
 \FOR {Each epoch $i \in \{0,1,2,\cdots\}$}
 \STATE $T_i = 36^{i} T_0$, $\quad$  $\varepsilon_i \gets \frac{1}{2^{i}}$, $\quad$  $\delta_i \gets \frac{\delta}{2^{i}}$
 \STATE $\mathcal{D}_i := \{i : |\widehat{\theta}_i| \geq \frac{\varepsilon_i}{2} \}$
 \FOR {Times $t \in \{T_{i-1}+1,\cdots,T_i\}$}
 \STATE Play $\text{OFUL}(1,\delta_i)$ only restricted to coordinates in $\mathcal{D}_i$. Here $\delta_i$ is the probability slack parameter and $1$ represents $\|\theta^*\| \leq 1$.
 \ENDFOR
 \FOR {Times $t \in \{T_i+1,\cdots,T_i + 6^i\sqrt{T_0}\}$}
 \STATE Play an arm from the action set $\mathcal{A}$ chosen uniformly and independently at random.
 \ENDFOR
 \STATE $\boldsymbol{\alpha}_i \in \real^{S_i \times d}$ with each row being  the arm played during all random explorations in the past.
 \STATE $\boldsymbol{y}_i \in \real^{S_i}$  with $i$-th entry being the observed reward at the $i$-th random exploration in the past
 \STATE $\widehat{\theta}_{i+1} \gets (\boldsymbol{\alpha}_i^T\boldsymbol{\alpha}_i)^{-1}\boldsymbol{\alpha}_i\mathbf{y}_i$, is a $d$ dimensional vector
 \ENDFOR
  \end{algorithmic}
  \label{algo:main_algo_dimensions_unknown}
\end{algorithm}

\subsubsection{Regret Guarantee}

We first specify, how to set the input parameter $T_0$, as function of $\delta$.
For any $N \geq d$, denote by $A_N$ to be the $N \times d$ random matrix with each row being a vector sampled uniformly and independently from the unit sphere in $d$ dimensions.
Denote by $M_N := \frac{1}{N} \mathbb{E}[A_N^TA_N]$, and by $\lambda_{\max}^{(N)},\lambda_{\min}^{(N)}$, to be the largest and smallest eigenvalues of $M_N$. Observe that as $M_N$ is positive semi-definite ($0 \leq \lambda_{\min}^{(N)}\leq\lambda_{\max}^{(N)}$) and almost-surely full rank, i.e., $\mathbb{P}[\lambda_{\min}^{(N)} > 0] = 1$.
The constant $T_0$ is the smallest integer such that
\begin{align}
    \sqrt{T_0} \geq & \max \bigg ( \frac{32\sigma^2}{(\lambda_{min}^{(\lceil \sqrt{T_0} \rceil)})^2}\ln (2d/\delta), \notag \\
    &\frac{4}{3} \frac{(6\lambda_{max}^{(\lceil \sqrt{T_0}\rceil)}+\lambda_{min}^{(\lceil \sqrt{T_0}\rceil)})(d+\lambda_{max}^{(\lceil \sqrt{T_0}\rceil)})}{(\lambda_{min}^{(\lceil \sqrt{T_0}\rceil)})^2}\ln ( 2d/\delta) \bigg)
    \label{eqn:T_0_defn}
\end{align}

\begin{remark}
$T_0$ in Equation (\ref{eqn:T_0_defn}) is chosen such that, at the end of phase $0$, 
$
    \mathbb{P}[||\widehat{\theta}_0 - \theta^*||_{\infty} \geq 1/2 ] \leq \delta
$ \cite{linear_reg_guarantees}.
A formal statement of the Remark is provided in Lemma \ref{lem:reg_bounds} in Appendix \ref{sec:proofs}.
\end{remark}

\begin{theorem}
Suppose Algorithm \ref{algo:main_algo_dimensions_unknown} is run with input parameters $\delta \in (0,1)$, and $T_0$ as given in Equation (\ref{eqn:T_0_defn}), then with probability at-least $1-\delta$, the regret after a total of $T$ arm-pulls satisfies
\begin{align*}
R_T &\leq C \frac{T_0}{{\gamma^{5.18}}}T_0
+ C_1 \sqrt{T} \bigg [ 1 +   \sqrt{d^*\ln ( 1 + \frac{T}{d^*} )} \\
& \times (1 + \sigma\sqrt{ \ln ( \frac{T}{T_0\delta} ) + d^* \ln ( 1+\frac{T}{d^*})})\bigg ].
\end{align*}
The parameter $\gamma > 0$ is the minimum  magnitude of the non-zero coordinate of $\theta^*$, i.e., $\gamma = \min \{|\theta^*_i| : \theta^*_i \neq 0 \}$ and $d^*$ the sparsity of $\theta^*$, i.e., $d^* = |\{i:\theta^*_i\neq 0\}|$.
\label{thm:adaptive_dimension}
\end{theorem}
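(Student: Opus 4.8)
The plan is to decompose the time horizon according to whether the active coordinate set $\mathcal{D}_i$ has locked onto the true support $S^* := \{j : \theta^*_j \neq 0\}$ (so that $|S^*| = d^*$). I would define a critical phase $i^*$ as the first phase after which, with high probability, $\mathcal{D}_i = S^*$ for every subsequent phase, and then write $R_T = R_{\mathrm{pre}} + R_{\mathrm{post}}$, where $R_{\mathrm{pre}}$ collects the regret of all phases $i < i^*$ and $R_{\mathrm{post}}$ the regret from phase $i^*$ onwards. Because $\|x_t\| \le 1$ and $\|\theta^*\| \le 1$, the per-step regret is at most $2$, so $R_{\mathrm{pre}}$ is controlled purely by the cumulative length of the early phases, whereas $R_{\mathrm{post}}$ is controlled by OFUL running on the correct $d^*$-dimensional subspace.

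First I would carry out the \textbf{support-recovery} analysis. At the end of phase $i$ the estimator $\widehat{\theta}_{i+1}$ is an ordinary least squares fit on the $S_i \asymp 6^i \sqrt{T_0}$ random-exploration samples, whose rows are drawn i.i.d.\ uniformly from the unit sphere. Using the eigenvalue control of $M_N$ together with a sub-Gaussian tail bound (this is the content underlying Lemma~\ref{lem:reg_bounds}), I would show that with probability at least $1-\delta_i$,
\begin{align*}
  \|\widehat{\theta}_{i+1} - \theta^*\|_\infty \;\lesssim\; \frac{\sigma}{\lambda_{\min}^{(S_i)}}\sqrt{\frac{\ln(d/\delta_i)}{S_i}}.
\end{align*}
The thresholding rule $\mathcal{D}_{i+1} = \{j : |\widehat{\theta}_{i+1}(j)| \ge \varepsilon_{i+1}/2\}$ with $\varepsilon_{i+1} = 2^{-(i+1)}$ recovers $S^*$ exactly once two conditions hold: (i) $\varepsilon_{i+1} \le \gamma$, requiring $i \gtrsim \log_2(1/\gamma)$, and (ii) the $\ell_\infty$ error above falls below $\varepsilon_{i+1}/2$. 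Squaring (ii) and inserting $S_i \asymp 6^i\sqrt{T_0}$, $\varepsilon_{i+1} = 2^{-(i+1)}$ reduces it to $\tfrac{\sigma^2 \ln(d\,2^i/\delta)}{(\lambda_{\min})^2 \sqrt{T_0}} \lesssim (3/2)^i$; since the right side grows geometrically while the left grows only linearly in $i$, this holds for all sufficiently large $i$. Hence $i^* \asymp \log_2(1/\gamma)$ up to lower-order terms, and a union bound over phases with $\sum_i \delta_i = \delta \sum_i 2^{-i} \le 2\delta$ gives exact support recovery for all $i \ge i^*$ simultaneously with probability at least $1-\delta$.

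Next I would bound the two regret pieces. For $R_{\mathrm{pre}}$, the cumulative length of phases $0,\dots,i^*-1$ is $\sum_{i<i^*}(36^i T_0 + 6^i\sqrt{T_0}) \le C\, 36^{i^*} T_0$, and with $i^* \asymp \log_2(1/\gamma)$ this yields $36^{i^*} \asymp \gamma^{-\log_2 36}$ where $\log_2 36 = 2 + 2\log_2 3 \approx 5.17$, which is the origin of the $\gamma^{-5.18}$ factor in the first term. For $R_{\mathrm{post}}$, on each phase $i \ge i^*$ OFUL runs on the $d^*$ active coordinates over $T_i = 36^i T_0$ rounds, so by the guarantee of \cite{oful} its regret on that phase is $\lesssim \sqrt{d^* T_i \ln(1+T_i/d^*)}\,(1 + \sigma\sqrt{\ln(1/\delta_i) + d^*\ln(1+T_i/d^*)})$; the geometric sum over phases is dominated by the final phase and produces the $\sqrt{T}$ scaling with the stated logarithmic factors (noting $\ln(1/\delta_{i_{\max}}) \asymp \ln(T/(T_0\delta))$). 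The random-exploration rounds contribute at most $\sum_i 6^i\sqrt{T_0}$, and since $T \asymp 36^{i_{\max}}T_0$ we have $6^{i_{\max}}\sqrt{T_0} \asymp \sqrt{T}$, so this too is absorbed into the $C_1\sqrt{T}$ term; adding $R_{\mathrm{pre}}$ and $R_{\mathrm{post}}$ gives the claimed bound.

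The hardest part is the support-recovery timing in the first step: one must track simultaneously the shrinking threshold $\varepsilon_{i+1}$, the growing sample count $S_i$, and the slowly growing confidence penalty $\ln(d/\delta_i)$, and verify that the geometric gain $(3/2)^i$ genuinely dominates, so that recovery kicks in at a phase governed only by $\log_2(1/\gamma)$ — this is exactly what makes the leading constant emerge as $\gamma^{-\log_2 36}$. A secondary subtlety is that the OFUL instance in phase $i$ is spawned afresh on $\mathcal{D}_i$, so its regret guarantee is valid only on the event $\mathcal{D}_i = S^*$; the union bound must therefore be arranged so that support recovery and all per-phase OFUL bounds hold on a single high-probability event.
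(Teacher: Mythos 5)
Your proposal is correct and follows essentially the same route as the paper's own proof: the same decomposition into pre-recovery regret (bounded by cumulative phase length), post-recovery OFUL regret on the true support, and random-exploration regret; the same least-squares $\ell_\infty$ support-recovery argument with a union bound over phases (this is exactly the content of Lemmas \ref{lem:reg_bounds} and \ref{lem:linear_reg_guarantee} and Corollary \ref{cor:good_coordinate_behaviour}, including your geometric-versus-linear comparison, which mirrors the paper's check that $6^i$ exploration samples dominate the $i4^i$ requirement); and the same critical phase $i(\gamma) \asymp \log_2(1/\gamma)$ producing the $\gamma^{-\log_2 36} \approx \gamma^{-5.18}$ leading constant and the geometric sums yielding the $\sqrt{T}$ term.
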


In order to parse this result, we give the following corollary.
\begin{corollary}
Suppose Algorithm \ref{algo:main_algo_dimensions_unknown} is run with input parameters $\delta \in (0,1)$, and $T_0 = \widetilde{O} \left(d^2\ln^2 \left( \frac{1}{\delta} \right) \right)$ given in Equation (\ref{eqn:T_0_defn}), then with probability at-least $1-\delta$, the regret after $T$ times satisfies
\begin{align*}
        R_T &\leq O ( \frac{d^2}{{\gamma^{5.18}}} \ln^2 ( d/\delta) )  + \widetilde{O} ( d^* \sqrt{ T}).
\end{align*}
\label{cor:dimension_adaptation}
\end{corollary}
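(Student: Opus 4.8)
The plan is to exploit the phase structure of \texttt{ALB-Dim} and to prove that, after a number of phases $i^{*}$ that depends only on the gap $\gamma$ and not on the horizon $T$, the active coordinate set $\mathcal{D}_i$ equals the true support $\mathrm{supp}(\theta^{*})$, so that every subsequent regret-minimization block runs OFUL in the correct dimension $d^{*}$. Writing $I$ for the index of the last phase started before time $T$, the geometric phase lengths give $T\asymp 36^{I}T_0$ and hence $6^{I}\asymp\sqrt{T/T_0}$. First I would set up a single good event on which (a) each least-squares estimate $\widehat{\theta}_{i+1}$ is accurate and (b) each freshly-spawned OFUL instance meets its high-probability regret guarantee; using the per-phase slack $\delta_i=\delta/2^{i}$ and a union bound over phases, the total failure probability is $\sum_i\order(\delta_i)\le\delta$ after absorbing constants.

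The crux is support recovery. Invoking Lemma~\ref{lem:reg_bounds} --- the concentration of the least-squares estimator built from the $S_i\asymp 6^{i}\sqrt{T_0}$ pure-exploration samples, whose design $\boldsymbol{\alpha}_i^{T}\boldsymbol{\alpha}_i$ concentrates around $S_i M_{S_i}$ with $\lambda_{\min}^{(S_i)}$ bounded away from $0$ --- I would show that the choice of $T_0$ in Equation~(\ref{eqn:T_0_defn}) forces $\|\widehat{\theta}_{1}-\theta^{*}\|_{\infty}<\varepsilon_{1}/2$ at the end of phase $0$. Because the number of exploration samples multiplies by $6$ each phase while the threshold $\varepsilon_i=2^{-i}$ only halves, this propagates on the good event to $\|\widehat{\theta}_{i+1}-\theta^{*}\|_{\infty}<\varepsilon_{i+1}/2$ for every $i\ge 0$. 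The thresholding rule $\mathcal{D}_{i+1}=\{j:|\widehat{\theta}_{i+1,j}|\ge\varepsilon_{i+1}/2\}$ then discards every zero coordinate, and as soon as $\varepsilon_{i+1}\le\gamma$ it keeps every nonzero coordinate, since there $|\widehat{\theta}_{i+1,j}|\ge\gamma-\varepsilon_{i+1}/2\ge\varepsilon_{i+1}/2$. Hence for all $i\ge i^{*}:=\lceil\log_2(1/\gamma)\rceil$ we have $\mathcal{D}_i=\mathrm{supp}(\theta^{*})$ and $|\mathcal{D}_i|=d^{*}$.

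Given recovery I would split the regret into three blocks. The regret-minimization blocks of phases $0,\dots,i^{*}-1$ are bounded trivially, using a constant per-round regret, by $\sum_{j<i^{*}}36^{j}T_0\lesssim 36^{i^{*}}T_0=\gamma^{-\log_2 36}T_0$; since $\log_2 36\approx5.18$ and $T_0\ge 1$, this is dominated by the first stated term. The random-exploration blocks over all phases contribute at most $\sum_{i\le I}6^{i}\lceil\sqrt{T_0}\rceil\lesssim 6^{I}\sqrt{T_0}\asymp\sqrt{T}$, which is absorbed into the second term. Finally, for each phase $i\ge i^{*}$ the fresh OFUL instance runs in dimension $d^{*}$ over $36^{i}T_0$ rounds, so the bound of \cite{oful} with slack $\delta_i$ gives a per-phase regret of order $\sqrt{d^{*}\,36^{i}T_0\,\log(1+36^{i}T_0/d^{*})}\bigl(\sigma\sqrt{\log(1/\delta_i)+d^{*}\log(1+36^{i}T_0/d^{*})}+1\bigr)$; since $\sqrt{36^{i}}=6^{i}$, the sum over $i$ is geometric and telescopes to $\asymp\sqrt{T}$ times the worst-case confidence width, and bounding $\log(1/\delta_i)\lesssim\log(T/(T_0\delta))$ together with $\log(1+36^{i}T_0/d^{*})\le\log(1+T/d^{*})$ reproduces the second term $C_1\sqrt{T}\bigl[1+\sqrt{d^{*}\ln(1+T/d^{*})}\bigl(1+\sigma\sqrt{\ln(T/(T_0\delta))+d^{*}\ln(1+T/d^{*})}\bigr)\bigr]$. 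Summing the three blocks on the good event yields the claim with probability at least $1-\delta$.

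The hard part will be the support-recovery step, and in particular the \emph{simultaneous} (over all phases) control of the least-squares error. One must establish a lower bound on $\lambda_{\min}^{(S_i)}$ for the random design generated by arms drawn uniformly from the unit sphere --- this is precisely what dictates the scaling of $T_0$ in Equation~(\ref{eqn:T_0_defn}) --- and then verify that the exponential growth of the sample count strictly outpaces the geometric decay of the threshold $\varepsilon_i$, so that the single budget $\delta_i=\delta/2^{i}$ per phase suffices for recovery to hold at every $i\ge i^{*}$ at once. Once this is in place, the remaining steps --- the geometric telescoping of both the OFUL regret and the exploration cost to $\order(\sqrt{T})$, and the identification of the exponent $\log_2 36$ governing the pre-recovery cost --- are routine.
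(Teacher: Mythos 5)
Your proposal is correct and follows essentially the same route as the paper's own proof: the same high-probability support-recovery argument via Lemma \ref{lem:reg_bounds} with per-phase slack $\delta_i = \delta/2^i$ (the paper's Lemma \ref{lem:linear_reg_guarantee} and Corollary \ref{cor:good_coordinate_behaviour}), the same three-way regret decomposition into pre-recovery phases, post-recovery OFUL phases, and exploration blocks, and the same identification of $36^{i(\gamma)} \asymp \gamma^{-\log_2 36}$ with $\log_2 36 \leq 5.18$. The only cosmetic differences are that you propagate the slightly stronger accuracy $\|\widehat{\theta}_{i}-\theta^*\|_\infty < \varepsilon_i/2$ (which cleanly excludes zero coordinates, where the paper uses $2^{-i}$) and omit the paper's technical floor $i(\gamma) \geq 10$ needed so that the cumulative exploration-sample count dominates the per-phase requirement; both are constant-factor adjustments within the argument you outline.
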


\begin{remark}
\label{rem:cons}
The constants in the above Theorem are not optimized. The epoch length and the threshold parameter $\varepsilon_i$  can be chosen more carefully. For example, if we set the epoch length as $4^i T_0 + 2^i \sqrt{T_0}$ and the threshold $\varepsilon_i$ as $(0.9)^i$, we obtain a worse dependence on $\gamma$. Furthermore, the exponent of $\gamma$ can be made arbitrarily close to $4$, by setting $\varepsilon_i = C^{-i}$ in Line $4$ of Algorithm \ref{algo:main_algo_dimensions_unknown}, for some appropriately large constant $C > 1$, and increasing $T_i = (C')^iT_0$, for appropriately large $C'$ ($C'\approx C^4)$.
\end{remark}
\begin{remark}
    In this special case of linear bandits, the separability condition boils down to $\gamma > 0$, which comes with the problem setup automatically, since the complexity of the problem is the number of non-zero entries of the underlying true parameter $\theta^*$.
\end{remark}

\noindent {\textbf{Discussion - }}
The regret of an oracle algorithm that knows the true complexity $d^*$ scales as $\widetilde{O}(d^*\sqrt{T})$ \cite{sparse_bandit1,sparse_bandit2}, matching  {\ttfamily ALB-Dim}'s regret, upto an additive constant independent of time.
{\ttfamily ALB-Dim} is the first algorithm to achieve such model selection guarantees.
On the other hand, standard linear bandit algorithms such as {\ttfamily OFUL} achieve a regret scaling $\widetilde{O}(d\sqrt{T})$, which is much larger compared to that of {\ttfamily ALB-Dim}, especially when $d^* << d$, and $\gamma$ is a constant.
Numerical simulations further confirms this deduction, thereby indicating that our improvements are fundamental and not from mathematical bounds.
 Corollary \ref{cor:dimension_adaptation} also indicates that {\ttfamily ALB-Dim} has higher regret if $\gamma$ is lower. A small value of $\gamma$ makes it harder to distinguish a non-zero coordinate from a zero coordinate, which is reflected in the regret scaling. 
Nevertheless, this only affects the \emph{second order term as a constant}, and the dominant scaling term only depends on the true complexity $d^*$, and not on the underlying dimension $d$.
However, the regret guarantee is not uniform over all $\theta^*$ as it depends on $\gamma$. Obtaining regret rates matching the oracles and that hold uniformly over all $\theta^*$ is an interesting avenue of future work.

\subsection{Dimension as a Measure of Complexity - Finite Armed Setting}
\label{sec:dim_adap_finite}

\subsubsection{Setup} 
In this section, we consider the model selection problem for the setting with finitely many arms in the framework studied in \cite{foster_model_selection}. 
At each time $t \in [T]$, the forecaster is shown a context $X_t \in \mathcal{X}$, where $\mathcal{X}$ is some arbitrary `feature space'. The set of contexts $(X_t)_{t=1}^T$ are i.i.d. with $X_t \sim \mathcal{D}$, a probability distribution over $\mathcal{X}$ that is known to the forecaster.
Subsequently, the forecaster chooses an action $A_t \in \mathcal{A}$, where the set $\mathcal{A} \coloneqq \{1,\cdots,K\}$ are the $K$ possible actions chosen by the forecaster. The forecaster then receives a reward $Y_t := \langle \theta^*, \phi^M(X_t,A_t) \rangle + \eta_t$. Here $(\eta_t)_{t =1}^T$ is an i.i.d. sequence of $0$ mean sub-gaussian random variables with sub-gaussian parameter $\sigma^2$ that is known to the forecaster.
The function\footnote{Superscript $M$ will become clear shortly} $\phi^M : \mathcal{X} \times \mathcal{A} \rightarrow \mathbb{R}^d$ is a known feature map, and $\theta^* \in \mathbb{R}^d$ is an unknown vector. 
The goal of the forecaster is to minimize its regret, namely $R(T) \coloneqq \sum_{t=1}^T \mathbb{E}\left[ \langle A^*_t - A_t,\theta^* \rangle\right]$, where at any time $t$, conditional on the context $X_t$, $A^*_t \in \argmax_{a \in \mathcal{A}} \langle \theta^*,\phi^M(X_t,a) \rangle$. Thus, $A^*_t$ is a random variable as $X_t$ is random.

To describe the model selection, we consider a sequence of $M$ dimensions $1 \leq d_1 < d_2,\cdots < d_M \coloneqq d$ and an associated set of feature maps $(\phi^m)_{m=1}^M$, where for any $m \in [M]$, $\phi^m(\cdot,\cdot) : \mathcal{X} \times \mathcal{A} \rightarrow \mathbb{R}^{d_m}$, is a feature map embedding into $d_m$ dimensions. Moreover, these feature maps are nested, namely, for all $m \in [M-1]$, for all $x \in \mathcal{X}$ and $a \in \mathcal{A}$, the first $d_m$ coordinates of $\phi^{m+1}(x,a)$ equals $\phi^m(x,a)$. The forecaster is assumed to have knowledge of these feature maps.
The unknown vector $\theta^*$ is such that its first $d_{m^*}$ coordinates are non-zero, while the rest are $0$. 
The forecaster does not know the true dimension $d_{m^*}$.
 If this were known, than standard contextual bandit algorithms such as LinUCB \cite{chu2011contextual} can guarantee a regret scaling as $\widetilde{O}(\sqrt{d_{m^*}T})$.  In this section, we provide an algorithm in which, even when the forecaster is unaware of $d_{m^*}$, the regret scales as $\widetilde{O}(\sqrt{d_{m^*}T})$. However, this result is non uniform over all $\theta^*$ as, we will show, depends on the minimum non-zero coordinate value in $\theta^*$.
\vspace{2mm}

\noindent\textbf{Model Assumptions} We will require some assumptions identical to the ones stated in \cite{foster_model_selection}. Let $\|\theta^*\|_2 \leq 1$, which is known to the forecaster. The distribution $\mathcal{D}$  is assumed to be known to the forecaster. Associated with  the distribution $\mathcal{D}$
 is a matrix $\Sigma_M \coloneqq \frac{1}{K} \sum_{a \in \mathcal{A}} \mathbb{E} \left[ \phi^M(x,a) \phi^M(x,a)^T\right]$ (where $x \sim \mathcal{D}$), where we assume its minimum eigen value $\lambda_{min}(\Sigma_M)  > 0$ is strictly positive.
Further, we assume that, for all $a \in \mathcal{A}$, the random variable $\phi^M(x,a)$ (where $x \sim \mathcal{D}$ is random) is a sub-gaussian random variable with (known) parameter $\tau^2$.

\subsubsection{{\ttfamily ALB-Dim} Algorithm}

The algorithm here is identical to that of Algorithm \ref{algo:main_algo_dimensions_unknown}, except that in place of OFUL, we use {\ttfamily SupLinRel} of \cite{chu2011contextual} as the black-box. 
The details of the Algorithm are provided in Appendix \ref{appendix-comparision}.

\subsubsection{Regret Guarantee}
For brevity, we only state the Corollary of our main Theorem (Theorem \ref{thm:adaptive_dimension_foster}) which is stated in Appendix \ref{appendix-comparision}.

\begin{corollary}
Suppose Algorithm \ref{algo:main_algo_dimensions_foster} is run with input parameters $\delta \in (0,1)$, and $T_0 = \widetilde{O} \left(d^2\ln^2 \left( \frac{1}{\delta} \right) \right)$ given in Equation (\ref{eqn:T_0_defn_foster}) , then with probability at-least $1-\delta$, the regret after $T$ times satisfies
\begin{align*}
        R_T &\leq O \left( \frac{d^2}{{\gamma^{5.18}}} \ln^2 ( d/\delta) \tau^2 \ln \left( \frac{TK}{\delta}\right) \right)  + \widetilde{O} (  \sqrt{ T d^*_m}),
\end{align*}
where $\gamma = \min \{|\theta^*_i| : \theta^*_i \neq 0 \}$ and $\theta^*$ is $d^*$ sparse.
\label{cor:dimension_adaptation_foster}
\end{corollary}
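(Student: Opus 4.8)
The plan is to mirror the analysis of the continuum-armed case (Theorem~\ref{thm:adaptive_dimension}) and reduce the finite-armed model-selection problem to a correct-support-recovery problem followed by a standard {\ttfamily SupLinRel} regret bound on the recovered low-dimensional instance. Since {\ttfamily ALB-Dim} here differs from Algorithm~\ref{algo:main_algo_dimensions_unknown} only in replacing {\ttfamily OFUL} by {\ttfamily SupLinRel} of \cite{chu2011contextual}, the skeleton of the argument transfers verbatim, and the genuinely new work is re-deriving the per-phase estimation guarantee under the finite-armed, sub-Gaussian feature model. Throughout I write $d^* = d_{m^*}$ for the true effective dimension (the number of non-zero coordinates of $\theta^*$) and $\gamma = \min\{|\theta^*_j| : \theta^*_j \neq 0\}$.

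The first and most delicate step is the finite-armed analogue of Lemma~\ref{lem:reg_bounds}: from the $S_i$ pure-exploration samples collected through the end of phase $i$, the least-squares estimate $\widehat{\theta}_{i+1}$ satisfies $\|\widehat{\theta}_{i+1}-\theta^*\|_\infty \le \varepsilon_{i+1}/2$ with probability at least $1-\delta_i$. Unlike the continuum case, where rows are uniform on the sphere with explicit population covariance $M_N$, the design is now governed by $\Sigma_M$ and the rows are the sub-Gaussian features $\phi^M(x_t,a_t)$ under uniform action sampling. I would therefore first apply a sub-Gaussian covariance-concentration (matrix Bernstein / Tropp-type) bound to show that the empirical second-moment matrix has smallest eigenvalue bounded below by a constant multiple of $\lambda_{\min}(\Sigma_M)$ once $S_i \gtrsim \tau^2 \lambda_{\min}(\Sigma_M)^{-2}\ln(d/\delta_i)$, and then convert this into a coordinate-wise sub-Gaussian tail bound, union-bounded over the $d$ coordinates, to obtain the $\ell_\infty$ guarantee. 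The choice of $T_0$ in Equation~\ref{eqn:T_0_defn_foster} is exactly what makes this hold already at phase $0$, and the geometric growth $S_i \asymp 6^i\sqrt{T_0}$ makes the error shrink faster than $\varepsilon_{i+1}=2^{-(i+1)}$ in all later phases.

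Conditioned on this $\ell_\infty$ event in every phase, correct support recovery follows easily. Setting $i^\star := \lceil \log_2(1/\gamma)\rceil$ so that $\varepsilon_i \le \gamma$ for all $i \ge i^\star$, the triangle inequality gives that every coordinate with $|\theta^*_j|\ge\gamma$ is retained (since $|\widehat{\theta}_{i,j}| \ge \gamma-\varepsilon_i/2 \ge \varepsilon_i/2$) while every zero coordinate is discarded (since $|\widehat{\theta}_{i,j}|\le \varepsilon_i/2$); hence $\mathcal{D}_i$ equals the true support, of cardinality $d^*$, for all $i \ge i^\star$, and from phase $i^\star$ onward {\ttfamily SupLinRel} runs on the correct $d^*$-dimensional instance.

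Finally I would split the regret at $i^\star$. The pre-identification regret (phases $0,\dots,i^\star-1$) is bounded by the total length $\sum_{i<i^\star}(36^iT_0+6^i\lceil\sqrt{T_0}\rceil)=O(36^{i^\star}T_0)$ of those phases times the worst-case per-round regret, which in the finite-armed model I bound through the {\ttfamily SupLinRel} confidence width and the sub-Gaussian feature bound, contributing the $\tau^2\ln(TK/\delta)$ factor; since $36^{i^\star}=\gamma^{-\log_2 36}$ with $\log_2 36 \approx 5.17$, this yields the stated $O\!\big(\gamma^{-5.18} T_0\,\tau^2\ln(TK/\delta)\big)$ term, and substituting $T_0=\widetilde{O}(d^2\ln^2(1/\delta))$ gives the first summand. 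The post-identification regret sums $\widetilde{O}(\sqrt{d^*\cdot 36^iT_0})$ over $i\ge i^\star$, which telescopes to $\widetilde{O}(\sqrt{d^*T})$ because the phase lengths are geometric and total at most $T$; the random-exploration budget $\sum_i 6^i\lceil\sqrt{T_0}\rceil = O(\sqrt{T})$ is absorbed into the same second-order term, and a union bound using $\delta_i=\delta/2^i$ controls the total failure probability at $1-\delta$. The main obstacle is precisely the estimation step above: simultaneously controlling the conditioning of the random, context-dependent design matrix and the anisotropy introduced by $\Sigma_M$, with a clean dependence on $\lambda_{\min}(\Sigma_M)$ and $\tau^2$, and checking that the same $T_0$ threshold suffices — everything downstream is routine bookkeeping identical to the continuum proof.
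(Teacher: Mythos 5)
Your proposal is correct and follows essentially the same route as the paper: the paper proves Theorem \ref{thm:adaptive_dimension_foster} by (i) invoking the per-phase $\ell_\infty$ least-squares guarantee (asserted to follow as in Lemmas \ref{lem:reg_bounds} and \ref{lem:linear_reg_guarantee}, using $T_0$ from Equation (\ref{eqn:T_0_defn_foster}) together with the feature-norm event $\sup_{t,a}\|\phi^M(x_t,a)\|_2 \le b(\delta)$), (ii) deducing correct support recovery for all phases $i \ge i(\gamma) = \max\left(10,\log_2\left(\frac{2}{\gamma}\right)\right)$, and (iii) splitting the regret at $i(\gamma)$, bounding the post-identification phases via the {\ttfamily SupLinRel} bound of \cite{linRel} and summing geometric series, after which the corollary is pure substitution of $T_0$. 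The only difference is bookkeeping: in the paper the $\tau^2\ln\left(\frac{TK}{\delta}\right)$ factor enters through $T_0$ itself (via $b(\delta)$ in Equation (\ref{eqn:T_0_defn_foster})), with the pre-identification regret bounded by the total phase length alone, rather than through a worst-case per-round regret bound as in your accounting.
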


\noindent {\textbf{Discussion - }}
Our regret scaling matches that of an oracle that knows the true problem complexity and thus obtains a regret of $\widetilde{O}(\sqrt{d_{m^*}T})$. This, thus improves on the rate compared to that obtained in \cite{foster_model_selection}, whose regret scaling is sub-optimal compared to the oracle. On the other hand however, our regret bound depends on $\gamma$ and is thus not uniform over all $\theta^*$, unlike  \cite{foster_model_selection} that is uniform over $\theta^*$. Thus, in general, our results are not directly comparable to that of \cite{foster_model_selection}. It is an interesting future work to close the gap and in particular, obtain the regret matching that of an oracle to hold uniformly over all $\theta^*$.

\vspace{-2mm}
\section{Conclusion}
\vspace{-1mm}
In this paper, we address the problem of model selection for generic contextual bandits. We propose and analyze a meta algorithm, that takes any provable base algorithm as blackbox and performs model selection on top. Moreover, we also analyze a much simpler algorithm based on explore and commit for model selection. Our model selection schemes rely on realizibility and separability assumptions, and remove (or weaken) them is an immediate future work. We would also like to work on model selection problems for Reinforcement Learning problems. We keep these as our future endeavors.


%
\IEEEpeerreviewmaketitle

\section*{Acknowledgements}
The authors would like to acknowledge Akshay Krishnamurthy, Dylan Foster and Haipeng Luo for insightful comments and suggestions.

\newpage
\section*{Appendix}
\section{Model Selection for Contextual Bandits}
\label{sec:proofs}

\subsection{Proof of Lemma~\ref{lem:etc}} Since, we have samples from pure exploration, let us first show that $S_j$ concentrates around its expectation. We show it via a simple application of the Hoeffdings inequality.
 
Fix a particular $j \in [M]$. Note that $\hat{f}_j$ is computed based on the first set of $\lceil \sqrt{T} \rceil$ samples. Also, in the testing phase, we again sample $\lceil \sqrt{T} \rceil$ samples, and so $\hat{f}$ is independent of the second set of $\lceil\sqrt{T}\rceil$ samples, used in constructing $S_j$. Note that since we have $r_t(.) \in [0,1]$, we may restrict the offline regression oracle to search over functions having range $[0,1]$. This implies that, we have $\hat{f}_j^m(.) \in [0,1]$. Note that this restricted search assumption is justified since our goal is obtain an estimate of the reward function via regression function, and this assumption also features in \cite{falcon}. So the random variable $(\hat{f}_j(x_t,a_t) -r_t(a_t))^2$ is upper-bounded by $4$, and hence sub-Gaussian with a constant parameter. Also, note that since we are choosing an action independent of the context, the random variables $\{(\hat{f}_j(x_t,a_t) -r_t(a_t))^2\}_{t=1}^{\lceil\sqrt{T}\rceil}$ are independent. Hence using Hoeffdings inequality for sub-Gaussian random variables, we have
\begin{align*}
    \Prob \left( | S_j - \mathbb{E} S_j | \geq \ell \right) \leq 2\exp (- n \ell^2/32).
\end{align*}
 Re-writing the above, we obtain
 \begin{align}
     |S_j - \mathbb{E}S_j| \leq C\sqrt{\frac{ \log(1/\delta)}{ \sqrt{T}}}
     \label{eqn:hoeffding}
 \end{align}
 with probability at least $1-2\delta$ with $\sqrt{T}$ samples. 
 
Note that, the conditional variance of $r_t(.)$ is finite, i.e., given $x_t = x \in \mathcal{X}$,
$\mathbb{E}[r_t(a) - f^*_{d^*}(x,a)]^2 \leq 1$, for all $a \in \mathcal{A}$. Let us define\footnote{We use the notation $\sigma^2$ throughout the rest of the paper.} $\mathbb{E}[r_t(a) - f^*_{d^*}(x,a)]^2 =\sigma^2$. To be concrete $\sigma$ depends on epoch $m$ and hence $\sigma_m$ makes more sense. We omit the subscript for notational simplicity. With this new notation, let us first look at the expression $\mathbb{E} S_j$.
 
Let us look at the expression $\mathbb{E} S_j$.
 \begin{align*}
     \mathbb{E}S_j = \mathbb{E} \left( \frac{1}{\lceil \sqrt{T} \rceil}\sum_{t=1}^{\lceil \sqrt{T} \rceil} (\hat{f}_j (x_t,a_t) - r_t(a_t))^2 \right).
 \end{align*}
\paragraph{Case I: Realizable Class} First consider the case that $j \geq d^*$, meaning that $f^*_{d^*} \in \mathcal{F}_j$. So, for this realizable setting, we obtain the excess risk as (using \cite{agarwal2012contextual})
\begin{align*}
    & \mathbb{E}_{x,r,a}[\hat{f}_j(x,a) - r(a)]^2 - \inf_{f \in \mathcal{F}_j} \mathbb{E}_{x,r,a}[f(x,a) -r(a)]^2 \\
    & = \mathbb{E}_{x,r,a}[\hat{f}_j(x,a) - r(a)]^2 - \mathbb{E}_{x,r,a} [f^*_{d^*} (x,a) - r(a)]^2 \\
    & = \mathbb{E}_{x,a} [\hat{f}_j(x,a) - f^*_{d^*}(x,a)]^2.
\end{align*}
So, we have, for the realizable function class,
\begin{align*}
   & \mathbb{E}S_j = \frac{1}{\lceil \sqrt{T} \rceil} \mathbb{E}_{x_t,r_t,a_t}\sum_{t=1}^{\lceil \sqrt{T} \rceil}[\hat{f}_j(x_t,a_t) - r_t(a_t)]^2 \\ 
   & = \frac{1}{\lceil \sqrt{T} \rceil}\sum_{t=1}^{\lceil \sqrt{T} \rceil}\mathbb{E}_{x_t,r_t,a_t} [f^*_{d^*} (x_t,a_t) - r_t(a_t)]^2 \\
   & + \frac{1}{\lceil \sqrt{T} \rceil} \sum_{t=1}^{\lceil \sqrt{T} \rceil}\mathbb{E}_{x_t,a_t} [\hat{f}(x,a) - f^*_{d^*}(x,a)]^2 \\
    & {\color{black} \leq \sigma^2 + C_1 \frac{\log (\sqrt{T}|\mathcal{F}_j|)}{\sqrt{T}}}
\end{align*}
where $C_1$ is an absolute constant. The second term is obtained by setting the high probability slack, as $2^{-m/2}$ into  \cite[Lemma 4.1]{agarwal2012contextual}. So, we finally have from the preceeding display and Equation (\ref{eqn:hoeffding}) that
\begin{align}
  \sigma^2  - C_2  \sqrt{\frac{\log(1/\delta)}{\sqrt{T}}} \leq  S_j & \leq \sigma^2 + C_2 \frac{\log (\sqrt{T}|\mathcal{F}_j|)}{\sqrt{T}} \notag \\
  & + C_3  \sqrt{\frac{\log(1/\delta)}{\sqrt{T}}}
    \label{eqn:etc_realizable_fin}
\end{align}
with probability at least $1-2\delta$.

\paragraph{Case II: Non-realizable class} We now consider the case when $j < d^*$, meaning that $f^*_{d^*}$ does not lie in $\mathcal{F}_j$. We have
\begin{align*}
   & \mathbb{E}_{x,r,a}[f(x,a)-r(a)]^2 - \mathbb{E}_{x,r,a}[r(a) - f^*_{d^*}(x,a)]^2 \\
   & = \E_{x,a,r}[(f(x,a) - f^*_{d^*}(x,a))(f(x,a) + f^*_{d^*}(x,a) - 2r(a)] \\
   & = \E_{x,a}\E_{r|x} [(f(x,a) - f^*_{d^*}(x,a))(f(x,a) + f^*_{d^*}(x,a) - 2r(a)] \\
   & = \E_{x,a} [(f(x,a) - f^*_{d^*}(x,a))(f(x,a) + f^*_{d^*}(x,a) - 2 \E_{r|x} r(a)] \\
   & = \E_{x,a}[f(x,a)- f^*_{d^*}(x,a)]^2,
\end{align*}
where the third inequality follows from the fact that given context $x$, the distribution of $r$ in independent of $a$ (see \cite[Lemma 4.1]{agarwal2012contextual}). Hence,
\begin{align*}
    \mathbb{E}_{x,r,a}[f(x,a)-r(a)]^2 & \geq \mathbb{E}_{x,r,a}[r(a) - f^*_{d^*}(x,a)]^2 \\
    & + \E_{x,a}[f(x,a)- f^*_{d^*}(x,a)]^2 \\
    & \geq \Delta + \sigma^2,
\end{align*}
where the last inequality comes from the separability assumption along with the definition of $\sigma$. Since the regressor $\hat{f}_j \in \mathcal{F}_j$, we have
\begin{align*}
    \mathbb{E}_{x,r,a}[\hat{f}_j(x,a)-r(a)]^2 & \geq \mathbb{E}_{x,r,a}[r(a) - f^*_{d^*}(x,a)]^2 \\
    &+ \E_{x,a}[f(x,a)- f^*_{d^*}(x,a)]^2 \\
    & \geq \Delta + \sigma^2,
\end{align*}
and hence
\begin{align*}
     \mathbb{E}S_j \geq \Delta + \sigma^2
\end{align*}

So, in this setting, with probability $1-2\delta$,
\begin{align}
    S_j & \geq \mathbb{E}S_j - C_4\sqrt{\frac{ \log(1/\delta)}{ \sqrt{T}}}\\
    & \geq \Delta + \sigma^2 - C_4\sqrt{\frac{\log(1/\delta)}{ \sqrt{T}}}.
    \label{eqn:etc_non_realizable_1}
\end{align}
where $C$ is an absolute global constant. Thus, from Equations (\ref{eqn:etc_realizable_fin}) and (\ref{eqn:etc_non_realizable_1}) and an union bound over the $M$ classes, we have with probability at-least $1-4M\delta$, 
\begin{equation}
\begin{aligned}
S_j &\geq \sigma^2 -  C_2 \frac{\log (\sqrt{T}|\mathcal{F}_j|)}{\sqrt{T}} - C_3  \sqrt{\frac{\log(1/\delta)}{\sqrt{T}}} , \text{ for all } j \geq d^{*},\\
  S_j &\leq \sigma^2 +  C_2 \frac{\log (\sqrt{T}|\mathcal{F}_j|)}{\sqrt{T}} + C_3  \sqrt{\frac{\log(1/\delta)}{\sqrt{T}}}, \text{ for all } j \geq d^{*},\\
    S_j &\geq \Delta +\sigma^2 - C_4\sqrt{\frac{\log(1/\delta)}{\sqrt{T}}}, \text{ for all } j < d^{*}.
\end{aligned}
\label{eqn:etc_good_event}
\end{equation}
{\color{black}\paragraph{Choice of Threshold} Notice from Line $11$ of Algorithm \ref{algo:main_algo}, that the threshold for model selection is $\gamma:= S_M + \sqrt{\frac{\log(T)}{\sqrt{T}}}$. Thus, if the event in Equations (\ref{eqn:etc_good_event}) holds, then the model selection stage will succeed in identifying the correct model class if the threshold $\gamma$ satisfies
\begin{equation}
    \begin{aligned}
    \gamma &< \Delta + \sigma^2 - C_4\sqrt{\frac{\log(1/\delta)}{ \sqrt{T}}}, \\
    \gamma &> \sigma^2 + C_2  \frac{\log (\sqrt{T}|\mathcal{F}_j|)}{\sqrt{T}} +C_3\sqrt{\frac{\log(1/\delta)}{\sqrt{T}}}
    \end{aligned}
    \label{eqn:etc_threshold_conditions}
\end{equation}
The first item ensures that no-non realizable class will be selected as the true model, and the second item ensures that the smallest realizable class will be selected as the true model. Thus, if the time horizon $T$ satisfies
\begin{align}
     \sqrt{\frac{\log(T)}{\sqrt{T}}} \geq 2 \left(C_2  \frac{\log (\sqrt{T}|\mathcal{F}_j|)}{\sqrt{T}} +C_3\sqrt{\frac{\log(1/\delta)}{\sqrt{T}}} \right), \\
   \sqrt{\frac{\log(T)}{\sqrt{T}}} + C_2  \frac{\log (\sqrt{T}|\mathcal{F}_j|)}{\sqrt{T}} +C_3\sqrt{\frac{\log(1/\delta)}{\sqrt{T}}}  \leq \Delta \notag \\
    - C_4\sqrt{\frac{ \log(1/\delta)}{ \sqrt{T}}},
    \label{eqn:etc_thresh_time_horizon}
\end{align}
then the threshold $\gamma$ satisfies the conditions in Equations (\ref{eqn:etc_threshold_conditions}). It is easy to verify that for $$T \gtrsim (\log T) \max \left( \log \left( \sqrt{T}{|\mathcal{F}_M|} \right), \Delta^{-4}, \log(1/\delta) \right),$$
the conditions in Equations (\ref{eqn:etc_thresh_time_horizon}) holds. Thus, Equations (\ref{eqn:etc_good_event}), (\ref{eqn:etc_threshold_conditions}) and (\ref{eqn:etc_thresh_time_horizon}) yield that, if  

$$T \gtrsim (\log T) \max \left( \log \left( \sqrt{T}{|\mathcal{F}_M|} \right), \Delta^{-4}, \log(1/\delta) \right),$$
with probability at-least $1-4M\delta$, the model selection test in Line $11$ of Algorithm \ref{algo:main_algo} correctly identifies the smallest model class containing the true model.
}

\subsection{Proof of Theorem~\ref{thm:etc}}
The regret $R(T)$ can be decomposed in $2$ stages, namely exploration and exploitation.
\begin{align*}
    R(T) = R_{explore} + R_{exploit}
\end{align*}
Since we spend $2\lceil\sqrt{T}\rceil$ time steps in exploration, and $r_t(.) \in [0,1]$, the regret incurred in this stage
\begin{align*}
    R_{explore} \leq C_1 \sqrt{T}.
\end{align*}
Now, at the end of the explore stage, provided Assumptions 2 and 3, we know, with probability at least $1-4M\delta$, we obtain the true function class $\mathcal{F}_{d^*}$. The threshold is set in such a way that we obtain the above result. Now, we would just commit to the function class and use the contextual bandit algorithm, $\cb$. We have
\begin{align*}
    R(T) \leq C_1 \sqrt{T} + R_{\mathcal{A}_{CB}(\mathcal{F}_{d^*})} (T-2\sqrt{T}),
\end{align*}
which proves the theorem. In the special case, where $\cb$= FALCON, the regret is \cite{falcon}
\begin{align*}
    R(T) &\leq C_1 \sqrt{T} \\
    & + \mathcal{O}\left( \sqrt{K(T-2\lfloor\sqrt{T}\rfloor)\log(|\mathcal{F}_{d^*}|(T-2\lfloor\sqrt{T}\rfloor)/\delta} \right) \\
    & \leq C_1 \sqrt{T} + \mathcal{O}\left( \sqrt{KT\log(|\mathcal{F}_{d^*}|T/\delta} \right),
\end{align*}
with probability exceeding $1-4M\delta - \delta$. Combining the above expressions yield the result.

\subsection{ Proof of Lemma~\ref{lem:falcon_model}}
Let us first show that $S_j^m$ concentrates around its expectation. We show it via a simple application of the Hoeffdings inequality.

Fix a particular $m$ and $j \in [M]$. Note that $\hat{f}_j^m$ is computed based on $2^{m-2}$ samples. Also, in the testing phase, we use a fresh set of $2^{m-2}$ samples, and so $\hat{f}_j^m$ is independent of the second set of samples, used in constructing $S_j^m$. Note that since we have $r_t(.) \in [0,1]$, we may restrict the offline regression oracle to search over functions having range $[0,1]$. This implies that, we have $\hat{f}_j^m(.) \in [0,1]$. Note that this restricted search assumption is justified since our goal is obtain an estimate of the reward function via regression function, and this assumption also features in \cite{falcon}. So the random variable $(\hat{f}_j^m(x_t,a_t) -r_t(a_t))^2$ is upper-bounded by $4$, and hence sub-Gaussian with a constant parameter. 

Note that we are using only the samples from the previous epoch. Note that in \texttt{ACB}, the regression estimate actually remains fixed over an entire epoch. Hence, conditioning on the filtration consisting of (context, action, reward) triplet upto the end of the $m-2$-th epoch, the random variables $\{(\hat{f}_j(x_t,a_t) -r_t(a_t))^2\}_{t=\tau_{m-1}/2 +1}^{\tau_{m-1}}$ (a total of $2^{m-2}$ samples) are independent. Note that similar argument is given in \cite[Section 3.1]{falcon} (the FALCON$+$ algorithm) to argue the independence of the (context, action, reward) triplet, accumulated over just the previous epoch.

Hence using Hoeffdings inequality for sub-Gaussian random variables, we have
\begin{align*}
    \Prob \left( |S_j - \mathbb{E} S_j | \geq \ell \right) \leq 2\exp (- n \ell^2/32).
\end{align*}

Note that, the conditional variance of $r_t(.)$ is finite, i.e., given $x_t = x \in \mathcal{X}$,
$\mathbb{E}[r_t(a) - f^*_{d^*}(x,a)]^2 \leq 1$, for all $a \in \mathcal{A}$. Recall that $\mathbb{E}[r_t(a) - f^*_{d^*}(x,a)]^2 =\sigma^2$, and that $\sigma$ depends on epoch $m$. We omit the subscript for notational simplicity. With this new notation, let us first look at the expression $\mathbb{E} S_j$.

\paragraph{Realizable classes}
Fix $m$ and consider $j\in [M]$ such that  $j \geq d^*$. So, for this realizable setting, we obtain the excess risk as:
\begin{align*}
    & \mathbb{E}_{x,r,a}[\hat{f}_j^m(x,a) - r(a)]^2 - \inf_{f \in \mathcal{F}_j} \mathbb{E}_{x,r,a}[f(x,a) -r(a)]^2 \\
    & = \mathbb{E}_{x,r,a}[\hat{f}_j^m(x,a) - r(a)]^2 - \mathbb{E}_{x,r,a} [f^*_{d^*} (x,a) - r(a)]^2 \\
    & = \mathbb{E}_{x,a} [\hat{f}_j^m(x,a) - f^*_{d^*}(x,a)]^2.
\end{align*}
So, we have, for the realizable function class,
\begin{align*}
   & \mathbb{E}S_j^m = \frac{1}{2^{m-2}} \mathbb{E}_{x_t,r_t,a_t}\sum_{t=1}^{2^{m-2}}[\hat{f}_j^m(x_t,a_t) - r_t(a_t)]^2 \\ 
   & = \frac{1}{2^{m-2}}\sum_{t=1}^{2^{m-2}}\mathbb{E}_{x_t,r_t,a_t} [f^*_{d^*} (x_t,a_t) - r_t(a_t)]^2 \\
   &+ \frac{1}{2^{m-2}} \sum_{t=1}^{2^{m-2}}\mathbb{E}_{x_t,a_t} [\hat{f}_j^m(x,a) - f^*_{d^*}(x,a)]^2 \\
    & \leq \sigma^2 + C_1 \log (2^{m/2}|\mathcal{F}_j|)/(2^{m-2}),
\end{align*}
Here, the first term comes from the second moment bound of $\sigma^2$, and the second term comes by setting the high probability slack as $ 2^{-m/2}$ into  \cite[Lemma 4.1]{agarwal2012contextual}\footnote{Note that for model selection, we only require this concentration result which uses a form of Freedman's inequality. In particular, we do not require the inverse gap weighting (IGW) randomization of FALCON. For any contextual bandit algorithm, that estimates the prediction function over multiple epochs, \texttt{ACB} can be employed for model selection. }. So, by applying Hoeffding's inequality, we finally have (using the bound $\mathbb{E}S^m_j \geq \sigma^2$): 
\begin{align*}
   \sigma^2  - C_3 \frac{ \sqrt{\log(1/\delta)}}{2^{m/2}} & - C_4 \frac{\sqrt{m}}{2^{m/2}}  \leq S_j^m \leq \sigma^2 + C_1 \frac{ \log (|\mathcal{F}_j|)}{2^m}+ \\
   & C_2 \frac{m}{2^m} + C_3 \frac{ \sqrt{\log(1/\delta)}}{2^{m/2}} + C_4 \frac{\sqrt{m}}{2^{m/2}}
\end{align*}
with probability at least $1-\delta/2^m$. Since we have doubling epoch, we have
\begin{align*}
    \sum_{m=1}^N 2^m \leq T,
\end{align*}
where $N$ is the number of epochs and $T$ is the time horizon. From above, we obtain $N = \mathcal{O}(\log_2 T)$. Using the bound, $m \leq N$, note that, provided
    {\color{black}
\begin{align}
\label{eq:c_zero}
2^m &\gtrsim \max\{ \log T, \log (|\mathcal{F}_{M}|), \log (1/\delta)\} ,
\end{align}}
 we have for some absolute global constant $c_0$, for any $ j \geq d^*$,
\begin{align}
  \sigma^2 - \frac{c_0}{2^{m/2}} &\leq S_{j}^m \leq  \sigma^2 + \frac{c_0}{2^{m/2}} 
  \label{eqn:lem1_two_sided_realizable}
\end{align}
with probability at least $1-\delta/2^m$. 

\paragraph{Non-Realizable classes:}For the non realizable classes, we have the following calculation. For any $f \in \mathcal{F}_j$, where $j < d^*$, we have
\begin{align*}
   & \mathbb{E}_{x,r,a}[f(x,a)-r(a)]^2 - \mathbb{E}_{x,r,a}[r(a) - f^*_{d^*}(x,a)]^2 \\
   & = \E_{x,a,r}[(f(x,a) - f^*_{d^*}(x,a))(f(x,a) + f^*_{d^*}(x,a) - 2r(a)] \\
   & = \E_{x,a}\E_{r|x} [(f(x,a) - f^*_{d^*}(x,a))(f(x,a) + f^*_{d^*}(x,a) - 2r(a)] \\
   & = \E_{x,a} [(f(x,a) - f^*_{d^*}(x,a))(f(x,a) + f^*_{d^*}(x,a) - 2 \E_{r|x} r(a)] \\
   & = \E_{x,a}[f(x,a)- f^*_{d^*}(x,a)]^2,
\end{align*}
where the third inequality follows from the fact that given context $x$, the distribution of $r$ in independent of $a$ (see \cite[Lemma 4.1]{agarwal2012contextual}).

So, we have
\begin{align*}
    \mathbb{E}_{x,r,a}[f(x,a)-r(a)]^2 & \geq \mathbb{E}_{x,r,a}[r(a) - f^*_{d^*}(x,a)]^2 \\
    & + \E_{x,a}[f(x,a)- f^*_{d^*}(x,a)]^2 \\
    & \geq \Delta + \sigma^2,
\end{align*}
where the last inequality comes from the separability assumption along with the assumption on the second moment. Since the regressor $\hat{f}_j^m \in \mathcal{F}_j$, we have
\begin{align*}
    \mathbb{E}_{x,r,a}[\hat{f}_j^m(x,a)-r(a)]^2 & \geq \mathbb{E}_{x,r,a}[r(a) - f^*_{d^*}(x,a)]^2 \\
    &+ \E_{x,a}[f(x,a)- f^*_{d^*}(x,a)]^2 \\
    & \geq \Delta + \sigma^2.
\end{align*}
Now, using $2^{m-2}$ samples, we obtain from Hoeffding's inequality that
\begin{align*}
    S_j^m \geq \Delta + \sigma^2 - C_5 \frac{ \sqrt{\log(1/\delta)}}{2^{m/2}} - C_6 \frac{\sqrt{m}}{2^{m/2}}
\end{align*}
with probability at least $1-\delta/2^m$. In particular, since {\color{black}$    2^m \gtrsim \max\{\log T,  \log (|\mathcal{F}_{M}|), \log (1/\delta)\}
$}, there is a global constant $c_1$ such that, for any $j < d^*$,
\begin{align}
    S_j^m \geq \Delta + \sigma^2 - \frac{c_1}{2^{m/2}},
    \label{eqn:lem1_non_realizable}
\end{align}
holds with probability at least $1-\delta/2^m$. 

In every phase $m$, denote by the threshold $\gamma_m := S^m_M + \frac{\sqrt{m}}{2^{m/2}}$, i.e., the Model Selection parameter in Line $8$ of Algorithm \ref{algo:falcon}.   Now, let $m_0$ be the smallest value of $m$ satisfying {\color{black}$2^m \gtrsim \max\{\frac{\log T}{\Delta^2}, \log (|\mathcal{F}_{M}|), \log (1/\delta)\} $}. We have from  Equations (\ref{eqn:lem1_two_sided_realizable}) and (\ref{eqn:lem1_non_realizable}) and a union bound over the $M$ classes that, with probability at-least $1- \sum_{m\geq 1}2M\delta 2^{-m}$, for all phases $m \geq m_0$, 
\begin{align*}
       S_j^m &\geq \sigma^2 + \Delta - \frac{c_1}{2^{m/2}}, \text{ for all } 1 \leq j < d^{*}, \\
    \sigma^2 - \frac{c_0}{2^{m/2}}\leq S_j^m &\leq \sigma^2 + \frac{c_0}{2^{m/2}} , \text{ for all } j \geq  d^{*}.
\end{align*}
The preceding display, along with the fact that the threshold $\gamma_m = S^M_m + \sqrt{\frac{m}{2^m}}$, gives that, with probability at-least $1-2M\delta$ and all phases $m \geq m_0$,
\begin{align*}
      S^{m}_{d^*} \leq \sigma^2 + \frac{c_0}{2^{m/2}}  \leq  \sigma^2 - \frac{c_0}{2^{m/2}} + \frac{\sqrt{m}}{2^{m/2}}   \leq \gamma_m \leq \sigma^2 + \frac{c_0}{2^{m/2}} \\
      + \frac{\sqrt{m}}{2^{m/2}}
        \leq \sigma^2 + \Delta - \frac{c_1}{2^{m/2}}.
\end{align*}
The second inequality follows since $2^m \gtrsim \frac{\log T}{\Delta^2} $, by definition of $m_0$. The above equations guarantee that, with probability at-least $1-2M\delta$, in all phases $m \geq m_0$, the model selection procedure in Line $8$ of Algorithm \ref{algo:falcon}, identifies the correct class $d^{*}$.



\subsection{Proof of Theorem~\ref{thm:falcon}}
The above calculation shows that as soon as
\begin{align*}
   2^m \gtrsim \max\{ \log (|\mathcal{F}_M|), \log (1/\delta), \log T \Delta^{-2} \},
\end{align*}
the model selection procedure will succeed with high probability. Until the above condition is satisfied, we do not have any handle on the regret and hence the regret in that phase will be linear. This corresponds the first term in the regret expression. Suppose $m^*$ be the epoch index where the conditions of Lemma~\ref{lem:falcon_model} hold. Lemma \ref{lem:falcon_model} gives that the total number of rounds till the beginning of phase $m^{*}$ is upper bounded by $\mathcal{O}( \max\{ \log (|\mathcal{F}_M|), \log (1/\delta), \log T \Delta^{-2} \} )$, where $\mathcal{O}$ hides global absolute constants. Then, the total regret is given by
\begin{align*}
    R(T) & \leq  \mathcal{O}( \max\{ \log (|\mathcal{F}_M|), \log (1/\delta), \log T \Delta^{-2} \} ) \\
    & + \sum_{m=m^*}^N R_{\cb(\mathcal{F}_{d^*})}(m-th \text{ epoch})
\end{align*}
with probability exceeding $1-2M\delta$, where $N$ is the number of epochs. We have
\begin{align*}
    R(T) &\leq  \mathcal{O}( \max\{ \log (|\mathcal{F}_M|), \log (1/\delta), \log T \Delta^{-2} \} ) \\
    & + \sum_{m=m^*}^N R_{\cb(\mathcal{F}_{d^*})}(m-th \text{ epoch}) \\
    &\leq  \mathcal{O}( \max\{ \log (|\mathcal{F}_M|), \log (1/\delta), \log T \Delta^{-2} \} ) \\
    & + \sum_{m=1}^N R_{\cb(\mathcal{F}_{d^*})}(m-th \text{ epoch}) \\
    & \leq \mathcal{O}( \max\{ \log (|\mathcal{F}_M|), \log (1/\delta), \log T \Delta^{-2} \} ) \\
    &+ R_{\cb(\mathcal{F}_{d^*})}(T),
    \end{align*}
which proves the theorem.

Now, let us focus on the case where FALCON is used as the base algorithm. For the $m-th$ epoch, with $m \geq m^*$, the regret is given by we have (see \cite{falcon}):
\begin{align*}
    \sum_{m=m^*}^N \mathcal{O}\left( \sqrt{K(\tau_m-\tau_{m-1})\log(|\mathcal{F}_{d^*}|(\tau_m-\tau_{m-1})/\delta_m} \right)
\end{align*}
with probability at least $1-\delta_m$. So, the total regret is given by
\begin{multline*}
    R(T) \leq \mathcal{O}( \max\{ \log (|\mathcal{F}_M|), \log (1/\delta), \log T \Delta^{-2})\} \\ + \sum_{m=m^*}^N \mathcal{O}\left( \sqrt{K(\tau_m-\tau_{m-1})\log(|\mathcal{F}_{d^*}|(\tau_m-\tau_{m-1})/\delta_m} \right),
    \end{multline*}
    with probability at-least $1-\delta - 2M\delta$.
    Simplifying the summation, we get
    \begin{align*}
    & \sum_{m=m^*}^N \mathcal{O}\left( \sqrt{K(\tau_m-\tau_{m-1})\log(|\mathcal{F}_{d^*}|(\tau_m-\tau_{m-1})/\delta_m} \right) \\
    & \leq  \sum_{m=1}^N \mathcal{O}\left( \sqrt{K(\tau_m-\tau_{m-1})\log(|\mathcal{F}_{d^*}|(\tau_m-\tau_{m-1})/\delta_m} \right)\\
     & \leq   \mathcal{
     O}(\sqrt{K \log(|\mathcal{F}_{d^*}|(T)/\delta}) \sum_{m=1}^N  \sqrt{\tau_m-\tau_{m-1}},
\end{align*}
considering the leading terms. Note that, with $\tau_m = 2^m$, the epoch length $\tau_m - \tau_{m-1}$ doubles with $m$. Let the length of the $N$-th epoch is $T_N$. We have
\begin{align*}
    \sum_{i=1}^N \sqrt{\tau_m-\tau_{m-1}} & = \sqrt{T_N}\left(1 + \frac{1}{\sqrt{2}} + \frac{1}{2} + \ldots N\text{-th term} \right)\\
    & \leq \sqrt{T_N}\left(1 + \frac{1}{\sqrt{2}} + \frac{1}{2} + ... \right) \\
    &= \frac{\sqrt{2}}{\sqrt{2} -1} \sqrt{T_N}  \leq \frac{\sqrt{2}}{\sqrt{2} -1} \sqrt{T},
    \end{align*}
and this completes the proof of the theorem.

\subsection{Proof of Theorem~\ref{thm:etc_inf}}
\paragraph{Case I: Realizable Class} Consider $j \geq d^*$. Using calculations similar to the finite cardinality setting, we obtain
\begin{align*}
    \mathbb{E}S_j \leq \sigma^2 + \xi_{\mathcal{F}_j,(1/T^{1/4})}(\sqrt{T})+ 2(1/T^{1/4}),
\end{align*}
where we use the definition of $\xi(.)$, as given in Assumption~\ref{asm:pred_error_inf}. Hence, invoking Hoeffding's inequality, we obtain
\begin{align*}
    S_j &\leq \sigma^2 + \xi_{\mathcal{F}_j,(1/T^{1/4})}(\sqrt{T}) + 2(1/T^{1/4}) \\
    &+ C_1 T^{-1/4}\sqrt{\log(1/\delta)} \\
    &\leq \sigma^2 + \xi_{\mathcal{F}_j,(1/T^{1/4})}(\sqrt{T}) + C_1 T^{-1/4}\sqrt{\log(1/\delta)}
\end{align*}
with probability at least $1-2\delta$. We also have (from 2-sided Hoeffding's)
\begin{align*}
    S_j \geq \sigma^2 - C_2 T^{-1/4}\sqrt{\log(1/\delta)}
\end{align*}

\paragraph{Case II: Non-realizable Class} We now consider the setting where $j <d^*$, meaning that $f^*_{d^*}$ does not lie in $\mathcal{F}_j$. In this case, similar to above, we have
\begin{align*}
   \mathbb{E}S_j \geq \Delta + \sigma^2,
\end{align*}
and hence
\begin{align*}
    S_j & \geq \mathbb{E}S_j - \sqrt{\frac{32 \log(1/\delta)}{ \sqrt{T}}}\\
    & \geq \Delta + \sigma^2 - \sqrt{\frac{32 \log(1/\delta)}{ \sqrt{T}}}.
\end{align*}

Now, with the threshold, $\gamma = S_M + \sqrt{\frac{\log T}{\sqrt{T}}}$, provided
\begin{align*}
    T \gtrsim (\log T) \max \left( \log \left( T^{1/4} \xi_{\mathcal{F}_M, (1/T^{1/4})} \right), \Delta^{-4}, \log(1/\delta) \right),
\end{align*}
the model selection procedure succeeds with probability at least $1-2M\delta$, where we do a calculation similar to the proof of Lemma~\ref{lem:etc}.

After obtaining the correct model class, the regret expression comes directly from \cite{falcon} in the infinite function class setting.

\subsection{Proof of Theorem~\ref{thm:falcon_inf} }
The proof follows by combining the proof of Theorem~\ref{thm:falcon} and \ref{thm:etc_inf}. 

\noindent For the realizable classes, we have (from Assumption~\ref{asm:pred_error_inf} and converting the conditional expectation to unconditional one with probability slack as $1/2^{m/2}$, similar to the proof of Lemma~\ref{lem:falcon_model}),
\begin{align*}
  \mathbb{E}S_j^m \leq \sigma^2 + \xi_{\mathcal{F}_j,1/2^{m/2}}(2^{m-2}) + 2 (\frac{1}{2^{m/2}}),
\end{align*}
and as a result
\begin{align*}
    S_j^m \leq \sigma^2 + \xi_{\mathcal{F}_j,1/2^{m/2}}(2^{m-2}) + C_1 \frac{ \sqrt{\log(1/\delta)}}{2^{m/2}} + C_2 \frac{\sqrt{m}}{2^{m/2}}
\end{align*}
with probability at least $1-2\delta/2^m$.

Similarly, for non-realizable classes we obtain
\begin{align*}
    S_j^m \geq \Delta + \sigma^2 - C_3 \frac{ \sqrt{\log(1/\delta)}}{2^{m/2}} - C_4 \frac{\sqrt{m}}{2^{m/2}}
\end{align*}
with probability at least $1-\delta/2^m$.

Now, suppose we choose the threshold $\gamma = S^m_M + \frac{\sqrt{m}}{2^{m/2}}$. Finally, we say that provided
\begin{align*}
    2^m \gtrsim (\log T) \max\{ \max_m  2^{m/2} \, \xi_{\mathcal{F}_M,1/2^{m/2}}(2^{m-2}), \\
    \log (1/\delta), \Delta^{-2} \},
\end{align*}
the model selection procedure succeeds with probability exceeding
\begin{align*}
    1- \sum_{m=1}^\infty 2 M \delta/2^m \geq 1-2M\delta.
\end{align*}
The rest of the proof follows similarly to Theorem~\ref{thm:falcon}, and we omit the details here.

\section{Model Selection for Linear Stochastic bandits}
\subsection{Proof of Theorem \ref{thm:adaptive_dimension}}

We shall need the following lemma from
\cite{linear_reg_guarantees}, on the behaviour of linear regression estimates.

\begin{lemma}
If $M \geq d$ and satisfies $M = O \left( \left(\frac{1}{\varepsilon^2} + d \right) \ln \left(\frac{1}{\delta}\right)\right)$, and $\widehat{\theta}^{(M)}$ is the least-squares estimate of $\theta^*$, using the $M$ random samples for feature, where each feature is chosen uniformly and independently on the unit sphere in $d$ dimensions, then with probability $1$, $\widehat{\theta}$ is well defined (the least squares regression has an unique solution). Furthermore,
\begin{align*}
    \mathbb{P}[||\widehat{\theta}^{(M)} - \theta^*||_{\infty} \geq \varepsilon] \leq \delta.
\end{align*}
\label{lem:reg_bounds}
\end{lemma}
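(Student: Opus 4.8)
The plan is to treat this as a standard random-design least-squares guarantee and to upgrade the usual $\ell_2$/prediction-error analysis into the coordinate-wise ($\ell_\infty$) control the statement demands. Write $A \in \mathbb{R}^{M \times d}$ for the design matrix whose rows $a_1,\dots,a_M$ are the i.i.d.\ uniform samples on the sphere, and $y = A\theta^* + \eta$ for the observation vector, where $\eta$ collects the independent mean-zero $\sigma^2$-sub-Gaussian noises. Since $M \geq d$ i.i.d.\ draws from a distribution that is absolutely continuous on the sphere lie in general position with probability one, the Gram matrix $A^TA$ is almost surely invertible; hence $\widehat{\theta}^{(M)} = (A^TA)^{-1}A^Ty$ is well defined with probability $1$, and the error decomposes cleanly as $\widehat{\theta}^{(M)} - \theta^* = (A^TA)^{-1}A^T\eta$.

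Next I would control this error through two high-probability events. First, a lower bound on the spectrum of the Gram matrix: since $\mathbb{E}[a_ia_i^T] = \tfrac1d I_d$ and each summand $a_ia_i^T$ is rank one with $\|a_ia_i^T\|_{\mathrm{op}} = \|a_i\|^2 = 1$, a matrix Chernoff bound gives $\lambda_{\min}(A^TA) \geq \tfrac{M}{2d}$ on an event of probability at least $1 - d\,e^{-cM/d}$, which exceeds $1-\delta/2$ once $M \gtrsim d\ln(d/\delta)$. Second, I condition on $A$ and fix a coordinate $k$: the $k$-th error coordinate $e_k^T(A^TA)^{-1}A^T\eta$ is a fixed linear functional of the sub-Gaussian vector $\eta$, hence sub-Gaussian with variance proxy $\sigma^2\, e_k^T(A^TA)^{-1}A^TA(A^TA)^{-1}e_k = \sigma^2\,[(A^TA)^{-1}]_{kk}$. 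On the spectral event this proxy is at most $\sigma^2/\lambda_{\min}(A^TA) \leq 2\sigma^2 d/M$, so the standard sub-Gaussian tail yields $\mathbb{P}\!\left(|(\widehat{\theta}^{(M)}-\theta^*)_k| \geq \varepsilon \mid A\right) \leq 2\exp\!\left(-\varepsilon^2 M/(4\sigma^2 d)\right)$ on that event.

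Finally I would union bound over the $d$ coordinates and intersect with the spectral event, obtaining $\mathbb{P}(\|\widehat{\theta}^{(M)}-\theta^*\|_\infty \geq \varepsilon) \leq \delta/2 + 2d\exp(-\varepsilon^2 M/(4\sigma^2 d)) \leq \delta$ once $M$ exceeds a threshold of the order $\max\{\varepsilon^{-2},d\}\cdot\ln(d/\delta)$ (up to the $\sigma^2$ factor), which is the form quoted in the statement and, specialized to the regime $\varepsilon = \Theta(1)$ used downstream, reduces to $M \gtrsim d\ln(d/\delta)$ — exactly the scaling that makes $\sqrt{T_0} = \lceil\sqrt{T_0}\rceil$ deliver $\|\widehat{\theta}_0 - \theta^*\|_\infty < 1/2$ and hence $T_0 = \widetilde O(d^2\ln^2(1/\delta))$. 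The main obstacle is the interaction of the two randomness sources, the design $A$ and the noise $\eta$, because the per-coordinate variance proxy $[(A^TA)^{-1}]_{kk}$ is itself random; the clean resolution is precisely the conditioning step above, which decouples them by first pinning $A$ to the spectral event and only then invoking sub-Gaussianity of $\eta$. A second delicate point is insisting on $\ell_\infty$ rather than $\ell_2$ control: a naive spectral bound on $(A^TA)^{-1}A^T\eta$ in the $\ell_2$ norm would cost an extra $\sqrt{d}$, so the coordinate-wise tail bound followed by a union bound is what keeps the dependence sharp, and matching the exact constants quoted from \cite{linear_reg_guarantees} amounts to book-keeping in the two tail estimates.
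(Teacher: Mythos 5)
First, a point of comparison that matters here: the paper does not prove this lemma at all. It is quoted verbatim from \cite{linear_reg_guarantees} and used as a black box, so there is no internal argument to measure your proof against; your reconstruction has to stand on its own. On its own terms, your route is the natural one and its core is sound: almost-sure invertibility of $A^TA$ from absolute continuity of the design distribution, the decomposition $\widehat{\theta}^{(M)}-\theta^* = (A^TA)^{-1}A^T\eta$, a matrix Chernoff bound giving $\lambda_{\min}(A^TA)\geq M/(2d)$ with probability at least $1-d\,e^{-cM/d}$ (using $\mathbb{E}[a_ia_i^T]=\tfrac{1}{d}I_d$ and $\|a_ia_i^T\|_{\mathrm{op}}=1$), the conditional-on-$A$ sub-Gaussian tail for each coordinate with variance proxy $\sigma^2[(A^TA)^{-1}]_{kk}\leq 2\sigma^2 d/M$, and the union bound over the $d$ coordinates. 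The conditioning device you emphasize is indeed the correct way to decouple the design randomness from the noise.

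The gap is in your final step, where you assert that the resulting requirement on $M$ is of order $\max\{\varepsilon^{-2},d\}\ln(d/\delta)$, ``the form quoted in the statement.'' That is not what your bounds give. You need $2d\exp\left(-\varepsilon^2M/(4\sigma^2 d)\right)\leq \delta/2$, i.e.\ $M\gtrsim \sigma^2\,(d/\varepsilon^2)\ln(d/\delta)$: the dimension \emph{multiplies} $\varepsilon^{-2}$, it does not add to it. Moreover this is not slack you can remove by a sharper tail bound. With rows on the unit sphere, $A^TA\approx (M/d)I_d$, so the conditional covariance $\sigma^2(A^TA)^{-1}$ of the least-squares estimator has diagonal entries of order $\sigma^2 d/M$; accuracy $\varepsilon$ in $\ell_\infty$ therefore forces $M\gtrsim \sigma^2 d/\varepsilon^2$. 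In other words, the additive form $\left(\varepsilon^{-2}+d\right)\ln(1/\delta)$ in the lemma cannot hold under this normalization (it implicitly presumes a different feature or noise scaling in \cite{linear_reg_guarantees}), and what your derivation actually establishes is the correct, multiplicative version. This does not endanger the paper's downstream use: in the constant-accuracy invocation ($\varepsilon=1/2$, the remark following Equation (\ref{eqn:T_0_defn})) the two forms agree up to $\sigma^2$ and $\ln d$ factors, and in the phase-$i$ invocations of Lemma \ref{lem:linear_reg_guarantee} (with $\varepsilon_i=2^{-i}$) the extra factor of $d$ is absorbed by the exploration budget $6^i\lceil\sqrt{T_0}\rceil$, since $\sqrt{T_0}$ itself scales at least linearly in $d$. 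But as a proof of the lemma as literally stated for all $\varepsilon$, the last identification is wrong; you should either state the multiplicative bound you actually proved, or flag the normalization mismatch in the cited statement.
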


We shall now apply the theorem as follows. Denote by $\widehat{\theta}_{i}$ to be the estimate of $\theta^*$ at the beginning of any phase $i$, using all the samples from random explorations in all phases less than or equal to $i-1$.

\begin{remark}
The choice $T_0 := O \left( d^2 \ln^2 \left( \frac{1}{\delta} \right) \right)$ in Equation (\ref{eqn:T_0_defn}) is chosen such that from Lemma \ref{lem:linear_reg_guarantee}, we have that 
\begin{align*}
    \mathbb{P}\left[||\widehat{\theta}^{(\lceil \sqrt{T_0}\rceil)} - \theta^*||_{\infty} \geq \frac{1}{2} \right] \leq \delta
\end{align*}
\label{remark:choice_of_TO}
\end{remark}

\begin{lemma}
Suppose $T_0 = O \left( d^2 \ln^2 \left( \frac{1}{\delta} \right) \right)$ is set according to Equation (\ref{eqn:T_0_defn}). Then, for all phases $i \geq 4$, 
\begin{align}
    \mathbb{P} \left[ || \widehat{\theta}_i - \theta^*||_{\infty} \geq 2^{-i}\right] \leq \frac{\delta}{2^i},
    \label{eqn:cordinates_guarantee}
\end{align}
where $\widehat{\theta}_i$ is the estimate of $\theta^*$ obtained by solving the least squares estimate using all random exploration samples until the beginning of phase $i$.
\label{lem:linear_reg_guarantee}
\end{lemma}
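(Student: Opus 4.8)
The plan is to reduce the claim to the single-shot least-squares guarantee of Lemma~\ref{lem:reg_bounds}, applied to the pooled set of random-exploration samples available at the start of phase $i$, and then to verify that the geometric structure of the phase lengths makes the required sample complexity met precisely from $i=4$ onwards.

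First I would count the samples. The random exploration block of phase $j$ contributes $6^j \lceil\sqrt{T_0}\rceil$ i.i.d.\ samples, and $\widehat{\theta}_i$ is built from the pooled explorations of phases $0,\dots,i-1$, so the number of samples it uses is
\begin{align*}
S_{i-1} = \lceil\sqrt{T_0}\rceil \sum_{j=0}^{i-1} 6^j = \lceil\sqrt{T_0}\rceil\,\frac{6^i-1}{5} \geq 6^{i-1}\lceil\sqrt{T_0}\rceil .
\end{align*}
These samples have arms drawn uniformly and independently (matching the sampling model of Lemma~\ref{lem:reg_bounds}) and rewards corrupted by independent sub-Gaussian noise; crucially, they are collected independently of the OFUL regret-minimization blocks, so the pooled design matrix is exactly of the form analysed in Lemma~\ref{lem:reg_bounds}.

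Next I would invoke Lemma~\ref{lem:reg_bounds} with $M = S_{i-1}$, target accuracy $\varepsilon = \varepsilon_i = 2^{-i}$, and failure probability $\delta_i = \delta/2^i$. This yields $\mathbb{P}[\|\widehat{\theta}_i - \theta^*\|_{\infty} \geq 2^{-i}] \leq \delta/2^i$ as soon as $S_{i-1} \geq d$ and
\begin{align*}
S_{i-1} \gtrsim \left(\frac{1}{\varepsilon_i^2} + d\right)\ln\!\left(\frac{1}{\delta_i}\right) = \left(4^i + d\right)\bigl(i\ln 2 + \ln(1/\delta)\bigr).
\end{align*}
So the whole proof reduces to checking this inequality for every $i\geq 4$, given the choice of $T_0$ in Equation~(\ref{eqn:T_0_defn}), which (cf.\ Remark~\ref{remark:choice_of_TO}) guarantees $\lceil\sqrt{T_0}\rceil \gtrsim d\ln(1/\delta)$ up to the eigenvalue constants $\lambda_{\min}^{(\cdot)},\lambda_{\max}^{(\cdot)}$.

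Finally I would substitute the lower bound $S_{i-1} \geq 6^{i-1}\lceil\sqrt{T_0}\rceil \gtrsim 6^{i-1} d\ln(1/\delta)$ and compare term by term. The terms $d\,i$, $d\ln(1/\delta)$, and $4^i\ln(1/\delta)$ on the right are dominated easily (the last because $6^{i-1}d \gtrsim 4^i$, i.e.\ $(3/2)^i d \gtrsim 6$). The binding term is $4^i\,i$: we need $6^{i-1} d\ln(1/\delta) \gtrsim 4^i i$, equivalently $\tfrac{1}{6}(3/2)^i\, d\ln(1/\delta) \gtrsim i$, and since $(3/2)^i$ grows geometrically while $i$ grows only linearly, this first holds at $i=4$ (where $(3/2)^4 = 5.06\ldots$), given $d\ln(1/\delta) = \Omega(1)$, and then persists for all larger $i$. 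The main obstacle is exactly this last verification: one must confirm that the geometric sample-growth rate $6^i$ strictly outpaces the $1/\varepsilon_i^2 = 4^i$ growth of the required sample complexity, and track the universal constants (together with the eigenvalue factors hidden in $T_0$) carefully enough to certify that $i\geq 4$ is the correct threshold rather than some larger phase index.
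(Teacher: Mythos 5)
Your proposal is correct and follows essentially the same route as the paper: reduce to the single-shot guarantee of Lemma~\ref{lem:reg_bounds} with accuracy $\varepsilon_i = 2^{-i}$ and slack $\delta_i = \delta/2^i$, count the pooled random-exploration samples $S_{i-1} = \lceil\sqrt{T_0}\rceil\,(6^i-1)/5$, and check that the geometric sample growth outpaces the required sample complexity. One point of comparison is worth recording, though. In its final verification the paper discards the factor $\sqrt{T_0} \gtrsim d\ln(1/\delta)$ from the available-sample side, bounding the requirement as $M_i \leq i\,4^i\lceil\sqrt{T_0}\rceil$ and then needing $(6^i-1)/5 \geq i\,4^i$ --- an inequality that, as the paper itself concedes, holds only for $i \geq 10$, so the paper's own proof does not certify the stated threshold $i \geq 4$ (and the rest of the paper indeed works with $i \geq 10$, e.g.\ via $i(\gamma) = \max\left(10, \log_2(1/\gamma)\right)$). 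Your term-by-term comparison instead retains the $d\ln(1/\delta)$ factor against the binding term $i\,4^i$, i.e.\ requires $(3/2)^i\, d\ln(1/\delta) \gtrsim 6i$, which can hold from $i=4$ onward provided $d\ln(1/\delta)$ exceeds a modest constant --- a dependence on unspecified universal constants that you correctly flag as the remaining obstacle. So your argument is, if anything, slightly sharper than the paper's: neither closes the $i \geq 4$ claim unconditionally, but yours isolates exactly the constant-tracking needed, whereas the paper's cruder bound provably cannot reach below $i = 10$. (Minor note: you also silently corrected the paper's arithmetic slip, since $\sum_{j=0}^{i-1} 6^j = (6^i-1)/5$, not $(6^i-1)/4$ as written there.)
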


\begin{proof}
The above lemma follows directly from Lemma \ref{lem:reg_bounds}. 
Lemma \ref{lem:reg_bounds} gives that if $\widehat{\theta}_i$ is formed by solving the least squares estimate with at-least $M_i := O \left( \left( 4^i + d\right)\ln \left( \frac{2^i}{\delta} \right) \right)$ samples, then the guarantee in Equation (\ref{eqn:cordinates_guarantee}) holds. 
However, as $T_0 = O \left( (d+1) \ln \left( \frac{2}{\delta} \right) \right)$, we have naturally that $M_i \leq 4^i i \sqrt{T_0}$.
The proof is concluded if we show that at the beginning of phase $i \geq 4$, the total number of random explorations performed by the algorithm exceeds 
$i4^i \lceil \sqrt{T_0} \rceil$.
Notice that at the beginning of any phase $i \geq 4$, the total number of random explorations that have been performed is 
\begin{align*}
    \sum_{j=0}^{i-1}6^i \lceil \sqrt{T_0} \rceil & = \lceil \sqrt{T_0} \rceil \frac{6^i-1}{4},\\
    &\geq i4^i \lceil \sqrt{T_0} \rceil,
\end{align*}
where the last inequality holds for all $i \geq 10$.
\end{proof}

The following corollary follows from a straightforward union bound.
\begin{corollary}
\begin{align*}
    \mathbb{P} \left[ \bigcap_{i \geq 4} || \left\{\widehat{\theta}_i - \theta^*||_{\infty} \leq 2^{-i} \right\} \right] \geq 1 - \delta.
\end{align*}
\label{cor:good_coordinate_behaviour}
\end{corollary}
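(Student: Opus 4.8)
The plan is to obtain the corollary as an immediate consequence of Lemma~\ref{lem:linear_reg_guarantee} via a union bound across phases. The lemma already supplies, for every fixed phase $i \geq 4$, the per-phase tail estimate $\mathbb{P}[\,\|\widehat{\theta}_i - \theta^*\|_\infty \geq 2^{-i}\,] \leq \delta/2^i$, so the remaining task is purely to upgrade these individual guarantees into one event that holds simultaneously for all $i \geq 4$.

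First I would pass to complements. Define the per-phase failure event $B_i := \{\,\|\widehat{\theta}_i - \theta^*\|_\infty > 2^{-i}\,\}$ for each $i \geq 4$; then $B_i \subseteq \{\|\widehat{\theta}_i - \theta^*\|_\infty \geq 2^{-i}\}$, so Lemma~\ref{lem:linear_reg_guarantee} gives $\mathbb{P}[B_i] \leq \delta/2^i$. The event whose probability we must lower-bound is exactly $\bigcap_{i \geq 4} B_i^c$, whose complement is $\bigcup_{i \geq 4} B_i$.

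Next I would apply countable subadditivity and evaluate the resulting geometric series,
\begin{align*}
\mathbb{P}\Big[\bigcup_{i \geq 4} B_i\Big] \;\leq\; \sum_{i \geq 4} \mathbb{P}[B_i] \;\leq\; \sum_{i \geq 4} \frac{\delta}{2^i} \;=\; \frac{\delta}{8} \;\leq\; \delta,
\end{align*}
and then take complements to conclude $\mathbb{P}\big[\bigcap_{i \geq 4} B_i^c\big] \geq 1 - \delta/8 \geq 1 - \delta$, which is precisely the claim. There is no genuine obstacle here: the argument is a one-line union bound, and the only point worth flagging is that the geometric decay $\delta/2^i$ of the per-phase slack in Lemma~\ref{lem:linear_reg_guarantee} is exactly what makes the sum over the (infinitely many) phases finite, and in fact bounded by $\delta$, so that the infinite intersection still carries probability at least $1 - \delta$. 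This is the role for which the phase-dependent confidence levels $\delta_i = \delta/2^i$ were introduced in Algorithm~\ref{algo:main_algo_dimensions_unknown}.
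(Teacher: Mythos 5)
Your proof is correct and follows essentially the same route as the paper's: a union bound over the failure events $\{\|\widehat{\theta}_i - \theta^*\|_\infty \geq 2^{-i}\}$ for $i \geq 4$, each controlled by Lemma~\ref{lem:linear_reg_guarantee}, followed by summing the geometric series $\sum_{i \geq 4} \delta/2^i$. The only (immaterial) difference is that you evaluate the tail sum exactly as $\delta/8$, while the paper loosely bounds it by $\sum_{i \geq 2} \delta/2^i = \delta/2$; both give the claimed $1-\delta$.
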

\begin{proof}
This follows from a simple union bound as follows.
\begin{align*}
        &\mathbb{P} \left[ \bigcap_{i \geq 4} \left\{||  \widehat{\theta}_i - \theta^*||_{\infty} \leq 2^{-i} \right\} \right] \\
        &= 1 - \mathbb{P} \left[ \bigcup_{i \geq 4}\left\{|| \widehat{\theta}_i - \theta^*||_{\infty} \geq 2^{-i} \right\} \right],\\
        &\geq 1 - \sum_{i \geq 4} \mathbb{P} \left[ ||\widehat{\theta}_i-\theta^*||_{\infty} \geq 2^{-i} \right],\\
        &\geq 1 - \sum_{i\geq 4} \frac{\delta}{2^i},\\
        &\geq 1 - \sum_{i\geq 2}\frac{\delta}{2^i},\\
        &=1-\frac{\delta}{2}.
\end{align*}
\end{proof}

We are now ready to conclude the proof of Theorem \ref{thm:adaptive_dimension}.

\begin{proof}[Proof of Theorem \ref{thm:adaptive_dimension}]

We know from Corollary \ref{cor:good_coordinate_behaviour}, that with probability at-least $1 - \delta$, for all phases $i \geq 10$, we have $||\widehat{\theta}_i-\theta^*||_{\infty} \leq 2^{-i}$.
Call this event $\mathcal{E}$. 
Now, consider the phase $i(\gamma):= \max \left(10,\log_{2} \left( \frac{1}{\gamma} \right)\right)$. 
Now, when event $\mathcal{E}$ holds, then for all phases $i \geq i(\gamma)$, $\mathcal{D}_i$ is the correct set of $d^*$ non-zero coordinates of $\theta^*$. Thus, with probability at-least $1-\delta$, the total regret upto time $T$ can be upper bounded as follows
\begin{align}
    R_T  & \leq \sum_{j=0}^{i(\gamma)-1} \left( 36^i T_0 + 6^i \lceil \sqrt{T_0} \rceil \right) \notag \\
    & + \sum_{j \geq i(\gamma)}^{\bigg\lceil \log_{36} \left( \frac{T}{T_0} \right) \bigg\rceil} \text{Regret}(\text{OFUL}(1,\delta_i;36^i T_0) \nonumber \\
    & + \sum_{j=i(\gamma)}^{\bigg\lceil \log_{36} \left( \frac{T}{T_0} \right) \bigg\rceil} 6^j \lceil \sqrt{T_0} \rceil.
    \label{eqn:regret_decomposition_dimension}
\end{align}
The term $\text{Regret(OFUL}(L,\delta,T)$ denotes the regret of the OFUL algorithm \cite{oful}, when run with parameters $L \in \mathbb{R}_+$, such that $\|\theta^*\| \leq L$, and $\delta \in (0,1)$ denotes the probability slack and $T$ is the time horizon.
Equation (\ref{eqn:regret_decomposition_dimension}) follows, since the total number of phases is at-most $\bigg\lceil \log_{36} \left( \frac{T}{T_0} \right) \bigg\rceil$.
Standard result from \cite{oful} give us that, with probability at-least $1-\delta$, we have
\begin{align*}
  \text{Regret}(\text{OFUL}(1,\delta; T) \leq 
  4 \sqrt{Td^* \ln \left( 1 + \frac{T}{d^*} \right)} \\
  \times \left(1 + \sigma \sqrt{2 \ln \left( \frac{1}{\delta}\right)+d^*\ln \left( 1 + \frac{T}{d} \right)} \right).
\end{align*}
Thus, we know that with probability at-least $1 - \sum_{i \geq 4}\delta_i \geq 1-\frac{\delta}{2}$, for all phases $i \geq i(\gamma)$, the regret in the exploration phase satisfies
\begin{align}
    & \text{Regret}(\text{OFUL}(1,\delta_i;36^i T_0)  \leq  4\sqrt{d^*36^iT_0 \ln \left(1+ \frac{36^iT_0}{d^*}\right)} \nonumber \\
    & \times \left(1 + \sigma\sqrt{2 \ln \left( \frac{2^i}{\delta}\right) + d^* \ln \left( 1+\frac{36^iT_0}{d^*}\right)} \right).
    \label{eqn:intermediate1}
\end{align}

In particular, for all phases $i \in [i(\gamma), \lceil \log_{36}\left( \frac{T}{T_0} \right)]$, with probability  at-least $1-\frac{\delta}{2}$, we have 
\begin{align}
     & \text{Regret}(\text{OFUL}(1,\delta_i;36^i T_0) \leq  4\sqrt{d^*36^iT_0 \ln \left(1+ \frac{T}{d^*}\right)} \nonumber \\
     & \times \left(1 + \sigma \sqrt{2 \ln \left( \frac{T}{T_0\delta} \right) + d^* \ln \left( 1+\frac{T}{d^*}\right)} \right), \nonumber\\
     &= \mathcal{C}(T,\delta,d^*) \sqrt{36^iT_0},
     \label{eqn:intermediate_eqn1}
\end{align}
where the constant captures all the terms that only depend on $T$, $\delta$ and $d^*$. We can write that constant as 
\begin{align*}
   \mathcal{C}(T,\delta,d^*) = 4 \sqrt{d^*\ln \left( 1 + \frac{T}{d^*} \right)} \\
   \times \left(1 + \sigma \sqrt{2 \ln \left( \frac{T}{T_0\delta} \right) + d^* \ln \left( 1+\frac{T}{d^*}\right)} \right).
\end{align*}

Equation (\ref{eqn:intermediate_eqn1}) follows, by substituting $i \leq \log_{36} \left( \frac{T}{T_0}\right)$ in all terms except the first $36^i$ term in Equation (\ref{eqn:intermediate1}).
As Equations (\ref{eqn:intermediate_eqn1}) and (\ref{eqn:regret_decomposition_dimension}) each hold with probability at-least $1-\frac{\delta}{2}$, we can combine them to get that with probability at-least $1-\delta$,
\begin{align*}
    R_T &\leq 2T_036^{i(\gamma)} + \sum_{j=0}^{\log_{36}\left( \frac{T}{T_0} \right)+1} \mathcal{C}(T,\delta,d^*) \sqrt{36^jT_0} \\
    & +  \lceil\sqrt{T_0} \rceil 6^{\log_{36} \left( \frac{T}{T_0}\right)} ,\\
    &\leq \mathcal {O} \left(T_036^{i(\gamma)} +  \sqrt{T} + \mathcal{C}(T,\delta,d^*)\sum_{j=0}^{\log_{36}\left( \frac{T}{T_0} \right)+1}\sqrt{36^j T_0} \right),\\
    &\stackrel{(a)}{\leq} \mathcal{O} \left( T_0 \frac{2}{{\gamma^{5.18}}} + \sqrt{T} + \sqrt{T}\mathcal{C}(T,\delta,d^*) \right),\\
    &= \mathcal{O} \left( \frac{d^2}{{\gamma^{5.18}}} \ln^2 \left( \frac{1}{\delta} \right) \right) + \widetilde{O} \left( d^*\sqrt{ T \ln\left( \frac{1}{\delta} \right)} \right).
\end{align*}
Step $(a)$ follows from $36 \leq 2^{5.18}$.

\end{proof}

\section{{\ttfamily ALB-Dim} for Stochastic Contextual Bandits with Finite Arms}
\label{appendix-comparision}

\subsection{ALB-Dim Algorithm for the Finite Armed Case}

The algorithm given in Algorithm \ref{algo:main_algo_dimensions_foster}  is identical to the earlier Algorithm \ref{algo:main_algo_dimensions_unknown}, except in Line $8$, this algorithm uses {\ttfamily SupLinRel} of \cite{chu2011contextual} as opposed to OFUL used in the previous algorithm.
In practice, one could also use {\ttfamily LinUCB} of \cite{chu2011contextual} in place of {\ttfamily SupLinRel}. However, we choose to present the theoretical argument using {\ttfamily SupLinRel}, as unlike {\ttfamily LinUCB}, has an explicit closed form regret bound (see \cite{chu2011contextual}).
The pseudocode is provided in Algorithm \ref{algo:main_algo_dimensions_foster}.

In phase $i \in \mathbb{N}$, the {\ttfamily SupLinRel} algorithm is instantiated with input parameter $36^iT_0$ denoting the time horizon, slack parameter $\delta_i \in (0,1)$, dimension $d_{\mathcal{M}_i}$ and feature scaling $b(\delta)$. We explain the role of these input parameters. The dimension ensures that {\ttfamily SupLinRel} plays from the restricted dimension $d_{\mathcal{M}_i}$. The feature scaling implies that when a context $x \in \mathcal{X}$ is presented to the algorithm, the set of $K$ feature vectors, each of which is $d_{\mathcal{M}_i}$ dimensional are $\frac{\phi^{d_{\mathcal{M}_i}}(x,1)}{b(\delta)},\cdots, \frac{\phi^{d_{\mathcal{M}_i}}(x,K)}{b(\delta)}$. The constant $b(\delta) \coloneqq O \left( \tau \sqrt{\log \left( \frac{TK}{\delta} \right)} \right)$ is chosen such that 
\begin{align*}
    \mathbb{P}\left[\sup_{t \in [0,T],a \in \mathcal{A}} \| \phi^M(x_t,a) \|_2 \geq b(\delta) \right] \leq \frac{\delta}{4}.
\end{align*}
Such a constant exists since $(x_t)_{t \in [0,T]}$ are i.i.d. and $\phi^M(x,a)$ is a sub-gaussian random variable with parameter $4\tau^2$, for all $a \in \mathcal{A}$. Similar idea was used in \cite{foster_model_selection}.

\begin{algorithm}[t!]
  \caption{Adaptive Linear Bandit (Dimension) with Finitely Many arms}
  \begin{algorithmic}[1]
 \STATE  \textbf{Input:} Initial Phase length $T_0$ and slack $\delta > 0$.
 \STATE $\widehat{\beta}_0 = \mathbf{1}$, $T_{-1}=0$
 \FOR {Each epoch $i \in \{0,1,2,\cdots\}$}
 \STATE $T_i = 36^{i} T_0$, $\quad$  $\varepsilon_i \gets \frac{1}{2^{i}}$, $\quad$  $\delta_i \gets \frac{\delta}{2^{i}}$
 \STATE $\mathcal{D}_i := \{i : |\widehat{\beta}_i| \geq \frac{\varepsilon_i}{2} \}$
 \STATE $\mathcal{M}_i \coloneqq \inf \{ m : d_m \geq \max \mathcal{D}_i \}$.
 \FOR {Times $t \in \{T_{i-1}+1,\cdots,T_i\}$}
 \STATE Play according to SupLinRel of \cite{linRel} with time horizon of $36^i T_0$ with parameters $\delta_i \in (0,1)$, dimension $d_{\mathcal{M}_i}$ and feature scaling $b(\delta) \coloneqq  O \left( \tau \sqrt{\log \left( \frac{TK}{\delta} \right)} \right)$.
 \ENDFOR
 \FOR {Times $t \in \{T_i+1,\cdots,T_i + 6^i\sqrt{T_0}\}$}
 \STATE Play an arm from the action set ${\mathcal{A}}$ chosen uniformly and independently at random.
 \ENDFOR
 \STATE $\boldsymbol{\alpha}_i \in \real^{S_i \times d}$ with each row being  the arm played during all random explorations in the past.
 \STATE $\boldsymbol{y}_i \in \real^{S_i}$  with $i$-th entry being the observed reward at the $i$-th random exploration in the past
 \STATE $\widehat{\beta}_{i+1} \gets (\boldsymbol{\alpha}_i^T\boldsymbol{\alpha}_i)^{-1}\boldsymbol{\alpha}_i\mathbf{y}_i$, is a $d$ dimensional vector
 \ENDFOR
  \end{algorithmic}
  \label{algo:main_algo_dimensions_foster}
\end{algorithm}

\subsection{Regret Guarantee for Algorithm \ref{algo:main_algo_dimensions_foster}}

In order to specify a regret guarantee, we will need to specify the value of $T_0$. 
We do so as before. 
For any $N$, denote by $\lambda_{max}^{(N)}$ and $\lambda_{min}^{(N)}$ to be the maximum and minimum eigen values of the following matrix: $\boldsymbol{\Sigma}^N := \mathbb{E} \left[ \frac{1}{K} \sum_{j=1}^K \sum_{t=1}^N \phi^M(x_t,j)\phi^M(x_t,j)^T \right]$, where the expectation is with respect to $(x_t)_{t \in [T]}$ which is an i.i.d. sequence with distribution $\mathcal{D}$. First, given the distribution of $x \sim \mathcal{D}$, one can (in principle) compute $\lambda_{max}^{(N)}$ and $\lambda_{min}^{(N)}$ for any $N \geq 1$. Furthermore, from the assumption on $\mathcal{D}$, $\lambda_{min}^{(N)} = \widetilde{O} \left( \frac{1}{\sqrt{d}} \right) > 0$ for all $N \geq 1$.
Choose $T_0 \in \mathbb{N}$ to be the smallest integer such that
\begin{align}
    \sqrt{T_0} \geq b(\delta)\max \bigg ( \frac{32\sigma^2}{(\lambda_{min}^{(\lceil \sqrt{T_0} \rceil)})^2}\ln (2d/\delta), \notag \\
    \frac{4}{3} \frac{(6\lambda_{max}^{(\lceil \sqrt{T_0}\rceil)}+\lambda_{min}^{(\lceil \sqrt{T_0}\rceil)})(d+\lambda_{max}^{(\lceil \sqrt{T_0}\rceil)})}{(\lambda_{min}^{(\lceil \sqrt{T_0}\rceil)})^2}\ln ( 2d/\delta) \bigg ).
    \label{eqn:T_0_defn_foster}
\end{align}
As before, it is easy to see that 
\begin{align*}
    T_0 = O \left( d^2 \ln^2 \left( \frac{1}{\delta}\right) \tau^2 \ln \left( \frac{TK}{\delta}\right) \right).
\end{align*}
Furthermore, following the same reasoning as in Lemmas \ref{lem:linear_reg_guarantee} and \ref{lem:reg_bounds}, one can verify that for all $i \geq 4$, $\mathbb{P} \left[ \| \widehat{\beta}_{i-1} - \beta^* \|_{\infty} \geq 2^{-i} \right] \leq \frac{\delta}{2^i} $.

\begin{theorem}
Suppose Algorithm \ref{algo:main_algo_dimensions_foster} is run with input parameters $\delta \in (0,1)$, and $T_0$ as given in Equation (\ref{eqn:T_0_defn_foster}), then with probability at-least $1-\delta$, the regret after a total of $T$ arm-pulls satisfies
\begin{align*}
R_T \leq C T_0  \frac{1}{{\gamma^{5.18}}}  +  (1 + \ln(2K T \ln T))^{3/2} \sqrt{T d_{m^*}} + \sqrt{T}.
\end{align*}
The parameter $\gamma > 0$ is the minimum  magnitude of the non-zero coordinate of $\beta^*$, i.e., $\gamma = \min \{|\beta^*_i| : \beta^*_i \neq 0 \}$.
\label{thm:adaptive_dimension_foster}
\end{theorem}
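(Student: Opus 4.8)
The plan is to mirror the proof of Theorem~\ref{thm:adaptive_dimension} almost verbatim, replacing the {\ttfamily OFUL} regret bound by the {\ttfamily SupLinRel} regret bound of \cite{chu2011contextual}, and carrying the feature-scaling factor $b(\delta)$ through the least-squares estimates. The two ingredients are (i) a support-recovery guarantee ensuring that from some phase onward the algorithm runs the base bandit in the correct dimension $d_{m^*}$, and (ii) a per-phase regret decomposition whose geometric summation yields the $\sqrt{T d_{m^*}}$ leading term.

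First I would establish support recovery. As already noted in the setup, the choice of $T_0$ in Equation~(\ref{eqn:T_0_defn_foster}), together with the sub-Gaussian assumption on $\phi^M(x,a)$ (parameter $\tau^2$) and the least-squares concentration of Lemmas~\ref{lem:reg_bounds} and \ref{lem:linear_reg_guarantee}, gives for every $i \ge 4$ that $\mathbb{P}[\|\widehat{\beta}_{i} - \beta^*\|_\infty \ge 2^{-i}] \le \delta/2^i$. A union bound exactly as in Corollary~\ref{cor:good_coordinate_behaviour} yields an event $\mathcal{E}$ of probability at least $1-\delta/2$ on which $\|\widehat{\beta}_i - \beta^*\|_\infty \le 2^{-i}$ for all $i \ge 4$. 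Setting $i(\gamma) := \max(10, \log_2(1/\gamma))$, on $\mathcal{E}$ the thresholding in Line~5 correctly discards every vanishing coordinate and retains every non-zero one for all $i \ge i(\gamma)$; hence $\mathcal{D}_i$ equals the true support and $\mathcal{M}_i = m^*$, so {\ttfamily SupLinRel} is instantiated in the correct dimension $d_{m^*}$ in all these phases.

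Next I would decompose the regret into three contributions, as in Equation~(\ref{eqn:regret_decomposition_dimension}): the total length of phases $0,\dots,i(\gamma)-1$ (during which I bound per-step regret trivially by a constant), the sum over phases $i \ge i(\gamma)$ of the {\ttfamily SupLinRel} regret on horizon $36^i T_0$ in dimension $d_{m^*}$, and the total length of all random-exploration blocks $\sum_i 6^i\lceil\sqrt{T_0}\rceil$. The first term is $O(36^{i(\gamma)} T_0) \le O(T_0/\gamma^{5.18})$, using $36 \le 2^{5.18}$; the exploration blocks sum geometrically to $O(6^{\log_{36}(T/T_0)}\sqrt{T_0}) = O(\sqrt{T})$, giving the last two terms of the bound. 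For the middle term, I would invoke the {\ttfamily SupLinRel} guarantee of \cite{chu2011contextual}, which on the scaled features (norm $\le 1$ after dividing by $b(\delta)$) gives a per-phase regret $O\big((1+\ln(2KT\ln T))^{3/2}\sqrt{36^i T_0\, d_{m^*}}\big)$; summing $\sqrt{36^i T_0}$ over $i$ up to $\lceil\log_{36}(T/T_0)\rceil$ is again a geometric series dominated by its last term $\sqrt{T}$, producing $(1+\ln(2KT\ln T))^{3/2}\sqrt{T d_{m^*}}$. Combining the three pieces under $\mathcal{E}$ and the {\ttfamily SupLinRel} high-probability events via a final union bound yields the claimed $1-\delta$ probability.

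The main obstacle is the bookkeeping around the feature scaling $b(\delta)$ and the compounding of the high-probability events. The scaling by $b(\delta) = O(\tau\sqrt{\log(TK/\delta)})$ is what makes $\|\phi^M(x_t,a)/b(\delta)\|_2 \le 1$ hold uniformly over the horizon (with probability $1-\delta/4$), which is precisely the normalization {\ttfamily SupLinRel} requires; one must verify that the effective unknown parameter $b(\delta)\beta^*$ still has controlled norm so that the cited bound applies, and check that these logarithmic factors are absorbed into the $(1+\ln(2KT\ln T))^{3/2}$ term rather than inflating the $\sqrt{d_{m^*} T}$ scaling. The second delicate point is that {\ttfamily SupLinRel} is restarted fresh each phase, so that its regret guarantee applies independently per phase on the restricted coordinates, whereas support recovery relies on the \emph{accumulated} random-exploration samples; I would keep these two sample streams disjoint and union bound the per-phase slacks $\delta_i = \delta/2^i$ together with $\mathcal{E}$ to obtain the overall $1-\delta$ event.
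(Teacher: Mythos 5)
Your proposal is correct and follows essentially the same route as the paper's own proof: support recovery via the least-squares concentration of Lemmas~\ref{lem:reg_bounds} and \ref{lem:linear_reg_guarantee} plus a union bound to define the good event $\mathcal{E}$ (including the feature-norm bound $b(\delta)$), then the three-part regret decomposition with trivial bounding of phases before $i(\gamma)$, the per-phase {\ttfamily SupLinRel} bound from \cite{linRel} on the later phases, and geometric summation with $36 \leq 2^{5.18}$. Your flagged concerns about the $b(\delta)$ scaling and the disjointness of the exploration and bandit sample streams are exactly the bookkeeping the paper performs, so no gap remains.
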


In order to parse the above theorem, the following corollary is presented.

\begin{corollary}
Suppose Algorithm \ref{algo:main_algo_dimensions_foster} is run with input parameters $\delta \in (0,1)$, and $T_0 = \widetilde{O} \left(d^2\ln^2 \left( \frac{1}{\delta} \right) \right)$ given in Equation (\ref{eqn:T_0_defn_foster}) , then with probability at-least $1-\delta$, the regret after $T$ times satisfies
\begin{align*}
        R_T &\leq O \left( \frac{d^2}{{\gamma^{5.18}}} \ln^2 ( d/\delta) \tau^2 \ln \left( \frac{TK}{\delta}\right) \right)  + \widetilde{O} (  \sqrt{ T d^*_m}).
\end{align*}
\end{corollary}

\begin{proof}[Proof of Theorem \ref{thm:adaptive_dimension_foster}]

The proof proceeds identical to that of Theorem \ref{thm:adaptive_dimension}. Observe from Lemmas \ref{lem:reg_bounds} and \ref{lem:linear_reg_guarantee}, that the choice of $T_0$ is such that for all phases $i \geq 1$, the estimate $\mathbb{P}\left[ \| \widehat{\beta}_{i-1} - \beta^* \|_{\infty} \geq 2^{-i} \right] \leq \frac{\delta}{2^i}$.
Thus, from an union bound, we can conclude that 
\begin{align*}
    \mathbb{P} \left[ \cup_{i \geq 4} \| \widehat{\beta}_{i-1} - \beta^* \|_{\infty} \geq 2^{-i}  \right] \leq \frac{\delta}{4}.
\end{align*}
Thus at this stage,  with probability at-least $1-\frac{\delta}{2}$, the following events holds.
\begin{itemize}
    \item $\sup_{t \in [0,T],a \in \mathcal{A}} \| \phi^M(x_t,a) \|_2 \leq b(\delta)$
    \item $ \| \widehat{\beta}_{i-1} - \beta^* \|_{\infty} \leq 2^{-i}$, for all $i \geq 10$.
\end{itemize}
Call these events as $\mathcal{E}$.
As before, let $\gamma > 0$ be the smallest value of the non-zero coordinate of $\beta^*$. Denote by the phase $i(\gamma) \coloneqq \max \left( 10, \log_{2} \left( \frac{2}{\gamma} \right)\right)$. Thus, under the event $\mathcal{E}$, for all phases $i \geq i(\gamma)$, the dimension $d_{\mathcal{M}_i} = d_m^* $, i.e., the SupLinRel is run with the correct set of dimensions.  

It thus remains to bound the error by summing over the phases, which is done identical to that in Theorem \ref{thm:adaptive_dimension}.
With probability, at-least $1-\frac{\delta}{2} - \sum_{i \geq 4}\delta_i \geq 1-\delta $,
\begin{align*}
R_T & \leq \sum_{j=0}^{i(\gamma)-1} \left( 36^jT_0 + 6^j \sqrt{T_0} \right) \\
&+ \sum_{j=i(\gamma)}^{\bigg\lceil \log_{36} \left( \frac{T}{T_0} \right) \bigg\rceil} \text{Regret(SupLinRel)}(36^iT_0, \delta_i,d_{\mathcal{M}_i,b(\delta)}) \\
& + \sum_{j=i(\gamma)}^{\bigg\lceil \log_{36} \left( \frac{T}{T_0} \right) \bigg\rceil} 6^j\sqrt{T_0},
\end{align*}
where $\text{Regret(SupLinRel)}(36^iT_0, \delta_i,d_{\mathcal{M}_i,b(\delta)}) \leq  C (1 + \ln(2K 36^iT_0 \ln 36^iT_0))^{3/2}\sqrt{36^iT_0 d_{\mathcal{M}_i}} + 2\sqrt{36^iT_0}$. This expression follows from Theorem $6$ in \cite{linRel}. We now use this to bound each of the three terms in the display above.
Notice from straightforward calculations that the first term is bounded by $2T_036^{i(\gamma)}$ and the last term is bounded above by $36 \lceil \sqrt{T_0} \rceil 6^{\log_{36} \left( \frac{T}{T_0} \right)}$
respectively. We now bound the middle term as
\begin{align*}
&\sum_{j=i(\gamma)}^{\bigg\lceil \log_{36} \left( \frac{T}{T_0} \right) \bigg\rceil} \text{Reg(SupLinRel)}(36^jT_0, \delta_i,d_{m}^*,b(\delta))  \\
&\leq b(\delta) \bigg( \sum_{j=i(\gamma)}^{\bigg\lceil \log_{36} \left( \frac{T}{T_0} \right) \bigg\rceil} C (1 + \ln(2K 36^iT_0 \ln 36^iT_0))^{3/2} \\
& \quad  \sqrt{36^iT_0 d_{\mathcal{M}_i}} + 2\sqrt{36^iT_0} \bigg).
 \end{align*}
 The first summation can be bounded as 
 \begin{align*}
   & \sum_{j=i(\gamma)}^{\bigg\lceil \log_{36} \left( \frac{T}{T_0} \right) \bigg\rceil} C (1 + \ln(2K 36^iT_0 \ln 36^iT_0))^{3/2}\sqrt{36^iT_0 d_{\mathcal{M}_i}} \\
   &\leq \sum_{j=i(\gamma)}^{\bigg\lceil \log_{36} \left( \frac{T}{T_0} \right) \bigg\rceil} C (1 + \ln(2K T \ln T))^{3/2}\sqrt{36^iT_0 d_{m}^*},\\
   &= C_1 (1 + \ln(2K T \ln T))^{3/2} \sqrt{T d_m^*},
 \end{align*}
 and the second by 
 \begin{align*}
    \sum_{j=i(\gamma)}^{\bigg\lceil \log_{36} \left( \frac{T}{T_0} \right) \bigg\rceil} 2\sqrt{36^iT_0} \leq C_1  \sqrt{T}.
 \end{align*}
 
 Thus, with probability at-least $1-\delta$, the regret of Algorithm \ref{algo:main_algo_dimensions_foster} satisfies
 \begin{align*}
R_T \leq 2T_0 36^{i(\gamma)} +C (1 + \ln(2K T \ln T))^{3/2} \sqrt{T d_m^*} + C_2 \sqrt{T}, 
 \end{align*}
 where $i(\gamma) \coloneqq \max \left( 10, \log_{2} \left( \frac{2}{\gamma} \right) \right)$. Thus,

     \begin{align*}
R_T \leq C T_0  \frac{2}{{\gamma^{5.18}}}  + C (1 + \ln(2K T \ln T))^{3/2} \sqrt{T d_m^*} + C_1 \sqrt{T},
 \end{align*} 
as $36 \leq 2^{5.18}$

\end{proof}
 \section{Numerical Experiments}
\label{sec:experiments}

\begin{figure*}[h!]
    \centering
    \subfigure[$d^*=20$, $d=500$ ]{{\includegraphics[height = 4cm,width=4cm]{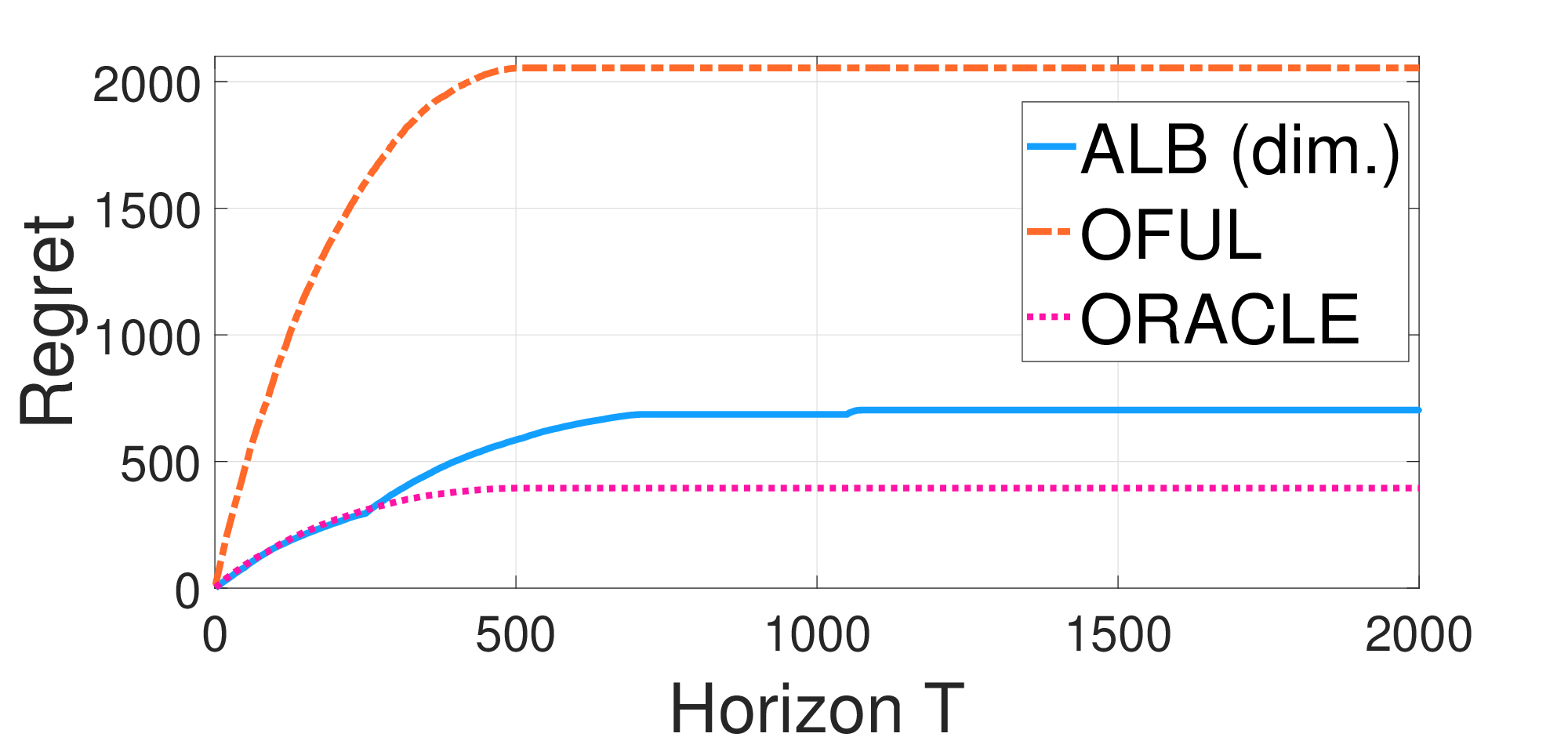} }}
    \subfigure[$d^*=20$, $d=200$ ]{{\includegraphics[height = 4cm,width=4cm]{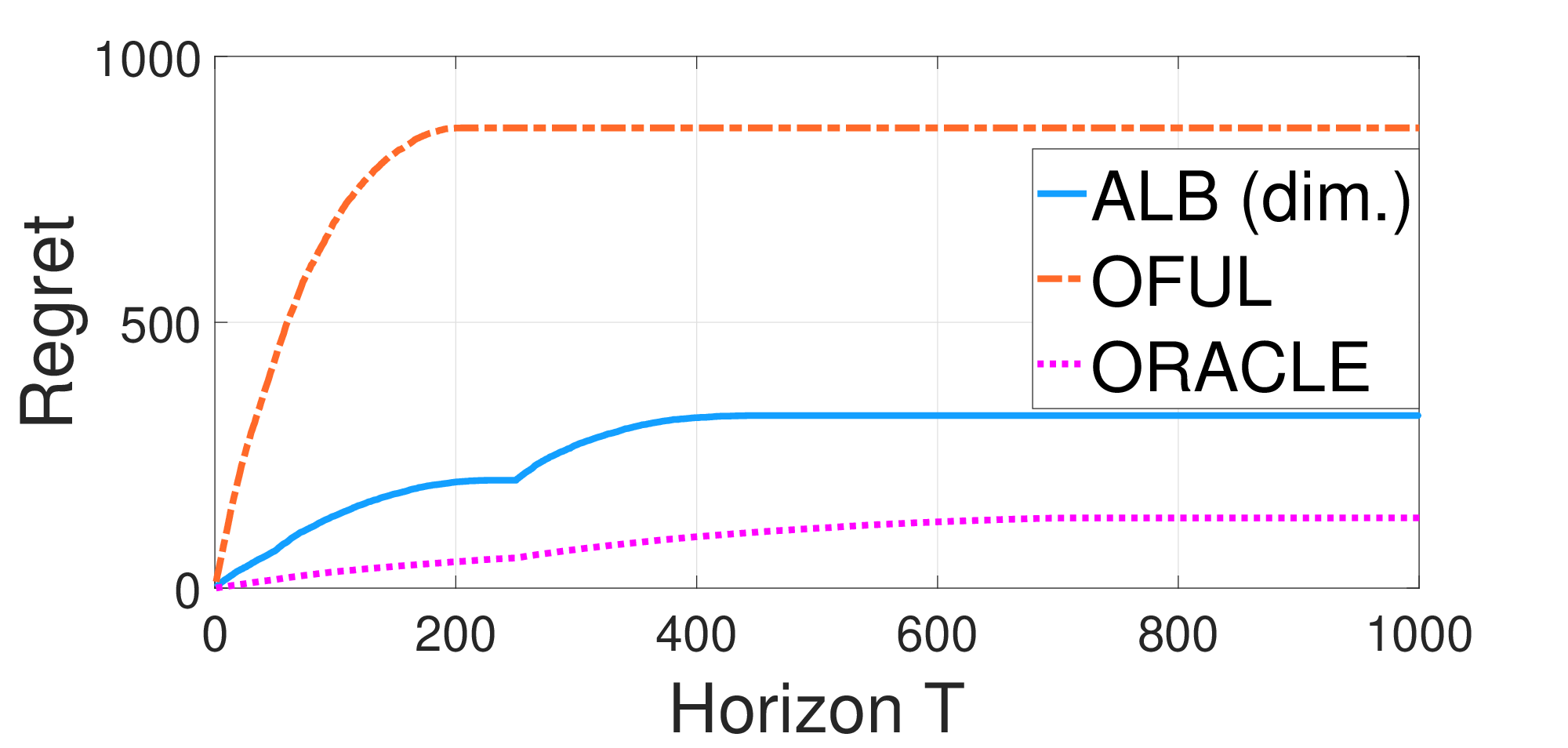} }}
    \subfigure[Dimension refinement ]{{\includegraphics[height = 4cm,width=4cm]{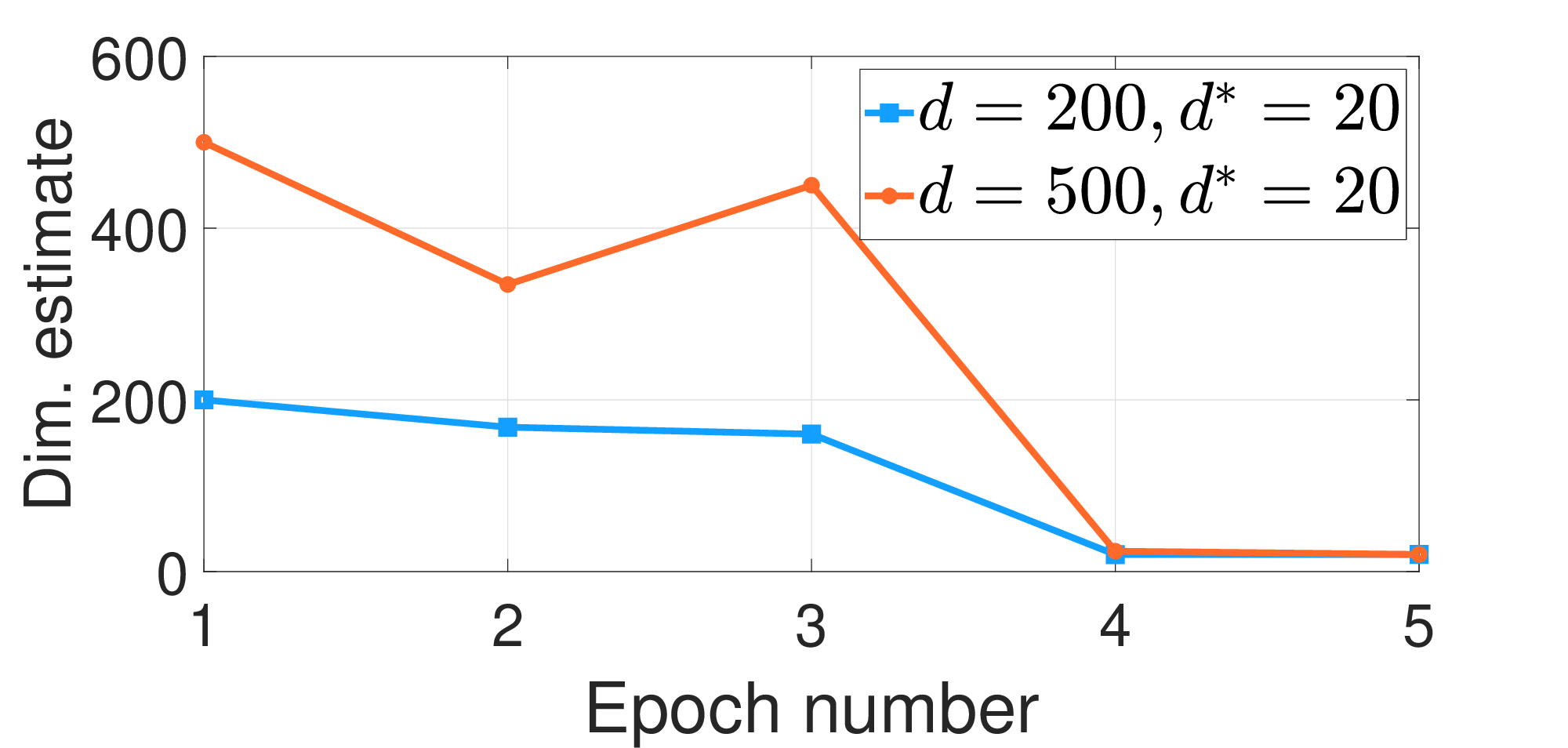} }}
    
    \subfigure[Comparison I ]{{\includegraphics[height = 4.2cm,width=4.2cm]{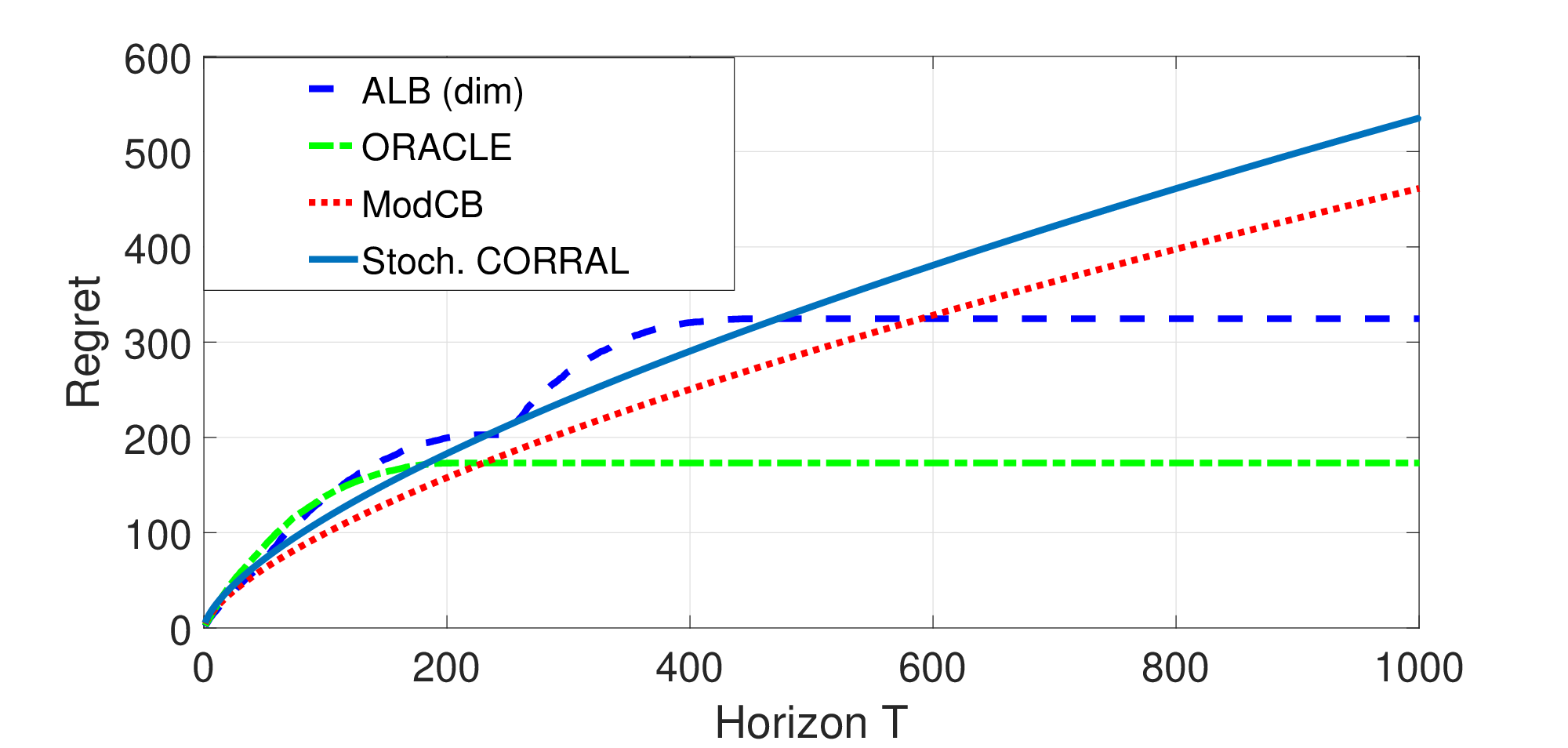} }}
    \subfigure[Comparison II ]{{\includegraphics[height = 4.2cm,width=4.2cm]{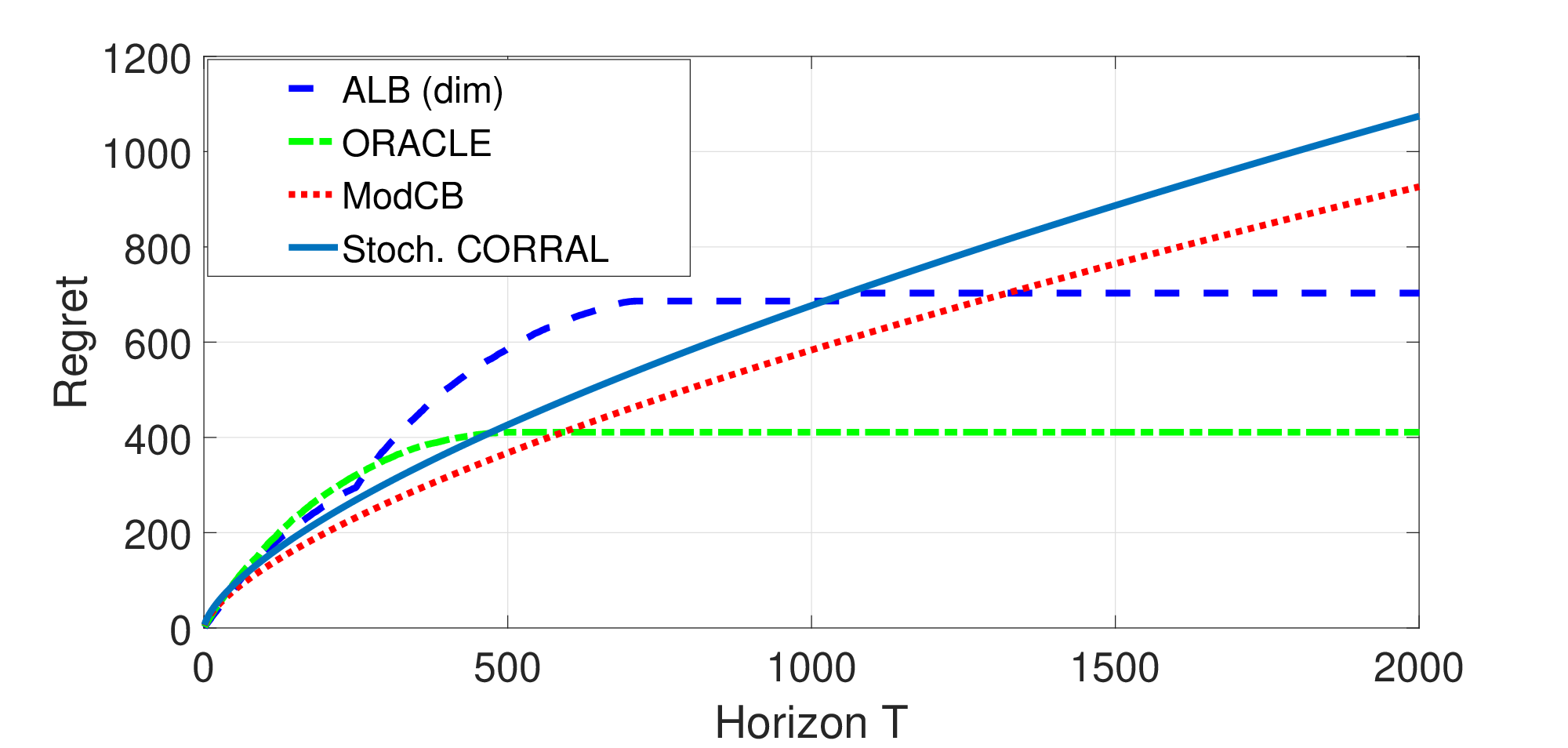} }}
    \caption{Synthetic experiments, validating the effectiveness of Algorithm~\ref{algo:main_algo_dimensions_unknown} and comparisons with several baselines. All the results are averaged over $25$ trials. }%
    \label{fig:experiments}
\end{figure*}
In this section we will verify the theoretical findings. We concentrate on the linear contextual bandit setup. We compare {\ttfamily ALB-Dim} with the (non-adaptive) OFUL algorithm of \cite{oful} and an \emph{oracle} that knows the problem complexity apriori. The oracle just runs OFUL with the known problem complexity. At each round of the learning algorithm, we sample the context vectors from a $d$-dimensional standard Gaussian, $\mathcal{N}(0,I_d)$. The additive noise to be zero-mean Gaussian random variable with variance $0.5$.

In panel (a)-(c), we compare the performance of {\ttfamily ALB-Dim} with OFUL (\cite{oful}) and an \emph{oracle} who knows the true support of $\theta^*$ apriori. 
For computational ease, we set $\varepsilon_i = 2^{-i}$ in simulations.
We select $\theta^*$ to be $d^* =20$-sparse, with the smallest non-zero component, $\gamma = 0.12$. We have $2$ settings: (i) $d = 500$ and (ii) $d = 200$. In panel (d) and (e), we observe a huge gap in cumulative regret between {\ttfamily ALB-Dim} and OFUL, thus showing the effectiveness of dimension adaptation. In panel (c), we plot the successive dimension refinement over epochs. We observe that within $4-5$ epochs, {\ttfamily ALB-Dim} finds the sparsity of $\theta^*$.

\paragraph*{Comparison of ALB (dim):} When $\theta^*$ is sparse, we compare ALB-Dim with $3$ baselines: (i) the ModCB algorithm of \cite{foster_model_selection} (ii) the Stochastic Corral algorithm of \cite{aldo-corral} and (iii) an oracle which knows the support of $\theta^*$. We select $\theta^*$ to be $d^* =20$ sparse, with dimension $d =200$ and $d=500$. The smallest non-zero component of $\theta^*$ is $0.12$. For ModCB, we use ILOVETOCONBANDITS algorithm, similar to \cite{agarwal2014taming}. We select the cardinality of action set as $2$ and select the sub-Gaussian parameter of the embedding as unity. 
In Figures 1(d) and 1(e), we observe that, the regret of ALB (dim) is better than ModCB and Stochastic Corral. The theoretical regret bound for ModCB scales as $\mathcal{O}(T^{2/3})$ (which is much larger than the ALB-Dim algorithm we propose), and Figure 1(c), validates this. The Stochastic Corral algorithm treats the base algorithms as bandit arms (with bandit feedback), as opposed to ALB-Dim which, at each arm-pull, updates the information about all the base algorithms. Thus, (Figs $1(d)$, $1(e)$), ALB-Dim has a superior performance compared to Stochastic Corral.

\ifCLASSOPTIONcaptionsoff
  \newpage
\fi

\bibliographystyle{IEEEtran}
\bibliography{ref-model-selection}

\begin{IEEEbiographynophoto}{Avishek Ghosh} (Ph.D UC Berkeley, 2021)
is an Assistant Professor at the department of Systems and Control Engg. and The Centre for Machine Intelligence and Data Science at IIT Bombay. Previously, he was an HDSI (Data Science) Post-doctoral fellow at the University of California, San Diego. Prior to this, he completed my PhD from the Electrical Engg. and Computer Sciences (EECS) department of UC Berkeley, advised by Prof. Kannan Ramchandran and Prof. Aditya Guntuboyina. His research interests are broadly in Theoretical Machine Learning, including Federated Learning and multi-agent Reinforcement/Bandit Learning. In particular, Avishek is interested in theoretically understanding challenges in multi-agent systems, and competition/collaboration across agents. Before coming to Berkeley, Avishek completed his masters degree from Indian Institute of Science (IISc), Bangalore (at the Electrical Communication Engg. Dept) and prior Avishek completed his  bachelors degree from Jadavpur University, in the dept. of Electronics and Telecommunication Engineering.
\end{IEEEbiographynophoto}

\begin{IEEEbiographynophoto}{Abishek Sankararaman}
(Ph.D, UT Austin 2019) is a Senior Applied Scientist at Amazon (AWS) where he conducts research on online learning and anomaly detection. Before AWS, he was a post-doctoral researcher at University of California, Berkeley, hosted by Prof. Venkat Anantharam, where he conducted research on networked learning in multi-armed bandits. Abishek received his PhD  from The University of Texas at Austin, where he was affiliated with the Simons Center for Network Mathematics and advised by Prof. François Baccelli. His PhD dissertation was based on analyzing novel stochastic geometric models for wireless dynamics and spatial random graph clustering and proving several phase-transition results on these models. Prior to this, he completed his undergraduate degree from IIT Madras. 
\end{IEEEbiographynophoto}

\begin{IEEEbiographynophoto}{Kannan Ramchandran}
(Ph.D Columbia University, 1993) is a Professor of Electrical Engineering and Computer Science at UC Berkeley, where he has been since 1999. He was on the faculty at UIUC from 1993 to 1999, and with AT$\&$T Bell Labs from 1984 to 1990.Prof. Ramchandran is a Fellow of the IEEE. He has published extensively in his field, holds over a dozen patents, and has received several awards for his research and teaching including an IEEE Information Theory Society and Communication Society Joint Best Paper award for 2012, an IEEE Communication Society Data Storage Best Paper award in 2010, two Best Paper awards from the IEEE Signal Processing Society in 1993 and 1999, an Okawa Foundation Prize for outstanding research at Berkeley in 2001, and an Outstanding Teaching Award at Berkeley in 2009, and a Hank Magnuski Scholar award at Illinois in 1998.
His research interests are broadly in the area of distributed systems theory and algorithms intersecting the fields of signal processing, communications, coding and information theory, and networking. His current systems focus is on large-scale distributed storage, large-scale collaborative video content delivery, and biological systems, with research challenges including latency, privacy and security, remote synchronization, sparse sampling, and shotgun genome sequencing.
\end{IEEEbiographynophoto}








\end{document}